\documentclass{article} %
\pdfoutput=1
\usepackage{iclr2025_conference,times}

\setlength{\marginparwidth}{3.37cm}
\setlength{\marginparsep}{0.25cm}

\usepackage[utf8]{inputenc} %
\usepackage[T1]{fontenc}    %
\usepackage{hyperref}       %
\usepackage{url}            %
\usepackage{booktabs}       %
\usepackage{amsfonts}       %
\usepackage{nicefrac}       %
\usepackage{microtype}      %
\usepackage{xcolor}         %
\usepackage{wrapfig}

\usepackage{layouts}
\usepackage{nccmath}
\usepackage{amsmath}
\usepackage{amssymb}
\usepackage{amsthm}
\usepackage{bm}
\usepackage{bbm}
\usepackage{tikz}
\usetikzlibrary{bayesnet,arrows,fit,positioning,shapes,calc,decorations.pathmorphing,matrix,shapes.geometric,automata,decorations.text} %
\usepackage{stackengine}
\usepackage{subcaption}
\usepackage{paralist}
\usepackage{tabularx}
\usepackage{scalerel,graphicx,xparse,textcomp}
\usepackage{mathtools}
\usepackage{etoolbox}
\usepackage[capitalize,noabbrev]{cleveref}
\usepackage{enumitem}
\usepackage{thmtools, thm-restate}
\usepackage{multirow}
\usepackage{changepage} %
\usepackage{adjustbox} %
\usepackage{makecell}
\usepackage{lipsum}
\usepackage{cancel}
\usepackage[ruled,vlined]{algorithm2e}
\usepackage{multicol}
\usepackage{pifont}
\usepackage{stmaryrd}
\usepackage{trimclip}
\usepackage{tikzducks}
\usepackage[normalem]{ulem}
\usepackage{titlesec}
\usepackage{setspace}
\usepackage{todonotes}

\usepackage{float}
\usepackage{verbatim}
\usepackage{array}
\usepackage{xspace}

\usepackage{multibib}
\newcites{appendix}{Additional References in Appendix}

\newcommand\indep{\protect\mathpalette{\protect\independenT}{\perp}}
\def\independenT#1#2{\mathrel{\rlap{$#1#2$}\mkern2mu{#1#2}}}
\newcommand{\dsep}{\indep_{\mkern-9.5mu d}}
\newcommand{\msep}{\indep_{\mkern-9.5mu m}}

\makeatletter
\newcommand*\bigcdot{\mathpalette\bigcdot@{.5}}
\newcommand*\bigcdot@[2]{\mathbin{\vcenter{\hbox{\scalebox{#2}{$\m@th#1\bullet$}}}}}
\makeatother

\makeatletter

\makeatother

\makeatletter

\setbox0\hbox{$\xdef\scriptratio{\strip@pt\dimexpr
    \numexpr(\sf@size*65536)/\f@size sp}$}

\newcommand{\scriptveryshortarrow}[1][3pt]{{%
    \hbox{\rule[\scriptratio\dimexpr\fontdimen22\textfont2-.2pt\relax]
               {\scriptratio\dimexpr#1\relax}{\scriptratio\dimexpr.4pt\relax}}%
   \mkern-4mu\hbox{\let\f@size\sf@size\usefont{U}{lasy}{m}{n}\symbol{41}}}}

\newcommand{\dashdash}{\textrm{ --- }}
\def\circarrow{\hbox{$\circ$}\kern-1.5pt\hbox{$\rightarrow$}}
\def\arrowcirc{\hbox{$\leftarrow$}\kern -1pt\hbox{$\circ$}}
\def\circcirc{\hbox{$\circ$}\kern-1.5pt\hbox{\rule[0.5ex]{0.8em}{0.5pt}}\kern -1pt\hbox{$\circ$}}
\def\circdash{\hbox{$\circ$}\kern-1.5pt\hbox{$\textrm{---}$}}

\makeatother

\newdimen\arrowsize 
\pgfarrowsdeclare{arcsq}{arcsq}
{
  \arrowsize=0.2pt
  \advance\arrowsize by .5\pgflinewidth
  \pgfarrowsleftextend{-4\arrowsize-.5\pgflinewidth}
  \pgfarrowsrightextend{.5\pgflinewidth}
}
{
  \arrowsize=1.5pt
  \advance\arrowsize by .5\pgflinewidth
  \pgfsetdash{}{0pt} %
  \pgfsetroundjoin   %
  \pgfsetroundcap    %
  \pgfpathmoveto{\pgfpoint{0\arrowsize}{0\arrowsize}}
  \pgfpatharc{-90}{-140}{4\arrowsize}
  \pgfusepathqstroke
  \pgfpathmoveto{\pgfpointorigin}
  \pgfpatharc{90}{140}{4\arrowsize}
  \pgfusepathqstroke
}

\theoremstyle{definition}
\newtheorem{definition}{Definition}

\newtheorem{example}{Example}
\AtBeginEnvironment{example}{%
  \pushQED{\qed}%
}
\AtEndEnvironment{example}{\popQED\endexample}

\titlespacing{\subsection}{0pt}{-.1\parskip}{-.1\parskip}
\titlespacing{\subsubsection}{0pt}{-.1\parskip}{-.1\parskip}

\newlength{\fixedtextwidth}
\setlength{\fixedtextwidth}{397.48499pt}
\setlength{\abovedisplayskip}{0pt}
\setlength{\belowdisplayskip}{0pt}

\crefformat{section}{\S#2#1#3} %
\crefformat{subsection}{\S#2#1#3}
\crefformat{subsubsection}{\S#2#1#3}

\DeclarePairedDelimiterX\braket[2]{\langle}{\rangle}{#1 \delimsize\vert #2}

\newcommand{\Ecal}{\mathcal{E}}
\newcommand{\Gcal}{\mathcal{G}}
\newcommand{\Hcal}{\mathcal{H}}

\newcommand{\Ical}{\mathcal{I}}
\newcommand{\Jcal}{\mathcal{J}}

\newcommand{\pa}{\operatorname{pa}}
\newcommand{\ch}{\operatorname{ch}}
\newcommand{\anc}{\operatorname{an}}
\newcommand{\des}{\operatorname{de}}

\newcommand{\aff}{\operatorname{aff}}

\definecolor{mplblue}{rgb}{0.12156863, 0.46666667, 0.70588235}
\newcommand{\update}[1]{{#1}}

\definecolor{BlueText}{HTML}{1f77b4}
\definecolor{BlueBackground}{HTML}{ddebfa}
\definecolor{BlueBorder}{HTML}{3275a8}

\definecolor{OrangeText}{HTML}{ff7f0e}
\definecolor{OrangeBackground}{HTML}{ffe6cc}
\definecolor{OrangeBorder}{HTML}{cc660b}

\definecolor{GreenText}{HTML}{2ca02c}
\definecolor{GreenBackground}{HTML}{dff0df}
\definecolor{GreenBorder}{HTML}{218421}

\definecolor{RedText}{HTML}{d62728}
\definecolor{RedBackground}{HTML}{f8dada}
\definecolor{RedBorder}{HTML}{a72020}

\definecolor{PurpleText}{HTML}{9467bd}
\definecolor{PurpleBackground}{HTML}{e8dff3}
\definecolor{PurpleBorder}{HTML}{7b4fa2}

\definecolor{BeigeText}{HTML}{8b4513}
\definecolor{BeigeBackground}{HTML}{fcf5e1} 
\definecolor{BeigeBorder}{HTML}{d2b48c}

\newcolumntype{L}[1]{>{\raggedright\arraybackslash}p{#1}}
\newcolumntype{C}[1]{>{\centering\arraybackslash}p{#1}}
\newcolumntype{R}[1]{>{\raggedleft\arraybackslash}p{#1}}

\title{When Selection Meets Intervention: Additional Complexities in Causal Discovery}

\def\mystrut{\rule{0pt}{1.0\normalbaselineskip}}
\author{
\begin{tabular}{@{}l}
Haoyue Dai$^{1,2}\quad\quad\quad$Ignavier Ng$^1\quad\quad\quad$Jianle Sun$^1\quad\quad\quad$Zeyu Tang$^1$\mystrut \\
Gongxu Luo$^2\quad\quad\quad$Xinshuai Dong$^1\quad\quad\quad$Peter Spirtes$^1\quad\quad\quad$Kun Zhang$^{1,2}$\mystrut \\
\end{tabular}\\
$^1$Carnegie Mellon University\mystrut\quad\quad
$^2$Mohamed bin Zayed University of Artificial Intelligence
}

\usepackage[toc,page,header]{appendix}
\usepackage{minitoc}

\iclrfinalcopy

\begin{document}
\doparttoc %
\faketableofcontents %
\maketitle
\vspace{-0.7em}
\begin{abstract}

\vspace{-0.4em} %
We address the common yet often-overlooked selection bias in interventional studies, where subjects are selectively enrolled into experiments. For instance, participants in a drug trial are usually patients of the relevant disease; A/B tests on mobile applications target existing users only, and gene perturbation studies typically focus on specific cell types, such as cancer cells. Ignoring this bias leads to incorrect causal discovery results. Even when recognized, the existing paradigm for interventional causal discovery still fails to address it. This is because subtle differences in \textit{when} and \textit{where} interventions happen can lead to significantly different statistical patterns. We capture this dynamic by introducing a graphical model that explicitly accounts for both the observed world (where interventions are applied) and the counterfactual world (where selection occurs while interventions have not been applied). We characterize the Markov property of the model, and propose a provably sound algorithm to identify causal relations as well as selection mechanisms up to the equivalence class, from data with soft interventions and unknown targets. Through synthetic and real-world experiments, we demonstrate that our algorithm effectively identifies true causal relations despite the presence of selection bias.\looseness=-1
\vspace{-0.3em}

\end{abstract}

\section{Introduction}
\label{sec:introduction}

\vspace{-0.7em}
Experimentation is often seen as the gold standard for discovering causal relations, but due to its cost, alternative methods have been developed to infer causality from pure observational data~\citep{spirtes2000causation,pearl2009models}. Many real-world scenarios fall between these two extremes, involving passive observations collected from interventions. Interventional causal discovery addresses this, with methods for \textit{hard} interventions~\citep{cooper1999causal,hauser2015jointly}, \textit{soft} interventions~\citep{tian2001causal, eberhardt2007interventions}, and those with unknown targets~\citep{eaton2007exact,squires2020permutation}. 
A detailed review is provided in~\cref{subsec:revisit}, with further related works in~\cref{app:related_work}.

\vspace{-0.05em}
While significant progress has been made in interventional causal discovery, existing methods overlook a critical issue: \textit{selection bias}~\citep{heckman1977sample,heckman1990varieties,winship1992models}. Though ideally, experiments should be randomly assigned in the general population, in practice, subjects are usually \textit{pre-selected}. For example, drug trial participants are typically patients with the relevant disease; A/B tests target only existing users, and gene perturbation studies often focus on specific cell types like cancer cells. Ignoring this bias leads to incorrect statistical inferences. While various methods address selection bias for causal inference~\citep{didelez2010graphical,bareinboim2012controlling,bareinboim2014recovering} and for observational causal discovery~\citep{spirtes1995causal,borboudakis2015bayesian,zhang2016identifiability}, no existing work tackles selection bias in interventional causal discovery.\looseness=-1

\vspace{-0.05em}
Then, one may naturally wonder whether existing well-established paradigms from both interventional causal discovery and observational causal discovery with selection bias can solve the problem. However, as we will illustrate in~\cref{example:clinical_X1_X2_independent,example:chain_X123S}, these methods still fail to characterize data given by intervention under selection, and will thus lead to false causal discovery results. This is because subtle differences in \textit{when} and \textit{where} interventions happen can lead to significantly different statistical patterns, demanding a new problem setup and model. This is exactly what we address in this paper.

\vspace{-0.05em}
\textbf{Contributions:} We introduce a new problem setup for interventional causal discovery with selection bias. We show that existing graphical representation paradigms fail to model data under selection, since the \textit{when} and \textit{where} of interventions have to be explicitly considered (\cref{sec:motivation}). To this end, we propose a new graphical model that captures the dynamics of intervention and selection, characterize its Markov properties, and provide a graphical criterion for Markov equivalence (\cref{sec:model}). We develop a sound algorithm to identify causal relations and selection mechanisms up to the equivalence class, from data with soft interventions and unknown targets (\cref{sec:approach}). We demonstrate the effectiveness of our algorithm using synthetic and real-world datasets on biology and education (\cref{sec:experiments}). %

\vspace{-0.5em}
\section{Motivation}
\label{sec:motivation}
\vspace{-0.2em}
\begin{figure}[t]
\vspace{-2em}
    \centering
    \begin{subfigure}[b]{0.27\textwidth}
        \centering
        \includegraphics[height=8ex]{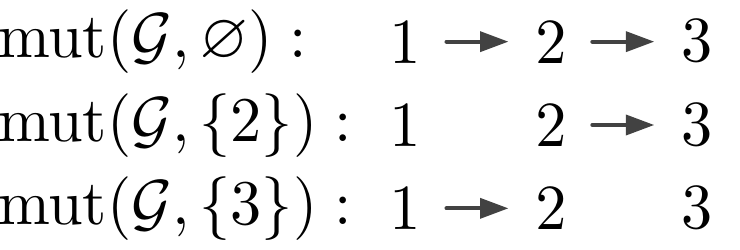}
        \caption{}
        \label{fig:review_1_hard}
    \end{subfigure}
    \hfill
    \begin{subfigure}[b]{0.20\textwidth}
        \centering
        \includegraphics[height=8ex]{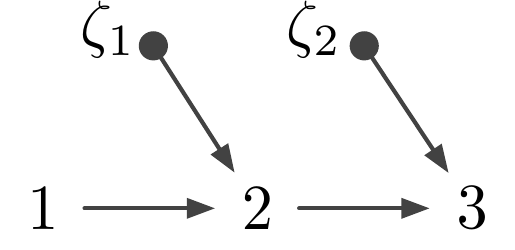}
        \caption{}
        \label{fig:review_2_soft}
    \end{subfigure}
    \hfill
    \begin{subfigure}[b]{0.20\textwidth}
        \centering
        \includegraphics[height=8ex]{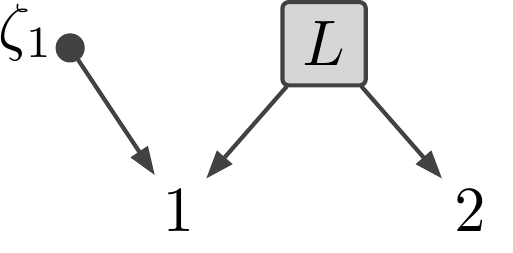}
        \caption{}
        \label{fig:review_3_soft_with_latent}
    \end{subfigure}
    \hfill
    \begin{subfigure}[b]{0.02\textwidth}
        \centering
        \vspace{0.5cm}
        \rule{0.5pt}{12ex} %
    \end{subfigure}
    \hfill
    \begin{subfigure}[b]{0.20\textwidth}
        \centering
        \includegraphics[height=8ex]{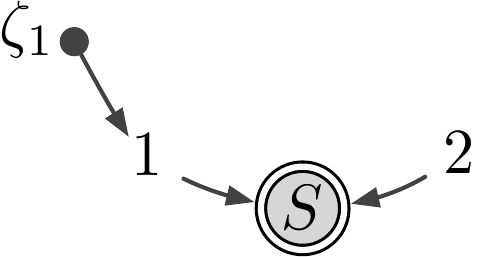}
        \caption{}
        \label{fig:review_4_soft_with_selection}
    \end{subfigure}
    \vspace{-0.5em}
    \caption{Examples of the existing graph representations. (a) and (b) show \textit{mutilated DAGs}~\citep{hauser2012characterization} and the \textit{augmented DAG}~\citep{yang2018characterizing} for $\Gcal = 1\rightarrow 2\rightarrow 3$ with targets $\Ical = \{\varnothing, \{2\}, \{3\}\}$. Solid nodes represent the intervention indicators. (c) is the augmented DAG for $\Gcal = 1\leftarrow L\rightarrow 2$ where $L$, in a square, is latent, and $\Ical = \{\varnothing, \{1\}\}$~\citep{magliacane2016ancestral}. (d) shows a seemingly natural representation for selection bias, $\Gcal = 1\rightarrow S\leftarrow 2$ where $S$, in double circles, is selected, and $\Ical = \{\varnothing, \{1\}\}$. But does (d) truly capture the underlying process? See~\cref{example:clinical_X1_X2_independent}.\looseness=-1}
    \label{fig:review_of_existing_paradigm}
    \vspace{-1.1em}
\end{figure}

\vspace{-0.5em}
In this section, we first revisit the established paradigm graph representations for interventional causal discovery (\cref{subsec:revisit}), then use illustrative examples to demonstrate how directly extending this paradigm fails to address the selection bias issue (\cref{subsec:how_fail}), followed by an analysis of why this occurs (\cref{subsec:why_fail}).

\subsection{Revisiting the Established Paradigm of Interventional Causal Discovery}\label{subsec:revisit}
We start with the problem setup. Let the DAG $\Gcal$ on vertices $[D]\coloneqq \{1,\cdots,D\}$ represent a causal model where vertices correspond to random variables $X=(X_i)_{i=1}^D$. For any subset $A\subset [D]$, let $X_A \coloneqq (X_i)_{i\in A}$ and by convention $X_\varnothing \equiv 0$. Interventional causal discovery aims to learn the structure of $\Gcal$ from data collected under multiple intervention settings, each with an \textit{intervention target} $I\subset [D]$, meaning variables $X_I$ are intervened on. Let $\mathcal{I}=\{I^{(0)}, I^{(1)}, \dots, I^{(K)}\}$ denote the collection of intervention targets, and $\{p^{(0)}, p^{(1)}, \dots, p^{(K)}\}$ the corresponding \textit{interventional distributions} over $X$. We assume throughout $I^{(0)} = \varnothing$, i.e., the pure observational data is available.

For hard interventions, \citet{hauser2012characterization} consider each $p^{(k)}$ as factoring to a \textit{mutilated DAG} over $[D]$, denoted by $\operatorname{mut}(\Gcal, I^{(k)})$, where edges incoming to target $I^{(k)}$ in $\Gcal$ are removed and other edges remain, as in~\cref{fig:review_1_hard}. They show that two DAGs are Markov equivalent under $\Ical$ if and only if $\forall k =0,\cdots,K$, their corresponding mutilated DAGs have the same skeleton and v-structures.\looseness=-1

For soft interventions, however, mutilated DAG representation fails, as interventions may not remove edges, and all settings may factor to a same $\Gcal$. Instead of checking each setting individually, a better approach is then to compare changes and invariances across settings: intervening on a cause changes the marginal $p(\text{effect})$, but the conditional $p(\text{effect}|\text{cause})$ remains invariant. Conversely, intervening on an effect leaves $p(\text{cause})$ unchanged, while $p(\text{cause}|\text{effect})$ changes~\citep{hoover1990logic,tian2001causal}. Such invariance is exploited in the \textit{invariance causal inference framework}~\citep{rothenhausler2015backshift,meinshausen2016methods,ghassami2017learning}, typically as parametric regression analysis.\looseness=-1

Invariance can also be understood nonparametrically. To model ``the action of changing targets'', \citet{newey2003instrumental,korb2004varieties} introduce the \textit{augmented DAG}, denoted by $\operatorname{aug}(\Gcal, \Ical)$, which, as shown in~\cref{fig:review_2_soft}, extends the original $\Gcal$ by adding exogenous vertices $\zeta=\{\zeta_k\}_{k=1}^K$ as \textit{intervention indicators}, each pointing to its target $I^{(k)}$. Whether the $k$-th intervention alters a conditional density $p(X_A|X_C)$ is then nonparametrically represented by the conditional independence (CI) relation $\zeta_k \indep X_A | X_C$ in the pooled data of $p^{(0)}$ and $p^{(k)}$, and graphically by the d-separation $\zeta_k \dsep A | C, \zeta_{[K]\backslash \{k\}}$ in $\operatorname{aug}(\Gcal, \Ical)$. \citet{yang2018characterizing} show that two DAGs are Markov equivalent under soft interventions $\Ical$ if and only if their augmented DAGs have the same skeleton and v-structures.\looseness=-1

Such invariance analysis, together with the augmented DAG representation, offers a unified way to understand interventional data or, more generally, data from multiple domains with changing mechanisms. For example, unknown target is no longer a challenge; $\Ical$ (i.e., where changes occur) can be learned by discovering the adjacencies between $\zeta$ and $X$~\citep{zhang2015discovery,huang2020causal,mooij2020joint,squires2020permutation}. Hidden confounders can also be incorporated by introducing latent variables into augmented DAGs, as shown in~\cref{fig:review_3_soft_with_latent}. Algorithms like FCI~\citep{spirtes2000causation,zhang2008completeness} are then used to for discovery~\citep{magliacane2016ancestral,kocaoglu2019characterization}. Despite different specifics, these problems share a same core concept under the augmented graph paradigm.\looseness=-1

Now finally, let us consider the issue of selection bias. Although, to our knowledge, no prior work has addressed it, a seemingly natural solution is to follow the paradigm and introduce selection variables into augmented DAGs, as shown in~\cref{fig:review_4_soft_with_selection}. This seems intuitive, especially given~\cref{fig:review_3_soft_with_latent}, as the FCI algorithm can indeed handle both hidden confounders and selection bias. However, does this augmented graph truly capture the underlying dynamics? \uline{The answer is no -- or at least, it is not straightforward.} Let us examine the following~\cref{example:clinical_X1_X2_independent,example:chain_X123S} that illustrate these complexities.\looseness=-1

\subsection{How the Augmented DAG Paradigm Fails in the Presence of Selection}\label{subsec:how_fail}

We start with an example of a clinical study where only patients with a specific disease are involved.
\vspace{-0.4em}
\begin{wrapfigure}[12]{r}{23em}
\centering
\vspace{-0.5em}
\includegraphics[width=23em]{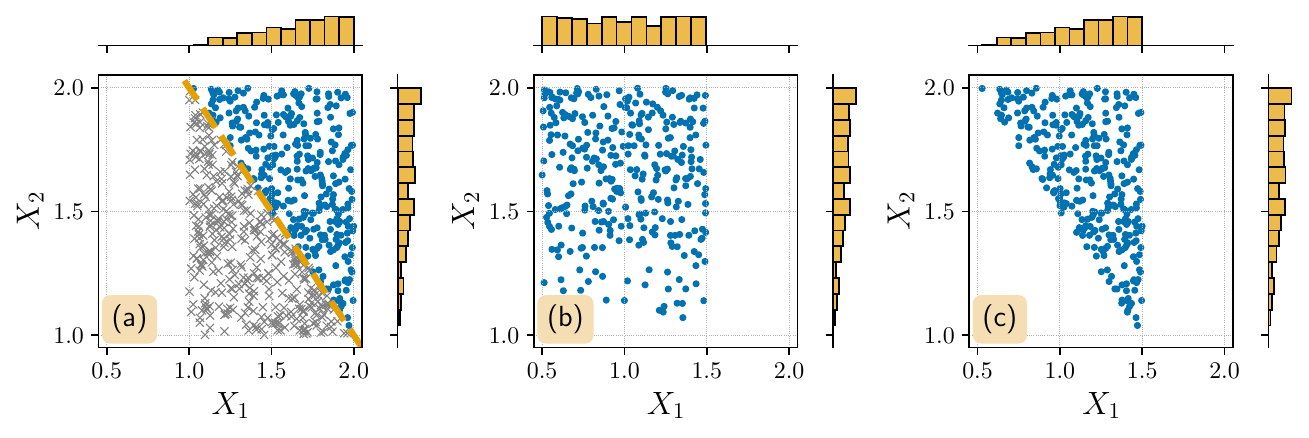}
\vspace{-2em}
\caption{(a) Scatterplot of $X_1;X_2$ in general population (both `\textcolor{gray}{$\times$}' and `\textcolor{mplblue}{$\bullet$}'), with only `\textcolor{mplblue}{$\bullet$}' individuals involved into study as $p^{(0)}$. (b) and (c) show $p^{(1)}$ after two distinct but both effective interventions on $X_1$, applied to `\textcolor{mplblue}{$\bullet$}' from (a).\looseness=-1}
\label{fig:scatter_plot_clinical_X12_independent}
\end{wrapfigure}
\vspace{0.1em}
\begin{example}\label{example:clinical_X1_X2_independent}
Consider a clinical study focusing on two variables: $X_1$ (Blood Glucose Levels) and $X_2$ (Hearing Ability). Unknown to the doctor, these variables are independent with no causal relations or confounders. This independence is shown in the scatterplot (both `\textcolor{gray}{$\times$}' and `\textcolor{mplblue}{$\bullet$}') in \hyperref[fig:scatter_plot_clinical_X12_independent]{(a)} of~\cref{fig:scatter_plot_clinical_X12_independent} (hereafter omitted), where $X_1$ and $X_2$ are independently drawn from $\cup[1,2]$. However, the study somehow only includes Alzheimer's Disease (AD) patients. Since both variables contribute to AD, only individuals with a high combined value of the two (say, $X_1+X_2\geq 3$) were included in the study, represented by the `\textcolor{mplblue}{$\bullet$}'s in \hyperref[fig:scatter_plot_clinical_X12_independent]{(a)}.\looseness=-1  %

In the trial, patients are randomly assigned to either a placebo or a treatment controlling Blood Glucose Levels, i.e., $\Ical = \{\varnothing, \{1\}\}$. The control group $p^{(0)}$ follows the `\textcolor{mplblue}{$\bullet$}'s distribution in \hyperref[fig:scatter_plot_clinical_X12_independent]{(a)}. To model the intervention, we can use a hard stochastic one, randomly assigning $X_1$ a lower value, shifting each `\textcolor{mplblue}{$\bullet$}' $(X_1, X_2)$ in \hyperref[fig:scatter_plot_clinical_X12_independent]{(a)}, i.e., individuals before their treatments, to $(X_1', X_2)$ in \hyperref[fig:scatter_plot_clinical_X12_independent]{(b)}, with $X_1'\sim\cup[0.5,1.5]$ and importantly, $X_2$ remains unchanged, as indeed $X_1$ does not influence $X_2$. Alternatively, a `soft' intervention can be modeled, reducing $X_1$ relative to its current value, e.g., to $(X_1 - 0.5, X_2)$, resulting in scatterplots \hyperref[fig:scatter_plot_clinical_X12_independent]{(c)}. The interventional distribution $p^{(1)}$ follows \hyperref[fig:scatter_plot_clinical_X12_independent]{(b)} or \hyperref[fig:scatter_plot_clinical_X12_independent]{(c)}.

Then, can we directly apply the augmented graph in~\cref{fig:review_4_soft_with_selection}, since it is exactly independent $X_1$ and $X_2$ selected to the trial, and $X_1$ is intervened on? According to it, two statements should hold:
\begin{itemize}[noitemsep,topsep=0pt,left=10pt]
    \item[1.] The marginal $p(X_2)$ should change by the intervention, since $\zeta_1 \not \dsep 2 | S$;
    \item[2.] The conditional $p(X_2|X_1)$ should remain invariant, since $\zeta_1 \dsep 2 | 1, S$;
\end{itemize}
\uline{However, our observations contradict both.} Comparing \hyperref[fig:scatter_plot_clinical_X12_independent]{(a)} with \hyperref[fig:scatter_plot_clinical_X12_independent]{(b)} or \hyperref[fig:scatter_plot_clinical_X12_independent]{(c)}, $p(X_2)$ remains unchanged, as seen in the marginal density plots, while $p(X_2|X_1)$ changes. For example, at $X_1 = 1.25$, $X_2$ in \hyperref[fig:scatter_plot_clinical_X12_independent]{(a)} follows $\cup[1.75, 2]$, in \hyperref[fig:scatter_plot_clinical_X12_independent]{(b)} is non-uniform, more concentrated at higher values, and in \hyperref[fig:scatter_plot_clinical_X12_independent]{(c)} follows $\cup[1.5, 2]$. This discrepancy suggests that the graph in~\cref{fig:review_4_soft_with_selection} does not fit the data here. One augmented graph however indeed fits is $\zeta_1 \rightarrow 1 \leftarrow 2$, meaning that if directly applying existing standard algorithms, the doctor would falsely conclude that Hearing Ability causes Blood Glucose Levels.\looseness=-1
\end{example}

We show that simply augmenting the DAG with selection variables fails to model interventional data under selection. In~\cref{example:clinical_X1_X2_independent} there is at least another (incorrect) augmented DAG that fits the data. In contrast, in~\cref{example:chain_X123S}, \uline{no augmented DAG fits the data}, suggesting a failure of this paradigm.\looseness=-1

\begin{example}\label{example:chain_X123S}
    Let $X_1$, $X_2$, and $X_3$ denote the number of bird-nesting shrubs, predatory birds, and pests, respectively. An ecologist is studying pest issues in several fields across the state. After collecting data, denoted $p^{(0)}$, the ecologist finds that $X_1$ and $X_3$ are conditionally independent given $X_2$. This aligns with the true causal relations $1\rightarrow 2 \rightarrow 3$, where shrubs attract birds, birds reduce pests, but shrubs do not directly affect pests. To control pests, the ecologist plants more shrubs (intervening on $X_1$) in these fields and, after some time collect data, denoted $p^{(1)}$. As expected, there is an increase in $X_2$ and decrease in $X_3$. However, further analysis reveals a surprise: conditioning on $X_2$, $X_1$ and $X_3$ now appear dependent in $p^{(1)}$, with more shrubs associated with less pests. This confuses the ecologist -- ``did I find a new type of shrubs that directly reduce pests, i.e., added a direct edge $1\rightarrow 3$?''\looseness=-1

    ``Selection may be at play!'', suggested a student, noting the study focused only on fields with pest issues, i.e., high $X_3$ values. The initial $p^{(0)}$ follows a DAG $1\rightarrow 2\rightarrow 3 \rightarrow S$ where d-separation $1\dsep 3|2, S$ indeed holds, consistent with the observed CI in $p^{(0)}$. But after intervening on $X_1$, shouldn't $p^{(1)}$ still be Markovian to this DAG? Why did the CI disappear? \uline{No augmented graph can explain this anomaly}, as it suggests that each $p^{(k)}$, conditioned on root variables $\zeta$, should still be Markovian to the original DAG. This puzzle is unsolved until explained in~\cref{example:chain_X123S_independence_analysis}: there is no specialty with the shrubs. \update{A simulation on this example, similar to~\cref{fig:scatter_plot_clinical_X12_independent} above, is provided in~\cref{app:simulate_example_chain_X123S}.}\looseness=-1  %
\end{example}

\subsection{Why the Augmented DAG Paradigm Fails when Selection Presents}\label{subsec:why_fail}
The core reason why augmented DAG paradigm fails, as illustrated in~\cref{subsec:how_fail}, lies in the timing and context -- \textit{when} and \textit{where} interventions are applied. In real-world scenarios, \update{\uline{interventions are usually applied \textit{after} selection, as experiments are typically designed for specific scopes of study}.} When selection is interpreted as survival, this means an individual must first survive itself, \textit{before} undergoing any observations or interventions (e.g., `\textcolor{mplblue}{$\bullet$}'s in Figure~\hyperref[fig:scatter_plot_clinical_X12_independent]{2a}). We consider this setting in our work.\looseness=-1

Simply extending the augmented graph with selection variables (as in~\cref{fig:review_4_soft_with_selection}), however, models a different scenario: one where interventions are applied \textit{from scratch}. There, individuals are not selected when they receive interventions (and non-interventions), and then undergo the \textit{same selection mechanisms} afterwards, until observed. This scenario is much rarer, with examples like social programs applied to newborns and observed later in life, or medical trials on newly generated stem cells.\looseness=-1

This fundamental distinction in \textit{when} and \textit{where} interventions occur is often overlooked. This is because when selection bias is absent, and even with latent variables, these two forms produce the same interventional data at the distribution level. However, now with selection it is different; the selective inclusion of individuals and their pre-intervention world must be carefully modeled.

\vspace{-0.5em}
\section{\update{Causal Model Involving Selection and Intervention}}
\vspace{-0.4em}
\label{sec:model}
\vspace{-0.5em}
In~\cref{sec:motivation} we demonstrated that the \textit{when} and \textit{where} of interventions matter. Building on this motivation, in this section, we define the causal graph on how interventions are applied under selection (\cref{subsec:model_define}), characterize the Markov properties (\cref{subsec:model_Markov}), and provide the criteria for determining whether two DAGs, possibly under selection, are Markov equivalent given possibly different interventions (\cref{subsec:model_equivalence}).\looseness=-1

We follow the notation in~\cref{subsec:revisit}, with the key difference being that the DAG $\Gcal$ is now over vertices $[D] \cup S$, where $S=(S_i)_{i=1}^T$ represents unobserved selection variables conditioned upon their specific values. W.l.o.g. let each $S_i$ be binary, has no children, and has parents only from $[D]$. Let $\mathbf{1}$ be the vector of all $1$s. A sample is observed if and only if it satisfies all selection criteria, denoted by $S=\mathbf{1}$.

In the DAG $\Gcal$, for any vertices $i,j\in [D]\cup S$, $i$ is a \textit{parent} of $j$ and $j$ is a \textit{child} of $i$ if $i\rightarrow j \in \Gcal$, denoted by $i\in \pa_\Gcal(j)$ and $j\in \ch_\Gcal(i)$; $i$ is an \textit{ancestor} of $j$ and $j$ is a \textit{descendant} of $i$ if $i=j$ or there is a directed path $i \rightarrow \cdots \rightarrow j$ in $\Gcal$, denoted by $i\in \anc_G(j)$ and $j\in \des_G(i)$. These notations extend to sets: e.g., for any vertex set $C\subset [D]\cup S$, $\pa_\Gcal(C) \coloneqq \bigcup_{i \in C} \pa_\Gcal(i)$. For any vertex $i\in [D]$, we say $i$ is \textit{directly selected} if $i\in \pa_\Gcal(S)$, and \textit{ancestrally selected} if $i\in \anc_\Gcal(S)$. \looseness=-1

\subsection{Causal Graphical Model for Interventions Under Selection}\label{subsec:model_define}

Building on the principle that enrolled individuals are already selected before interventions,  we introduce the following graphical model, \update{with examples and explanations given afterwards.}

\begin{definition}[\textbf{\update{Interventional twin graph}}]\label{def:interventional_twin_network}
    For a DAG $\Gcal$ over $[D] \cup S$ and a intervention target $I \subset [D]$, the \textit{interventional twin graph} $\Gcal^{(I)}$ is a DAG with vertices $\{\zeta\} \cup X \cup X^*_{\aff} \cup \Ecal_{\aff} \cup S^* $, where\footnote{The word ``twin'' is to echo the \textit{twin network} from~\citep{balke1994probabilistic}. See discussions in~\cref{app:related_work}.}:\looseness=-1
    \begin{itemize}[noitemsep,topsep=0pt,left=10pt]
        \item $\zeta$ is an exogenous binary indicator for whether a sample is intervened ($\zeta = 1$) or not ($\zeta = 0$);\looseness=-1
        \item $X = \{X_i\}_{i=1}^D$ are variables in the observed \textit{reality world}, pure observational or interventional;\looseness=-1
        \item $X^*_{\aff} = \{X_i^*: i \in \des_\Gcal(I)\}_{i=1}^D$ are variables in the unobserved \textit{counterfactual basal world}\footnote{\vspace{-1em}The word ``basal'' is borrowed from biology, referring to a natural state of cells prior to any perturbations.}, representing the variable values \textit{before} the intervention. As indicated in its name, only variables \textit{\underline{aff}ected} by the intervention, i.e., those in $\des_\Gcal(I)$, are split into these additional vertices; unaffected variables retain identical values in both worlds and can be represented solely by $X$;
        \item $\Ecal_{\aff} = \{\epsilon_i: i \in \des_\Gcal(I)\}_{i=1}^D$ are common exogenous noise terms shared by the two worlds; %
        \item $S^* = \{S_i^*\}_{i=1}^T$ represent the selection status before the intervention in the counterfactual world.
    \end{itemize}
    $\Gcal^{(I)}$ consists of the following four types of direct edges:
    \begin{itemize}[noitemsep,topsep=0pt,left=10pt]
        \item Direct causal effect edges in both worlds: for each $i \rightarrow j \in \Gcal$ with $i,j\in [D]$, add $X_i \rightarrow X_j$ to $\Gcal^{(I)}$. Additionally, if $i\in \des_\Gcal(I)$, add $X_i^* \rightarrow X_j^*$; otherwise, if $j \in \des_\Gcal(I^{(k)})$, add $X_i \rightarrow X_j^*$;
        \item Selection edges in the counterfactual basal world: for each $i \rightarrow S_j \in \Gcal$ with $i\in [D], \ j \in [T]$: if $i\in \des_\Gcal(I)$, add $X_i^* \rightarrow S_j^*$ to $\Gcal^{(I)}$; otherwise, add $X_i \rightarrow S_j^*$;
        \item Edges representing common exogenous influences: $\{\epsilon_i \rightarrow X_i, \epsilon_i \rightarrow X_i^*\}_{i\in [D] \cap \des_\Gcal(I)}$;
        \item Edges representing mechanism changes due to the intervention: $\{\zeta \rightarrow X_i\}_{i\in I}$.
    \end{itemize}
\end{definition}

Illustrative examples for interventional twin graphs (\cref{def:interventional_twin_network}) are shown in~\cref{fig:model_define_GH_examples}. \update{In what follows, we explain several key structural and statistical insights of this causal graphical model.}

\textbf{\update{What is modeled by the interventional twin graph?}} In $\Gcal^{(I)}$, only $X$ and $\zeta$ are observed, representing pure observational data, $p(X|\zeta = 0, S^* = \mathbf{1})$, and interventional data, $p(X|\zeta = 1, S^* = \mathbf{1})$. Crucially, $S^* = \mathbf{1}$ is conditioned on, meaning all individuals, observed or intervened, were selected at the outset. The key difference from the augmented graph (\cref{fig:review_4_soft_with_selection}) is that, here $\Gcal^{(I)}$ explicitly models each individual's unobserved pre-intervention values: non-affected variables retain their values\footnote{\vspace{-1.6em}For reasons why not to also split unaffected variables across both worlds, see~\cref{app:why_not_split_every}.} as $X$, while affected variables, whose values change, are modeled by extra vertices $X^*_{\aff}$. The pre- and post-intervention worlds share $\Ecal$, common external influences like individual-specific traits. Specifically, selection is applied in the pre-intervention world, while observation is made post-intervention.\looseness=-1 %

\textbf{\update{According to the graph, what is changed by the intervention?}} Individuals in pure observational and interventional data are not matched, which reflects common interventional studies (e.g., RNA sequencing is destructive, preventing measurements to a same cell both before and after a gene knockout). Instead, the two datasets are related at the distribution level: directed edges from $\zeta$ to $X_I$ indicate changes in generating mechanisms for targeted variables. For affected but not directly targeted variables $i \in \des_\Gcal(I) \backslash I$, as suggested by the graph, their generating functions $p(X_i | X_{\pa_\Gcal(i)}, \epsilon_i)$ remain invariant for $\zeta = 0, 1$. However, unlike in augmented graph paradigm, this invariance no longer extends to observed conditional distributions $p(X_i | X_{\pa_\Gcal(i)})$. For instance, in~\cref{fig:model_define_G_1}, intervening on $X_1$ alters not only $p(X_1)$ but also $p(X_2|X_1)$ and $p(X_3|X_2)$ (see~\cref{example:chain_X123S_independence_analysis}).

\begin{figure}[t]
\vspace{-2em}
    \centering
    \begin{subfigure}[b]{0.0736 \fixedtextwidth}
        \centering
        \includegraphics[height=0.1334 \fixedtextwidth]{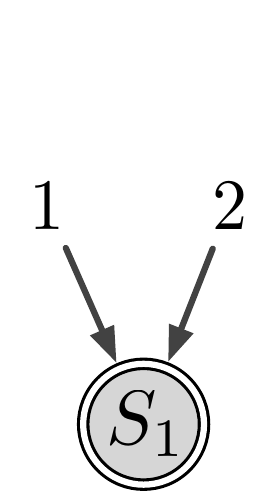}
        \caption{$\Gcal$}
        \label{fig:model_define_G}
    \end{subfigure}
    \hspace{0.02 \fixedtextwidth}
    \begin{subfigure}[b]{0.1794 \fixedtextwidth}
        \centering
        \includegraphics[height=0.1323 \fixedtextwidth]{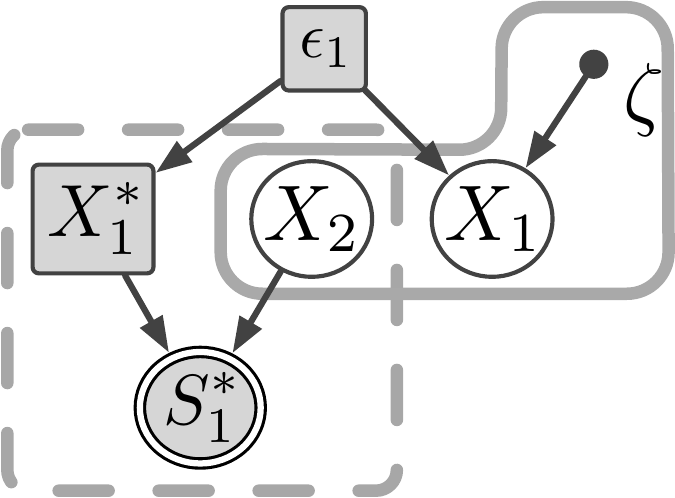}
        \caption{$\Gcal^{(\{1\})}$}
        \label{fig:model_define_G_1}
    \end{subfigure}
    \begin{subfigure}[b]{0.09\textwidth}
        \centering
        \rule{0.5pt}{16.5ex} %
    \end{subfigure}
    \begin{subfigure}[b]{0.05 \fixedtextwidth}
        \centering
        \includegraphics[height=0.1506 \fixedtextwidth]{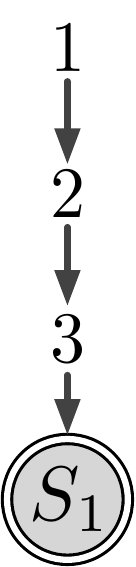}
        \caption{$\Hcal$}
        \label{fig:model_define_H}
    \end{subfigure}
    \hspace{0.06 \fixedtextwidth}
    \begin{subfigure}[b]{0.2176 \fixedtextwidth}
        \centering
        \includegraphics[height=0.1506 \fixedtextwidth]{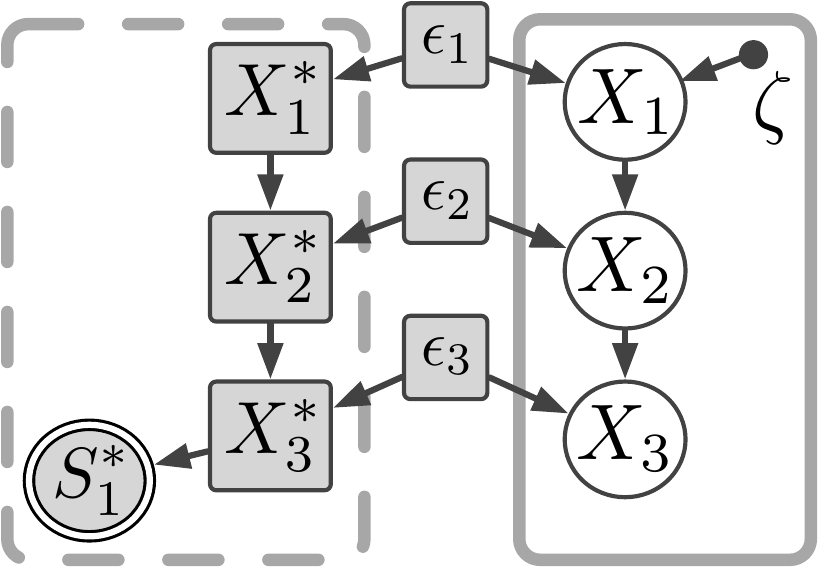}
        \caption{$\Hcal^{(\{1\})}$}
        \label{fig:model_define_H_1}
    \end{subfigure}
    \hspace{0.07 \fixedtextwidth}
    \begin{subfigure}[b]{0.1588 \fixedtextwidth}
        \centering
        \includegraphics[height=0.1583 \fixedtextwidth]{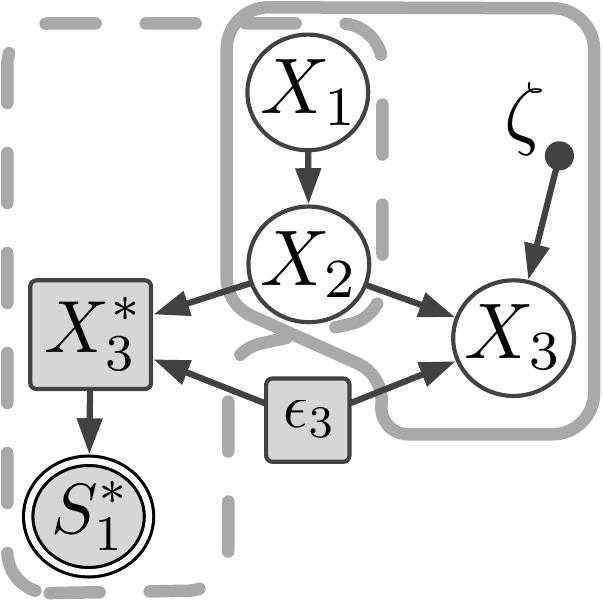}
        \caption{$\Hcal^{(\{3\})}$}
        \label{fig:model_define_H_3}
    \end{subfigure}
    \vspace{-0.5em}
    \caption{\update{Examples of interventional twin graphs (\cref{def:interventional_twin_network}).} (a) and (c) are two DAGs for clinical and pest control~\cref{example:clinical_X1_X2_independent,example:chain_X123S}, respectively; (b) and (d), (e) are their corresponding interventional twin graphs under different targets. The white $X$ nodes and solid $\zeta$ node are observed, forming the reality world (enclosed by solid frames), where observations or interventions are conducted. The grey nodes are unobserved, of which squares ($X^*_{\aff}$ and $\Ecal_{\aff}$) are latent variables and double circles ($S^*$) are selection variables. The counterfactual basal world is enclosed by dashed frames.}
    \label{fig:model_define_GH_examples}
    \vspace{-1em}
\end{figure}

\subsection{The Markov Properties}\label{subsec:model_Markov}

Our ultimate goal is to discover true causal relations and selection mechanisms from data. To this end, we must understand the CI implications in the data. Hence, in what follows, we define the Markov properties, showing how the interventional twin graph model serves to identify these CI implications.

To start, let us revisit~\cref{example:clinical_X1_X2_independent} (clinical study) with the defined interventional twin graph $\Gcal^{(\{1\})}$, as shown in~\cref{fig:model_define_G_1}. It becomes clear that there is $\zeta \dsep X_2 | S^*$ and $\zeta \not\dsep X_2 | X_1, S^*$, consistent with the invariant $p(X_2)$ and the changed $p(X_2|X_1)$, resolving the earlier discrepancy given by~\cref{fig:review_4_soft_with_selection}.

For discovery from data, two types of statistical information can help: 1) conditional (in)dependencies among variables within each interventional distribution, and 2) the (in)variances of conditional distributions across different interventions. Below, we formally define these relations implied by the model.\looseness=-1

\begin{restatable}[\textbf{\update{CI and invariance implications}}]{theorem}{THMMARKOVPROPERTYOFTWINGRAPHS}\label{thm:i_markov_property}
    For interventional distributions $\{p^{(k)}(X)\}_{k\in \{0\} \cup [K]}$ generated from DAG $\Gcal$ with targets $\{I^{(k)}\}_{k\in \{0\} \cup [K]}$, let $\{ \Gcal^{(I^{(k)})} \}_{k\in \{0\} \cup [K]}$ be the corresponding interventional twin graphs. For any disjoint $A,B,C \subset [D]$, the following two statements hold:

    \begin{itemize}[noitemsep,topsep=0pt,left=10pt]
    \item[1.] For any $k\in \{0\} \cup [K]$, if $X_A \dsep X_B | X_C, S^*, \zeta$ holds in $\Gcal^{(I^{(k)})}$, then $X_A \indep X_B | X_C$ in $p^{(k)}$.\looseness=-1
    \item[2.] For any $k\in [K]$, if $\zeta \dsep X_A | X_C,  S^*$ holds in $\Gcal^{(I^{(k)})}$, then $p^{(k)}(X_A | X_C) = p^{(0)}(X_A | X_C)$.\looseness=-1
    \end{itemize}
\end{restatable}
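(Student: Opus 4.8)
The plan is to reduce both statements to the classical directed global Markov property (soundness of d-separation) applied to the full joint distribution over \emph{all} vertices of the interventional twin graph $\Gcal^{(I^{(k)})}$, and then to specialize by conditioning on the fixed values $S^* = \mathbf{1}$ and $\zeta \in \{0,1\}$. The Markov step is standard once the joint is in place, so the real content is constructing that joint and verifying it reproduces the intended observational and interventional laws.

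First I would make precise the joint distribution $P$ that $\Gcal^{(I^{(k)})}$ carries. Starting from a structural causal model for $\Gcal$ --- structural equations $X_i = f_i(X_{\pa_\Gcal(i)}, \epsilon_i)$ with mutually independent noises, deterministic selection $S_j = g_j(X_{\pa_\Gcal(S_j)})$, and soft-intervention mechanisms that replace $f_i$ for $i \in I^{(k)}$ when $\zeta = 1$ --- I would define a single joint $P$ over $\{\zeta\}\cup X \cup X^*_{\aff} \cup \Ecal_{\aff} \cup S^*$ in which each reality-world variable follows its reality mechanism, each basal counterfactual $X_i^*$ (for $i \in \des_\Gcal(I^{(k)})$) follows the pre-intervention mechanism while sharing the same noise $\epsilon_i$, and $S^*$ is evaluated in the basal world. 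Since each vertex is then a function of its $\Gcal^{(I^{(k)})}$-parents and an exogenous noise, $P$ factorizes according to $\Gcal^{(I^{(k)})}$, and acyclicity is inherited from $\Gcal$. The key bookkeeping is to confirm that this factorization uses exactly the four edge types of \cref{def:interventional_twin_network}; in particular, for an unaffected parent feeding an affected child one may legitimately use $X_i$ rather than $X_i^*$, which is justified because variables outside $\des_\Gcal(I^{(k)})$ take identical values in both worlds.

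Next I would verify the correspondence $p^{(0)}(X) = P(X \mid \zeta = 0, S^* = \mathbf{1})$ and $p^{(k)}(X) = P(X \mid \zeta = 1, S^* = \mathbf{1})$, both evaluated within the \emph{same} graph $\Gcal^{(I^{(k)})}$. This encodes the modeling assumption that selection acts in the pre-intervention world: conditioning on $S^* = \mathbf{1}$ restricts to individuals selected at the outset, while $\zeta$ toggles whether the recorded reality-world values are observational or interventional. At $\zeta = 0$ the shared mechanisms and shared noises force the reality and basal worlds to coincide, so the observational law is recovered; at $\zeta = 1$ the reality copy carries the intervention while selection is still imposed on the basal copy. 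With this correspondence fixed, both claims follow. For Statement 1, global Markov turns $X_A \dsep X_B \mid X_C, S^*, \zeta$ into $X_A \indep X_B \mid X_C, S^*, \zeta$ under $P$; fixing $S^* = \mathbf{1}$ and $\zeta = z_k$ (with $z_0 = 0$ and $z_k = 1$ for $k \geq 1$) yields $X_A \indep X_B \mid X_C$ in $p^{(k)}$. For Statement 2, global Markov gives $\zeta \indep X_A \mid X_C, S^*$, i.e.\ $P(X_A \mid X_C, S^*, \zeta)$ is free of $\zeta$; evaluating at $S^* = \mathbf{1}$ equates $P(X_A \mid X_C, S^* = \mathbf{1}, \zeta = 1) = p^{(k)}(X_A \mid X_C)$ with $P(X_A \mid X_C, S^* = \mathbf{1}, \zeta = 0) = p^{(0)}(X_A \mid X_C)$.

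I expect the main obstacle to be the construction-and-correspondence step rather than the Markov deductions. The delicate points are (i) defining the shared-noise counterfactual variables so that the resulting object is genuinely a DAG-factorized distribution whose parent sets match \cref{def:interventional_twin_network} edge-for-edge, and (ii) proving the two conditional-correspondence identities, which require carefully aligning ``selection before intervention'' with conditioning on $S^* = \mathbf{1}$, checking that the worlds coincide at $\zeta = 0$, and confirming that unaffected variables $X_{[D]\setminus \des_\Gcal(I^{(k)})}$ are common to both worlds so that their reality copies legitimately serve as parents of basal children. Once these are settled, specializing d-separation by plugging in $S^* = \mathbf{1}$ and the relevant value of $\zeta$ closes both parts.
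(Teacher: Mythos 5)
Your proposal is correct and follows essentially the same route as the paper: the paper omits a proof of this theorem entirely, declaring it an immediate consequence of the global Markov property for DAGs with latent and selection variables (the standard foundation of the MAG framework of~\citet{richardson2002ancestral}), which is exactly the deduction you carry out --- d-separation in the twin DAG implies conditional independence in the joint, then specialize to $S^*=\mathbf{1}$ and the relevant value of $\zeta$. The construction-and-correspondence step you flag as the main obstacle is, in the paper, essentially definitional: the model itself posits $p^{(0)}(X)=P(X\mid \zeta=0, S^*=\mathbf{1})$ and $p^{(k)}(X)=P(X\mid \zeta=1, S^*=\mathbf{1})$ as the meaning of the interventional twin graph (see the discussion following \cref{def:interventional_twin_network}), so your explicit SCM construction simply spells out what the paper takes as given.
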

\cref{thm:i_markov_property} shows that both types of statistical information are implied by the graphical conditions, namely d-separations among $X \cup \{\zeta\} | S^*$ in interventional twin graphs. The two conditions characterize the CIs in each intervention, and the conditional invariances across interventions, respectively.\looseness=-1

We now show how selection and intervention specifically lead to the first type of information, CIs in each distribution. First, selection is known to introduce spurious dependencies in pure observations:

\begin{restatable}[\textbf{\update{Additional dependencies induced by selections}}]{lemma}{LEMADDITIONALDEPENDENCIESINPO}\label{lem:additional_dependencies_in_p0}
    For any DAG $\Gcal$ on $[D] \cup S$ and disjoint $A,B,C \subset [D]$, if $A \dsep B | C, S$ holds, then $A \dsep B | C$ also holds. The reverse is not necessarily true.\looseness=-1
\end{restatable}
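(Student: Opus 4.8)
The plan is to prove the forward direction by contraposition and to settle the reverse direction by a single collider witness. The entire argument hinges on the structural constraint imposed on $\Gcal$ earlier in the excerpt: each $S_i \in S$ is a sink, with no children and parents only from $[D]$. The immediate consequence I would extract first is that on \emph{any} path $\pi$ of $\Gcal$, a vertex $s \in S$ can appear only as an interior collider $\cdot \to s \leftarrow \cdot$, and never as a non-collider (chain or fork) vertex, since an interior non-collider must have at least one outgoing edge along the path, which a childless vertex cannot supply. Intuitively this says that conditioning on sink variables can only \emph{open} paths (by activating colliders), never \emph{close} them, so d-connection is preserved under enlarging the conditioning set by $S$.

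Concretely, suppose $A \not\dsep B \mid C$, so there is a path $\pi$ from some $a\in A$ to some $b\in B$ that is active given $C$; I would then show the \emph{same} path $\pi$ is active given $C \cup S$. Recall $\pi$ is active given a set $Z$ iff (i) no non-collider of $\pi$ lies in $Z$, and (ii) every collider of $\pi$ has itself or one of its descendants in $Z$. For (i): the non-colliders of $\pi$ are determined by $\pi$ alone and, by the sink property above, all lie in $[D]$, hence are disjoint from $S$; since none lies in $C$ and none lies in $S$, none lies in $C\cup S$, so (i) is preserved verbatim. For (ii): passing from $C$ to $C\cup S$ only enlarges the conditioning set, so ``itself or a descendant is conditioned on'' remains satisfied for every collider, and (ii) is preserved. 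Therefore $\pi$ is active given $C\cup S$, yielding $A \not\dsep B \mid C, S$, which is exactly the contrapositive of the claim.

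For the reverse direction I would exhibit the canonical witness $\Gcal = 1 \to S_1 \leftarrow 2$ (the selection collider of \cref{fig:review_4_soft_with_selection}) with $A=\{1\}$, $B=\{2\}$, $C=\varnothing$. The unique $1$–$2$ path is blocked at the collider $S_1$ when nothing is conditioned, so $1 \dsep 2 \mid \varnothing$; but conditioning on $S_1$ opens that collider, giving $1 \not\dsep 2 \mid S_1$. Hence $A \dsep B \mid C$ does not imply $A \dsep B \mid C, S$, establishing that the converse fails.

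The only genuine care-point, and the step I would write out most carefully, is the lemma-internal claim that vertices of $S$ can serve only as colliders on any path — this is precisely where the ``no children'' assumption is used, and it is what rules out the one way conditioning could \emph{block} rather than open a path. Once that is nailed down, the rest is routine d-separation bookkeeping (monotonicity of path activity under enlarging the conditioning set when no new non-collider is introduced), and I do not anticipate any substantive obstacle.
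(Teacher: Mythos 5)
Your proof is correct, but it takes a genuinely different route from the paper's. The paper never writes out an argument for this lemma: in its proofs appendix it groups \cref{lem:additional_dependencies_in_p0} with \cref{thm:i_markov_property}, \cref{lem:even_more_dependencies_in_pk}, \cref{lem:induced_in_p0}, and \cref{lem:orient_mag} as ``immediate consequences of the definition of MAGs'' in the sense of \citet{richardson2002ancestral} and \citet{zhang2008completeness}, and omits the proofs---the fact that conditioning on childless selection vertices can only open paths, never block them, is folklore absorbed into the ancestral-graph machinery. You instead give a self-contained, first-principles argument: the contrapositive rests on the observation that a sink $s \in S$ can occur on a path only as a collider, so enlarging the conditioning set from $C$ to $C \cup S$ cannot violate the non-collider condition for path activity, while the collider condition is monotone under enlargement; the failure of the converse is witnessed by $1 \rightarrow S_1 \leftarrow 2$, which is exactly the structure of the paper's own clinical example (\cref{example:clinical_X1_X2_independent}, \cref{fig:review_4_soft_with_selection}). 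Both halves are sound, and your isolation of the ``no children'' assumption as the load-bearing hypothesis is precisely the right care-point: it is what rules out the only mechanism by which a larger conditioning set could block a previously open path. What your approach buys is transparency---a reader can verify the lemma knowing only d-separation, with no notion of inducing paths or m-separation; what the paper's approach buys is uniformity and brevity, since the neighboring results (\cref{lem:induced_in_pk}, \cref{lem:induced_in_target}, \cref{thm:graphical_criteria_for_markov_equivalence}) genuinely require the MAG apparatus, so the authors dispatch all such facts under that single framework rather than proving the easy ones by hand.
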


Further, interventions can introduce even more dependencies, making interventional distributions no longer Markovian to the original DAG $\Gcal$ (recall the post-pest-control distribution in~\cref{example:chain_X123S}):

\begin{restatable}[\textbf{\update{Even more dependencies induced by interventions}}]{lemma}{LEMEVENMOREDEPENDENCIESINPK}\label{lem:even_more_dependencies_in_pk}
    For any DAG $\Gcal$ on $[D] \cup S$, target $I \subset [D]$, and disjoint $A,B,C \subset [D]$, if $X_A \dsep X_B | X_C, S^*, \zeta$ holds in the twin graph $\Gcal^{(I)}$, then $A \dsep B | C, S$ holds in the original DAG $\Gcal$. The reverse is not necessarily true, except when $I = \varnothing$.
\end{restatable}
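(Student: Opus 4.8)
The plan is to prove the two halves separately: the forward implication (twin-graph d-separation forces $\Gcal$-d-separation under $S$) by a contrapositive path-lifting argument, and the two claims about the reverse by an explicit counterexample for $I \neq \varnothing$ together with a degeneration argument for $I = \varnothing$. First I would fix notation relating $\Gcal^{(I)}$ back to $\Gcal$. Let $\phi$ send each selection node $S_j \mapsto S_j^*$, each affected vertex $i \in \des_\Gcal(I)$ to its counterfactual copy $X_i^*$, and each unaffected vertex $i \notin \des_\Gcal(I)$ to the shared node $X_i$. By~\cref{def:interventional_twin_network}, $\phi$ is a graph isomorphism from $\Gcal$ onto the counterfactual subgraph $\Gcal^* \subseteq \Gcal^{(I)}$ induced on $\{X_i : i \notin \des_\Gcal(I)\} \cup X^*_{\aff} \cup S^*$. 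The rest of $\Gcal^{(I)}$ is the reality world: a selection-free copy of $\Gcal|_{[D]}$ on the nodes $X$, plus edges $\zeta \to X_i$ for $i \in I$, joined to $\Gcal^*$ only through the forks $X_i \leftarrow \epsilon_i \to X_i^*$ at affected vertices (unaffected $X_i$ being shared by both worlds). Three facts will drive everything: (i) selection nodes $S_j$ have no children, so on any path they appear interior only as colliders, and $S_j^* \in S^*$ is always conditioned; (ii) the noise nodes $\epsilon_i$ are never conditioned; and (iii) the reality copy reproduces every directed path of $\Gcal$ inside $[D]$, so a collider can be opened in the reality world through a $C$-descendant, whereas the counterfactual copy alone can reach $S^*$ and so open a collider through an $S$-descendant.

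For the forward direction I argue the contrapositive: given an active path $\pi$ between $a \in A$ and $b \in B$ w.r.t.\ $C \cup S$ in $\Gcal$, I construct an active path between $X_a$ and $X_b$ w.r.t.\ $X_C \cup S^* \cup \{\zeta\}$ in $\Gcal^{(I)}$. The construction realizes each vertex of $\pi$ in one of the two worlds: a collider opened through a $C$-descendant (in particular any collider lying in $C$) is placed in the reality world, where $X_c \in X_C$ or a reality $C$-descendant opens it; a collider opened only through an $S$-descendant, together with every selection node, is placed in the counterfactual world, where $S^* \subseteq$ the conditioning set opens it; the non-colliders (which are unconditioned) and the endpoints $X_a, X_b$ (always reality) are then threaded consistently, switching worlds at an affected vertex by passing through its fork $X_i \leftarrow \epsilon_i \to X_i^*$. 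Since $\epsilon_i$ is never conditioned, each such crossing is itself active.

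The crux — and the step I expect to be most delicate — is verifying that these world-switches preserve activeness \emph{globally}, i.e.\ that inserting a fork at an affected vertex does not turn a harmless non-collider into a blocked collider or route the path through a conditioned non-collider. Concretely, when $\pi$ passes through $i$ as a non-collider, the fork sends one path-edge to $X_i$ and the other to $X_i^*$; because both $\epsilon_i \to X_i$ and $\epsilon_i \to X_i^*$ point \emph{into} these copies, whether the local configuration is a chain (harmless) or a collider (needing activation) depends on the orientation of the reality/counterfactual edges at $i$. I plan to handle this by a short case analysis on the in/out pattern of $i$ in $\pi$, choosing for each maximal sub-segment of $\pi$ the world in which it is realized so that a crossing occurs only where the induced collider is conditioned (e.g.\ at a vertex of $C$) or can be avoided by staying in one world; facts (i)--(iii) guarantee such a choice exists. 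This yields an active twin-graph path, establishing the contrapositive and hence the forward implication (which, chained with~\cref{lem:additional_dependencies_in_p0}, also recovers $A \dsep B \mid C$).

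Finally, for the reverse I would exhibit one counterexample showing failure whenever $I \neq \varnothing$, and a degeneration showing equality when $I = \varnothing$. Taking $\Gcal = 1 \to 2 \to 3 \to S$ with $I = \{1\}$, one checks $1 \dsep 3 \mid 2, S$ in $\Gcal$ (the only path is blocked at the chain vertex $2$), yet in $\Gcal^{(\{1\})}$ the route $X_1 \leftarrow \epsilon_1 \to X_1^* \to X_2^* \to X_3^* \leftarrow \epsilon_3 \to X_3$ is active given $X_2, S^*, \zeta$: the chain vertex is now the \emph{unconditioned} $X_2^*$, while the collider $X_3^*$ is opened by its descendant $S^*$ — precisely the spurious dependence of~\cref{example:chain_X123S}. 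When $I = \varnothing$, however, $\des_\Gcal(\varnothing) = \varnothing$, so $\Gcal^{(\varnothing)}$ has no starred vertices, no $\epsilon$ nodes, and no $\zeta$-edges; $\zeta$ is isolated and the graph on $X \cup S^*$ is literally $\Gcal$ with $S$ relabelled $S^*$. Hence $X_A \dsep X_B \mid X_C, S^*, \zeta$ collapses to $A \dsep B \mid C, S$, so the implication becomes an equivalence, completing the proof.
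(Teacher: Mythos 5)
Your strategy is sound, and it is genuinely different from the paper's: the paper offers no explicit argument for~\cref{lem:even_more_dependencies_in_pk} at all, declaring it (together with several neighboring results) an immediate consequence of the MAG machinery of~\citep{richardson2002ancestral,zhang2008completeness}, whereas you give a self-contained, DAG-level path-lifting proof. Your ingredients are correct: the isomorphism $\phi$ onto the counterfactual subgraph, the observation that the reality world is a selection-free copy of $\Gcal|_{[D]}$ attached only through shared unaffected nodes and $\epsilon$-forks, the counterexample $1\to 2\to 3\to S$ with $I=\{1\}$ (which is exactly the paper's~\cref{example:chain_X123S}), and the collapse of $\Gcal^{(\varnothing)}$ onto $\Gcal$ for the $I=\varnothing$ case. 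What the elementary route buys is transparency about where the two-world asymmetry enters: an affected $c\in C$ is conditioned only as its reality copy $X_c$, so it can open colliders only in the reality world, while $S^*$ opens colliders only in the counterfactual world; the paper's appeal to MAG theory hides precisely this point, and in fact the ``immediate'' MAG-level argument would itself need the re-routing your construction makes explicit.

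The one place your proposal needs tightening is the crux you flag, and the invariant you sketch there is not quite the right one. Decompose the active path $\pi$ into its maximal directed segments, each running from a local source (both path edges outgoing) down to a sink, which is either a collider of $\pi$ or an endpoint entered by an incoming edge. The correct discipline is: never switch worlds inside a segment or at a collider; realize each segment entirely in the world dictated by its sink --- counterfactual (via $\phi$) if the sink is a selection node or a collider opened only through $\anc_\Gcal(S)$, reality if the sink is a collider opened by a descendant in $C$ or an endpoint, with unaffected sinks being shared nodes that impose no constraint; and switch worlds only at sources, where inserting $X_t \leftarrow \epsilon_t \to X_t^*$ creates only non-colliders (both new edges leave $\epsilon_t$, and the path edges leave $X_t$ and $X_t^*$), so no opening condition is ever needed. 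Since every segment has exactly one constrained end (its sink) and sources are unconstrained, these choices can never conflict --- this is the existence claim you delegated to ``facts (i)--(iii).'' By contrast, your parenthetical suggestion that a crossing may be placed where the induced collider is conditioned, e.g.\ at a vertex of $C$, fails in general: such a crossing routes the path through both $X_c$ and $X_c^*$ with incoming edges, and while the reality-side collider $X_c$ is opened by the conditioning, the counterfactual-side collider $X_c^*$ is opened only if $c\in\anc_\Gcal(S)$, which need not hold. With the source-only switching rule in place, your contrapositive argument closes, and the remainder of your proposal is already complete.
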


\cref{lem:even_more_dependencies_in_pk} offers a counterintuitive insight that contrasts with cases without selection or where selection is not applied before interventions, as those modeled by the augmented graph paradigm. In those cases, interventions add no dependencies and may only add independencies (e.g., via hard interventions). In our setting, however, an intervention may indeed add more dependencies. Now, let us examine~\cref{thm:i_markov_property,lem:even_more_dependencies_in_pk} in the context of~\cref{example:chain_X123S} (pest control), address the ecologist’s concern, and summarize the motivations behind the Markov properties in this subsection.\looseness=-1

\begin{example}\label{example:chain_X123S_independence_analysis}
    Continuing from~\cref{example:chain_X123S}, the original DAG $\Hcal$ and the interventional twin graph $\Hcal^{(\{1\})}$ are shown in~\cref{fig:model_define_H,fig:model_define_H_1}, respectively. To explain the $X_1\not\indep X_3 | X_2$ observed after intervening on shrubs, graphically we indeed see the d-connection $X_1 \not \dsep X_3 | X_2, S^*, \zeta$ in $\Hcal^{(\{1\})}$, with an open path $X_1 \leftarrow \epsilon_1 \rightarrow X_1^* \rightarrow X_2^* \rightarrow X_3^* \leftarrow \epsilon_3 \rightarrow X_3$ where the collider $X_3^*$ has its descendant $S_1^*$ conditioned on. Statistically, $X_3^*$ is a combination of exogenous noises $\epsilon_1,\epsilon_2, \epsilon_3$, and selection on $X_3^*$ renders $\epsilon_1,\epsilon_2, \epsilon_3$ not independent anymore. Such dependent noises also leave trace in conditional invariances: though only $X_1$ is targeted, not only $p(X_1)$ but also $p(X_2|X_1)$ and $p(X_3|X_2)$ are altered, as graphically implied by the $\zeta \not\dsep X_1 | S^*$, $\zeta \not\dsep X_2 | X_1, S^*$, and $\zeta \not\dsep X_3 | X_2, S^*$, respectively.\looseness=-1

    If, however, the ecologist somehow directly targets $X_3$ (pests), the corresponding $\Hcal^{(\{3\})}$ (see~\cref{fig:model_define_H_3}) shows that $X_1 \indep X_3 | X_2$ holds in the interventional distribution this time, and the marginals of $X_1, X_2$ remain unaltered. \update{More details on this pest-control example are in~\cref{app:simulate_example_chain_X123S}.}\looseness=-1
\end{example}

\subsection{Markov Equivalence Relations}\label{subsec:model_equivalence}
In~\cref{subsec:model_Markov} we characterized the CI relations implied by the true model in the data. Now, to identify the true model from data, in this section, we must understand to what extent the true model is \textit{identifiable}, as different models may share identical CI implications, namely, being \textit{Markov equivalent} (\cref{def:markov_equiv_class}). To establish graphical criteria for this equivalence, we leverage the maximal ancestral graph (MAG) framework. \update{In~\cref{subsubsec:representation_in_mag}, we introduce MAG basics. In~\cref{subsubsec:equivalence_in_mag}, we characterize the MAGs of interventional twin graphs,} and accordingly, present the graphical criteria for Markov equivalence.

We first define the Markov equivalence. Two different DAGs with different intervention targets (since we allow unknown targets) can entail the same CI and invariance relations in the data. Formally,\looseness=-1 %

\begin{definition}[\textbf{Markov equivalence}]\label{def:markov_equiv_class}
    Let $\Gcal$ and $\Hcal$ be two DAGs over vertices $[D] \cup S$ and $[D] \cup S'$, respectively, i.e., defined over the same variables ${[D]}$ but possibly with different selections. Let $\Ical$ and $\Jcal$ be two collections of intervention targets with a same size $1+K$. The pairs $(\Gcal, \Ical)$ and $(\Hcal, \Jcal) $ are said to be \textit{Markov equivalent}, denoted by $(\Gcal, \Ical) \sim (\Hcal, \Jcal) $, if they imply the same set of CIs in each distribution and the same conditional invariances across distributions, as given by~\cref{thm:i_markov_property}.
\end{definition}

\subsubsection{\update{MAG Basics: Representing CIs with Latent and Selection variables}}\label{subsubsec:representation_in_mag}

We now introduce the MAG framework to \update{simplify the graphical representation for the CI implications.} Only the essentials are covered here; for more details, see~\citep{richardson2002ancestral,zhang2008completeness}.

A MAG is a mixed graph with three kinds of edges: directed ($\rightarrow$), bi-directed ($\leftrightarrow$), and undirected ($\text{---}$). Given any DAG $\Gcal$ over vertices partitioned as $O$ (observed), $L$ (latent), and $S$ (selected), a corresponding MAG over $O$, denoted by $\mathcal{M}_\Gcal$, can be constructed. Before presenting the construction rules, let us recap the motivation: to capture the CI relations among $O$ marginalized over $L$ and conditioned on $S$.\looseness=-1

First, for two observed variables $i,j\in O$, when are they always d-connected given $C\cup S$ for any subset of observed variables $C\subset O\backslash \{i,j\}$? The answer is given by the adjacencies in the MAG:

\begin{definition}[\textbf{\update{MAG construction step 1: adjacencies;~\citet{richardson2002ancestral}}}]\label{def:mag_step_1_adjacencies} %
    For each pair of observed variables $i,j\in O$, $i$ and $j$ are adjacent in $\mathcal{M}_\Gcal$ if and only if $i,j$ are adjacent in $\Gcal$, or there exists a path $p$ between $i$ and $j$ in $\Gcal$ where every non-endpoint observed or selected vertex on $p$ is a collider on $p$, and every collider on $p$ is an ancestor of $i$ or $j$ or a member of $S$.
\end{definition}

Next, we orient these adjacencies to fully capture the d-separation relations implied by $\Gcal$ on $O|S$:

\begin{definition}[\textbf{\update{MAG construction step 2: orientations;~\citet{richardson2002ancestral}}}]\label{def:mag_step_2_orientations} %
    For each two adjacent vertices $i,j$ as defined by~\cref{def:mag_step_1_adjacencies}, orient the edge in $\mathcal{M}_{\Gcal}$ as $i \rightarrow j$ if $i \in \anc_\Gcal(\{j\} \cup S)$ and $j \notin \anc_\Gcal(\{i\} \cup S)$; $i \leftarrow j$ if $j \in \anc_\Gcal(\{i\} \cup S)$ and $i \notin \anc_\Gcal(\{j\} \cup S)$; $i \leftrightarrow j$ if $i \notin \anc_\Gcal(\{j\} \cup S)$ and $j \notin \anc_\Gcal(\{i\} \cup S)$; and $i \text{ --- } j$ otherwise, i.e., if $i \in \anc_\Gcal(\{j\} \cup S)$ and $j \in \anc_\Gcal(\{i\} \cup S)$.\looseness=-1
\end{definition}

The two steps above give general MAG construction rules. In a MAG, a vertex $j$ is a \textit{descendant} of $i$ if $i = j$ or $i \rightarrow \cdots \rightarrow j$. An edge between $i,j$ is \textit{into} $j$ if it is $i \rightarrow j$ or $i \leftrightarrow j$. A vertex $i$ is a \textit{collider} on a path $p$ if both edges incident to $i$ on $p$ are into $i$. A path $p$ as $(i, k, j)$ is a \textit{v-structure} if $i$ and $j$ are not adjacent, and $k$ is a collider on $p$. Using these notations, the MAG's CI implications are as follows:\looseness=-1

\begin{definition}[\textbf{\update{m-separation}}]\label{def:mag_m_separation} In the MAG $\mathcal{M}_{\Gcal}$, a path $p$ between vertices $i$ and $j$ is \textit{open} relative to a vertex set $C$ ($i,j\notin C$), if every non-collider on $p$ is not in $C$, and every collider on $p$ has a descendant in $C$. $i$ and $j$ are \textit{m-separated} by $C$, denoted by $i \msep j | C$, if there is no open path between $i,j$ relative to $C$. $i \msep j | C$ holds in MAG $\mathcal{M}_{\Gcal}$, if and only if $i\dsep j | C \cup S$ holds in DAG $\Gcal$.
\end{definition}

\subsubsection{Graphical Criteria for Markov Equivalence}\label{subsubsec:equivalence_in_mag}

\begin{figure}[t]
\vspace{-2em}
    \centering
    \begin{subfigure}[b]{0.0984 \fixedtextwidth}
        \centering
        \includegraphics[height=0.1357 \fixedtextwidth]{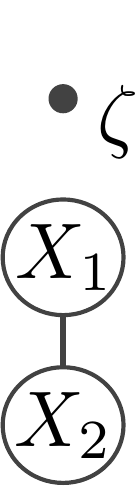}
        \caption{$\mathcal{M}_{\Gcal^{(\varnothing)}}$}
        \label{fig:MAG_G_empty}
    \end{subfigure}
    \hspace{0.01\fixedtextwidth}
    \begin{subfigure}[b]{0.1456 \fixedtextwidth}
        \centering
        \includegraphics[height=0.1351 \fixedtextwidth]{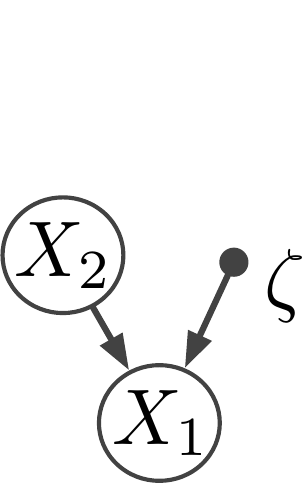}
        \caption{$\mathcal{M}_{\Gcal^{(\{1\})}}$}
        \label{fig:MAG_G_1}
    \end{subfigure}
    \begin{subfigure}[b]{0.09\textwidth}
        \centering
        \rule{0.5pt}{15ex} %
    \end{subfigure}
    \begin{subfigure}[b]{0.1252 \fixedtextwidth}
        \centering
        \includegraphics[height=0.1357 \fixedtextwidth]{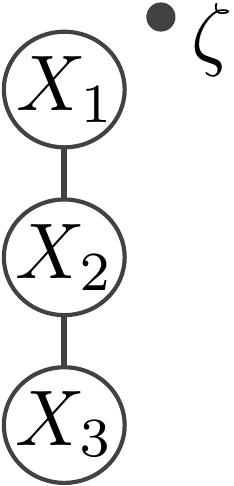}
        \caption{$\mathcal{M}_{\Hcal^{(\varnothing)}}$}
        \label{fig:MAG_H_empty}
    \end{subfigure}
    \hspace{0.03\fixedtextwidth}
    \begin{subfigure}[b]{0.1398 \fixedtextwidth}
        \centering
        \includegraphics[height=0.1357 \fixedtextwidth]{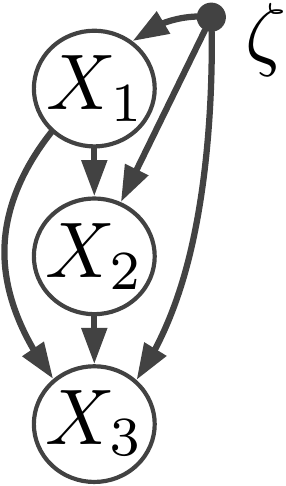}
        \caption{$\mathcal{M}_{\Hcal^{(\{1\})}}$}
        \label{fig:MAG_H_1}
    \end{subfigure}
    \hspace{0.03\fixedtextwidth}
    \begin{subfigure}[b]{0.1409 \fixedtextwidth}
        \centering
        \includegraphics[height=0.1357 \fixedtextwidth]{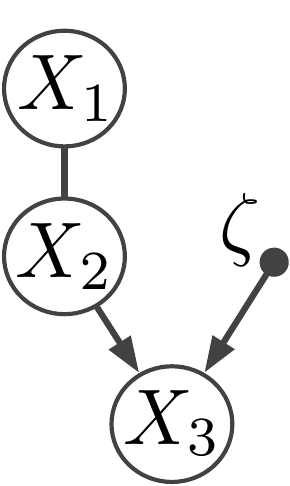}
        \caption{$\mathcal{M}_{\Hcal^{(\{3\})}}$}
        \label{fig:MAG_H_3}
    \end{subfigure}
    \vspace{-0.5em}
    \caption{\update{Examples of MAGs of interventional twin graphs.} Readers may reconstruct these MAGs from DAGs in~\cref{fig:model_define_GH_examples} using either general rules (\cref{def:mag_step_1_adjacencies,def:mag_step_2_orientations}) or interventional twin graph-specific rules (\cref{lem:induced_in_pk,lem:induced_in_target,lem:orient_mag}) and verify if they match. Readers may also verify if the CI implications (\cref{thm:i_markov_property}) match between d-separations on DAGs and m-separations (\cref{def:mag_m_separation}) on MAGs.\looseness=-1}
    \label{fig:MAG_define_GH_examples}
    \vspace{-1em}
\end{figure}

The MAG framework is useful, as it represents CIs in distributions under latent variables and selection, and determines Markov equivalence between such distributions, precisely aligning with our goal. Accordingly, we construct MAGs of interventional twin graphs, with vertices partitioned into observed ($X \cup \{\zeta\}$), latent ($X^*_{\aff} \cup \Ecal_{\aff}$), and selected ($S^*$) ones. Examples of such MAGs are shown in~\cref{fig:MAG_define_GH_examples}. While general MAG construction rules are outlined in~\cref{subsubsec:representation_in_mag}, to better understand the CI implications graphically, we present the following construction rules specific to interventional twin graph.\looseness=-1

Given a DAG $\Gcal$ over $[D]\cup S$ and a target $I\subset [D]$, denote by $\mathcal{M}_{\Gcal^{(I)}}$ the MAG constructed on $X \cup \{\zeta\}$ from the interventional twin graph $\Gcal^{(I)}$. \update{We show the specific rules to construct $\mathcal{M}_{\Gcal^{(I)}}$, by presenting the following lemmas and the questions they aim to answer.}\looseness=-1

\begin{restatable}[\textbf{\update{When are two variables always dependent in pure observational data?}}]{lemma}{LEMINDUCEDINPZERO}\label{lem:induced_in_p0}
    For any $i,j \in [D]$, $X_i$ and $X_j$ are adjacent in $\mathcal{M}_{\Gcal^{(\varnothing)}}$, if and only if $i$ and $j$ are adjacent in $\Gcal$, or $\ch_\Gcal(i) \cap \ch_\Gcal(j) \cap \anc_\Gcal(S) \neq \varnothing$, i.e., they involve in a same selection, or have a common child that is ancestrally selected.\looseness=-1
\end{restatable}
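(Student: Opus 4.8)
The plan is to reduce the claim to the general MAG adjacency rule (\cref{def:mag_step_1_adjacencies}) applied to the special structure of $\Gcal^{(\varnothing)}$. First I would observe that when $I=\varnothing$ we have $\des_\Gcal(\varnothing)=\varnothing$, so by \cref{def:interventional_twin_network} there are no starred copies $X^*_{\aff}$, no shared noise terms $\Ecal_{\aff}$, and no edges out of $\zeta$. Hence $\Gcal^{(\varnothing)}$ is, up to relabeling $S_j\mapsto S^*_j$, exactly the original DAG $\Gcal$ together with an isolated vertex $\zeta$; in particular the latent set is empty, the selected set is $S^*$, and ancestral relations are preserved, i.e.\ $l\in\anc_\Gcal(S)$ iff $X_l\in\anc_{\Gcal^{(\varnothing)}}(S^*)$. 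Because $\zeta$ is isolated it lies on no path between any $X_i,X_j$ and can be ignored.

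With an empty latent set, the criterion of \cref{def:mag_step_1_adjacencies} says $X_i,X_j$ are adjacent in $\mathcal{M}_{\Gcal^{(\varnothing)}}$ iff they are already adjacent in $\Gcal^{(\varnothing)}$ (equivalently $i,j$ adjacent in $\Gcal$), or there is an inducing path on which every intermediate vertex is a collider (there are no latent non-colliders to exempt) and every collider is an ancestor of $\{X_i,X_j\}\cup S^*$. The crux is to show such a nontrivial inducing path has exactly one intermediate vertex. I would argue via the no-two-consecutive-colliders observation: two adjacent vertices on a path cannot both be colliders, since the single directed edge between them points into only one of them. As every intermediate vertex on the inducing path must be a collider, the path cannot contain two consecutive intermediate vertices, so it has the form $X_i\rightarrow c\leftarrow X_j$ for a single collider $c$.

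It then remains to identify $c$ with a common child that is ancestrally selected. Since both edges are into $c$, we have $c\in\ch_\Gcal(i)\cap\ch_\Gcal(j)$ (reading the edges back in $\Gcal$, where $c$ is either some $X_l$ or some $S^*_k$). The collider condition forces $c\in\anc(\{X_i,X_j\}\cup S^*)$; but acyclicity rules out $c$ being a proper ancestor of its own parents $X_i$ or $X_j$, so $c\in\anc(S^*)$, which translates to the common child lying in $\anc_\Gcal(S)$ (a selection vertex $S^*_k$ trivially qualifies, recovering the ``same selection'' case). For the converse, given any $c\in\ch_\Gcal(i)\cap\ch_\Gcal(j)\cap\anc_\Gcal(S)$, its copy in $\Gcal^{(\varnothing)}$ supplies the inducing path $X_i\rightarrow c\leftarrow X_j$ with collider $c\in\anc(S^*)$, so $X_i,X_j$ are adjacent. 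Combining both directions with the trivial direct-edge case yields the stated equivalence.

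The step I expect to be the main obstacle is pinning down that the inducing path collapses to length two and cleanly handling the two sub-cases for $c$ (an $X_l$ versus an $S^*_k$) while keeping the translation between $\anc_\Gcal(S)$ and $\anc_{\Gcal^{(\varnothing)}}(S^*)$ precise; everything else is a direct specialization of the Richardson--Spirtes construction to this empty-latent, relabeled-selection setting.
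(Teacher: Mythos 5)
Your proof is correct and takes exactly the route the paper intends: the paper omits this proof entirely, stating in~\cref{app:proofs} that \cref{lem:induced_in_p0} is an ``immediate consequence of the definition of MAGs,'' and your argument is precisely that specialization worked out carefully (identifying $\Gcal^{(\varnothing)}$ with $\Gcal$ plus an isolated $\zeta$, noting the latent set is empty so every intermediate vertex of an inducing path must be a collider, and using the no-two-consecutive-colliders fact in a DAG to collapse the path to $X_i\rightarrow c\leftarrow X_j$ with $c\in\anc(S^*)$ forced by acyclicity). No gaps; the handling of both sub-cases for $c$ (an $X_l$ versus an $S^*_k$) matches the lemma's ``same selection / ancestrally selected common child'' disjunction.
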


The adjacencies in~\cref{lem:induced_in_p0} reflect additional dependencies due to selection bias (\cref{lem:additional_dependencies_in_p0}). Furthermore, interventions can introduce even more dependencies, as shown in~\cref{lem:even_more_dependencies_in_pk}--again, recall the pest control example. We now generalize~\cref{lem:induced_in_p0} to characterize all these dependencies.

\begin{restatable}[\textbf{\update{When are two variables always dependent under an intervention?}}]{lemma}{LEMINDUCEDINPK}\label{lem:induced_in_pk}
    For any $i,j \in [D]$, $X_i$ and $X_j$ are adjacent in $\mathcal{M}_{\Gcal^{(I)}}$, if and only if $X_i$ and $X_j$ are adjacent in the observational MAG $\mathcal{M}_{\Gcal^{(\varnothing)}}$, or there exists a path between $X_i$ and $X_j$ in $\mathcal{M}_{\Gcal^{(\varnothing)}}$, where all non-endpoint vertices are affected (i.e., in $\des_\Gcal(I)$), and they, together with $i,j$, are all ancestrally selected (i.e., in $\anc_\Gcal(S)$).
\end{restatable}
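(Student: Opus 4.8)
The plan is to characterize adjacency in $\mathcal{M}_{\Gcal^{(I)}}$ through the inducing-path criterion of \cref{def:mag_step_1_adjacencies}: $X_i$ and $X_j$ are adjacent in $\mathcal{M}_{\Gcal^{(I)}}$ if and only if there is a path between them in the twin graph $\Gcal^{(I)}$ on which every non-endpoint observed or selected vertex is a collider and every collider is an ancestor of $X_i$, $X_j$, or $S^*$. I would thus reduce the whole statement to showing that such an inducing path exists exactly when condition~1 (adjacency in $\mathcal{M}_{\Gcal^{(\varnothing)}}$) or condition~2 (the affected-and-ancestrally-selected $\mathcal{M}_{\Gcal^{(\varnothing)}}$-path) holds. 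Two preliminary structural facts about $\Gcal^{(I)}$ drive everything. First, $\zeta$ is a source and can never be an internal collider, so no inducing path passes through it; the analysis lives entirely in $X \cup X^*_{\aff} \cup \Ecal_{\aff} \cup S^*$. Second, a counterfactual vertex $X_k^*$ exists only when $k \in \des_\Gcal(I)$, its only observed-or-selected descendants lie in $S^*$ (it can never reach a reality vertex $X_\ell$), and the two worlds are glued only at the shared noises $\epsilon_k \to X_k, X_k^*$ and at the unaffected-to-affected edges $X_u \to X_a^*$.

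For the backward direction I would construct the inducing path explicitly. Unfolding a condition-2 path via \cref{lem:induced_in_p0}, each of its edges is either a $\Gcal$-adjacency or a common child $m \in \anc_\Gcal(S)$, and that child is necessarily affected (a child of an affected internal vertex). Since every internal vertex is affected, I lift the path into the counterfactual world: replace each internal $v_t$ by $X_{v_t}^*$, each directed $\Gcal$-edge by the corresponding $X^*\to X^*$ edge, and each common child by a collider $X_m^*$. The two endpoints are attached to reality either through the shared noise $X_i \leftarrow \epsilon_i \to X_i^*$ when $i$ is affected, or directly through $X_i \to X_{v_1}^*$ when $i$ is unaffected (where the $\Gcal$-relation is forced to be $i \to v_1$). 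Because every vertex on the path is ancestrally selected, each collider $X_m^*$ has a directed counterfactual path to $S^*$, so the collider condition holds; all remaining internal vertices are latent, making the ``must be a collider'' requirement vacuous for them. Condition-1 adjacencies are handled by the same lifting applied to a one- or two-edge path, using the reality edges directly when the relevant common child is unaffected.

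For the forward direction I would take an inducing path $\pi$ and project it to a walk on $[D]$ by identifying $X_k$, $X_k^*$, $\epsilon_k$ with $k$. If $\pi$ stays in the reality world it yields, edge by edge, an adjacency chain that collapses to condition~1 via \cref{lem:induced_in_p0}. If $\pi$ visits the counterfactual world, I reduce the projected walk to a simple path and show it satisfies condition~2: every visited $X_k^*$ forces $k \in \des_\Gcal(I)$, giving the ``all internal affected'' clause, while the ancestral-selection clause follows from the collider accounting. The key point is that a counterfactual collider cannot be an ancestor of either reality endpoint (its observed-or-selected descendants lie only in $S^*$), so on an inducing path it must be an ancestor of $S^*$; and because each $\epsilon_k$-crossing presents the vertex $k$ through its counterfactual copy $X_k^*$ at the head of a directed subpath running toward such a collider, every projected vertex—both endpoints included—is an ancestor of $S^*$, i.e., ancestrally selected.

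I expect the forward direction to be the main obstacle: the bookkeeping that turns the projected walk into a genuine simple path without losing the affected/selected labels, and the verification that counterfactual colliders are never ancestors of the reality endpoints, which is precisely what forces the endpoints themselves to be ancestrally selected and distinguishes condition~2 from a naive chaining of $\mathcal{M}_{\Gcal^{(\varnothing)}}$-adjacencies. A secondary subtlety is that an affected common child $m$ has its selection edge emanating from $X_m^*$ rather than from the reality $X_m$, so reality-only colliders at affected vertices are unavailable; this asymmetry is exactly why condition~2 must route through the counterfactual copies, and I would make sure the lifting and projection respect it throughout.
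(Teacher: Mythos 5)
Your proposal follows essentially the same route as the paper's proof: both reduce adjacency in $\mathcal{M}_{\Gcal^{(I)}}$ to the existence of an inducing path in the twin DAG $\Gcal^{(I)}$, prove the backward direction by lifting the observational-MAG path into the counterfactual world (attaching affected endpoints through their shared noises $\epsilon_i$, unaffected ones through direct edges), and prove the forward direction by arguing that counterfactual colliders can only be ancestors of $S^*$, which forces all intermediate vertices to be affected and all path vertices to be ancestrally selected. If anything, your forward direction is slightly more careful than the paper's: the paper asserts that any inducing path with more than one intermediate vertex must have all intermediate vertices in $X^*$, which overlooks the configuration $X_i \to X_c \leftarrow \epsilon_c \to X_c^* \to \cdots$ (an observed affected collider whose ancestor requirement is met through the far endpoint rather than through $S^*$), whereas your projection-plus-collider-accounting argument absorbs this case automatically, since $X_c$ and $X_c^*$ project to the same index, which your argument still certifies as affected and ancestrally selected.
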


Further generalizing the adjacencies in~\cref{lem:induced_in_pk} to the indicator variable $\zeta$, we have:

\begin{restatable}[\textbf{\update{When does an intervention alters all of a variable's conditional distributions?}}]{lemma}{LEMINDUCEDINTARGET}\label{lem:induced_in_target}
    For any $j \in [D]$, $\zeta$ and $X_j$ are adjacent in $\mathcal{M}_{\Gcal^{(I)}}$, if and only if $j$ is directly targeted (i.e., $j \in I$), or $j$ is both indirectly affected and ancestrally selected (i.e., $j \in  \des_\Gcal(I) \cap   \anc_\Gcal(S)$).
\end{restatable}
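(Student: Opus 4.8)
The plan is to apply the MAG adjacency criterion (\cref{def:mag_step_1_adjacencies}) directly to the pair $(\zeta, X_j)$ in the interventional twin graph $\Gcal^{(I)}$, where the vertex partition is observed $X\cup\{\zeta\}$, latent $X^*_{\aff}\cup\Ecal_{\aff}$, and selected $S^*$. By that criterion, $\zeta$ and $X_j$ are adjacent in $\mathcal{M}_{\Gcal^{(I)}}$ iff they are already adjacent in $\Gcal^{(I)}$, or there is an inducing path $p$ between them, i.e.\ a path on which every non-endpoint observed or selected vertex is a collider and every collider lies in $\anc_{\Gcal^{(I)}}(\{\zeta,X_j\}\cup S^*)$. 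Since $\zeta$ is a root whose only outgoing edges are $\zeta\to X_i$ for $i\in I$ (\cref{def:interventional_twin_network}), no interior vertex is an ancestor of $\zeta$, so the collider condition reduces to membership in $\anc_{\Gcal^{(I)}}(\{X_j\}\cup S^*)$; I will use this repeatedly. I would prove the two directions separately.

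For sufficiency, the case $j\in I$ is immediate since $\zeta\to X_j$ is then a direct edge. For $j\in\des_\Gcal(I)\cap\anc_\Gcal(S)$ with $j\notin I$, I would exhibit an explicit inducing path. Pick $i_0\in I$ with a directed path $i_0\to\cdots\to j$ in $\Gcal$ (it exists since $j\in\des_\Gcal(I)$, with $i_0\neq j$ because $j\notin I$); all its vertices lie in $\des_\Gcal(I)$, so the counterfactual chain $X^*_{i_0}\to\cdots\to X^*_j$ is present in $\Gcal^{(I)}$. The candidate path is $\zeta\to X_{i_0}\leftarrow\epsilon_{i_0}\to X^*_{i_0}\to\cdots\to X^*_j\leftarrow\epsilon_j\to X_j$. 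Its only non-endpoint observed vertex is $X_{i_0}$, which is a collider and an ancestor of $X_j$ through the reality chain $X_{i_0}\to\cdots\to X_j$; its only other collider is $X^*_j$, which is an ancestor of $S^*$ because $j\in\anc_\Gcal(S)$ yields a directed path $X^*_j\to\cdots\to S^*_t$ in the counterfactual world. Hence $p$ is an inducing path and the two vertices are adjacent.

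For necessity, assume $\zeta$ and $X_j$ are adjacent with $j\notin I$; since the only direct edges out of $\zeta$ reach targets, there is an inducing path $p$, which starts $\zeta\to X_{i_0}$ with $i_0\in I$. As $X_{i_0}$ is an observed non-endpoint it must be a collider, so $X_{i_0}\in\anc_{\Gcal^{(I)}}(\{X_j\}\cup S^*)$. A reality vertex $X_{i_0}$ with $i_0\in\des_\Gcal(I)$ has, by \cref{def:interventional_twin_network}, only reality children, so its descendants in $\Gcal^{(I)}$ are exactly $\{X_c:c\in\des_\Gcal(i_0)\}$ and exclude $S^*$; therefore $X_j$ is among them and $j\in\des_\Gcal(i_0)\subseteq\des_\Gcal(I)$, giving the first condition. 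For the second condition I would first pin down how $p$ terminates: an edge out of $X_j$ would go to a reality child $X_b$ (observed, hence a collider), forcing $X_b\in\anc(X_j)$ and thus the cycle $b\in\des_\Gcal(j)\cap\anc_\Gcal(j)$, a contradiction; an edge $X_a\to X_j$ from a reality parent would leave the interior vertex $X_a$ a non-collider, which observed interior vertices may not be. Thus $p$ must end $\cdots\,\epsilon_j\to X_j$, and as $\epsilon_j$ is parentless its other neighbor is its child $X^*_j$, so $p$ visits $X^*_j$.

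The crux, and the step I expect to be the main obstacle, is deducing $j\in\anc_\Gcal(S)$ from the fact that $p$ passes through $X^*_j$. If $X^*_j$ is a collider, then $X^*_j\in\anc_{\Gcal^{(I)}}(\{X_j\}\cup S^*)$, and since counterfactual vertices cannot reach the reality world, $X^*_j\in\anc(S^*)$, which is exactly $j\in\anc_\Gcal(S)$. If $X^*_j$ is a non-collider, its path-neighbor toward $\zeta$ is one of its children, launching a directed chain $X^*_j\to X^*_b\to\cdots$ that, by the structure of $\Gcal^{(I)}$, stays inside the counterfactual and selection vertices. I would argue that acyclicity forces this chain to terminate at either a selection vertex $S^*_t$ or a collider $X^*_m$: a non-collider sink is excluded, because it would have no child to continue along yet could not be an ancestor of $\{X_j\}\cup S^*$, violating the inducing-path condition. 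In either terminating case the chain realizes a directed path from $j$ to $S$ in $\Gcal$, so $j\in\anc_\Gcal(S)$. Making this extremal/termination argument airtight—verifying that every vertex along the downstream chain satisfies the collider bookkeeping and that the chain cannot escape into the reality world—is the delicate part; the remainder is routine checking against \cref{def:mag_step_1_adjacencies}.
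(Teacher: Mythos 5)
Your proof is correct, but it takes a genuinely different route from the paper's. The paper argues at the level of separating sets: for sufficiency it shows that \emph{every} candidate conditioning set $X_C$ leaves $\zeta$ and $X_j$ d-connected given $X_C \cup S^*$ (case-splitting on whether $C$ hits the directed path $\zeta \to \cdots \to X_j$ and, if so, detouring through $X_l \leftarrow \epsilon_l \to X_l^* \to \cdots \to X_j^* \leftarrow \epsilon_j \to X_j$); for necessity it exhibits an explicit witness separating set --- $\varnothing$ when $j \notin \des_\Gcal(I)$, and $X_{\pa_\Gcal(j)}$ when $j \in \des_\Gcal(I) \setminus \anc_\Gcal(S)$ --- and shows every path is blocked via a ``first collider counting from $\epsilon_l$'' argument. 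You instead work purely combinatorially with the inducing-path criterion of \cref{def:mag_step_1_adjacencies}: sufficiency via one fixed inducing path (which, notably, is the same object as the paper's detour path, so the two sufficiency arguments share their combinatorial core), and necessity by showing that \emph{any} inducing path must start $\zeta \to X_{i_0}$ with $X_{i_0}$ a collider whose descendants are confined to the reality world (giving $j \in \des_\Gcal(I)$), must terminate through $\epsilon_j \to X_j$, and must therefore visit $X_j^*$, whence a chain confined to the counterfactual/selection world certifies $j \in \anc_\Gcal(S)$. Your route is more uniform with the paper's own proof of \cref{lem:induced_in_pk} (which also uses inducing paths) and avoids quantifying over arbitrary conditioning sets; the paper's route buys an explicit separating set, $X_{\pa_\Gcal(j)}$, which is constructive and potentially useful algorithmically. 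One small repair to your ``crux'': what forces your downstream chain to terminate is not acyclicity but finiteness of the path together with the fact that the vertex adjacent to $\zeta$ is a reality vertex $X_{i_0}$, which the chain (confined to $X^* \cup S^*$) can never reach; also, the ``non-collider sink'' case is vacuous, since an interior non-collider always has an outgoing path edge --- the chain simply terminates at the first collider it meets (an $S^*$ vertex is automatically one), and that collider, being unable to reach the reality world, must be an ancestor of $S^*$, completing the argument exactly as you intend.
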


\cref{lem:induced_in_target} implies the unidentifiability of direct intervention targets when they are unknown. This is different from the case without selection, where ``unknown target is no longer a challenge'' (\cref{subsec:revisit}).

After constructing the adjacencies of $\mathcal{M}_{\Gcal^{(I)}}$ in~\cref{lem:induced_in_p0,lem:induced_in_pk,lem:induced_in_target}, we conclude with edge orientations:

\begin{restatable}[\textbf{\update{MAG of interventional twin graphs}}]{lemma}{LEMORIENTMAG}\label{lem:orient_mag}
    The MAG $\mathcal{M}_{\Gcal^{(I)}}$ consists of the following edges:
\begin{itemize}[noitemsep,topsep=0pt,left=10pt]
    \item ``Pseudo'' direct intervention target edges (by~\cref{lem:induced_in_target}): $\{\zeta \rightarrow X_i\}_{i \in I \cup (\des_\Gcal(I) \cap   \anc_\Gcal(S))}$;
    \item Edges among $X$. Let $\mathcal{M}_\Gcal$ be the MAG of $\Gcal$ over $[D]$. For each adjacent $X_i, X_j$ (by~\cref{lem:induced_in_pk}):
    \begin{itemize}[noitemsep,topsep=0pt,left=5pt]
    \item If $i\rightarrow j$ is in the original MAG $\mathcal{M}_\Gcal$, then orient $X_i\rightarrow X_j$ in interventional MAG $\mathcal{M}_{\Gcal^{(I)}}$;
    \item Otherwise (i.e., $i\text{ --- } j \in \mathcal{M}_\Gcal$ or $i, j$ are not adjacent in $\mathcal{M}_\Gcal$):
    \begin{itemize}[noitemsep,topsep=0pt,left=5pt]
    \item If $i \not\in \des_\Gcal(I)$ and $j \not\in \des_\Gcal(I)$: orient $X_i \text{ --- } X_j$;
    \item If $j \in \des_\Gcal(I)$ and ($i \not\in \des_\Gcal(I)$ or $i \in \anc_\Gcal(j)$): orient $X_i \rightarrow X_j$;
    \item If $i \in \des_\Gcal(I)$ and ($j \not\in \des_\Gcal(I)$ or $j \in \anc_\Gcal(i)$): orient $X_j \rightarrow X_i$;
    \item Otherwise, i.e., both $i,j \in \des_\Gcal(I)$, but neither is another's ancestor: orient $X_i \leftrightarrow X_j$.
    \end{itemize}
    \end{itemize}
    
\end{itemize}
\end{restatable}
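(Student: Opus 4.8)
The plan is to apply the general orientation rule of \cref{def:mag_step_2_orientations} directly to the interventional twin graph $\Gcal^{(I)}$, whose selection set is $S^*$, and then translate every ancestral query in $\Gcal^{(I)}$ back into a statement about the original DAG $\Gcal$ and the target $I$. Since the adjacencies are already fixed by \cref{lem:induced_in_p0,lem:induced_in_pk,lem:induced_in_target}, only the edge marks remain. The $\zeta$ edges are immediate: $\zeta$ is a source in $\Gcal^{(I)}$ (it has no parents), so no $X_i$ can be an ancestor of $\zeta$, and \cref{def:mag_step_2_orientations} forces every incident edge to point out of $\zeta$, giving $\zeta \rightarrow X_i$ for each adjacency from \cref{lem:induced_in_target}. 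The remaining work is to orient each $X_i$--$X_j$ edge by deciding, for the two predicates $X_i \in \anc_{\Gcal^{(I)}}(\{X_j\}\cup S^*)$ and $X_j \in \anc_{\Gcal^{(I)}}(\{X_i\}\cup S^*)$, which of them hold.

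Next I would establish two structural facts that evaluate these predicates. Decompose $\Gcal^{(I)}$ into the reality copy $\{X_k\}_{k\in[D]}$ and the counterfactual copy $\{X_k^*\}_{k\in\des_\Gcal(I)}\cup\{X_k\}_{k\notin\des_\Gcal(I)}\cup S^*$; each copy is an isomorphic image of $\Gcal$, and the unaffected vertices $\{X_k\}_{k\notin\des_\Gcal(I)}$ are literally shared by both. Tracing directed paths then yields: (a) $X_i\in\anc_{\Gcal^{(I)}}(X_j)$ iff $i\in\anc_\Gcal(j)$, because the reality copy faithfully reproduces $\Gcal$ on $[D]$ and counterfactual vertices $X_k^*$ have no edges back into reality; and (b) $X_i\in\anc_{\Gcal^{(I)}}(S^*)$ iff $i\notin\des_\Gcal(I)$ and $i\in\anc_\Gcal(S)$, because a reality vertex can reach $S^*$ only through a shared vertex, which requires $i$ itself to be unaffected, after which the counterfactual copy carries it to $S^*$ exactly when $i\in\anc_\Gcal(S)$. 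Consequently $X_i\in\anc_{\Gcal^{(I)}}(\{X_j\}\cup S^*)$ collapses to ``$i\in\anc_\Gcal(j)$, or ($i\notin\des_\Gcal(I)$ and $i\in\anc_\Gcal(S)$)'', and symmetrically for $j$.

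With these two predicates in hand I would substitute them into \cref{def:mag_step_2_orientations} and check that the four resulting outcomes match the lemma's cases. The directed case uses that, since $\Gcal$ contains no latent confounders (only selection), a directed mark $i\rightarrow j$ in $\mathcal{M}_\Gcal$ forces $i\in\anc_\Gcal(j)$ and $j\notin\anc_\Gcal(S)$; both predicates then simplify to give $X_i\rightarrow X_j$. For the ``otherwise'' branch I split on whether $i,j$ are adjacent in $\mathcal{M}_\Gcal$: if they are not, \cref{lem:induced_in_pk} guarantees that the edge exists only through a path of ancestrally selected vertices, so $i,j\in\anc_\Gcal(S)$, which activates the $\anc_\Gcal(S)$ disjunct exactly for the unaffected endpoints; if the $\mathcal{M}_\Gcal$ edge is undirected, the orientation-rule premises directly supply the needed ancestral memberships. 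The descendant-based sub-cases then follow, with one recurring simplification doing most of the work: if $j\in\des_\Gcal(I)$ and $i\notin\des_\Gcal(I)$ then $j\notin\anc_\Gcal(i)$ (otherwise $i\in\des_\Gcal(j)\subseteq\des_\Gcal(I)$, a contradiction), which kills the spurious ancestral relation and yields the directed $X_i\rightarrow X_j$; when both endpoints are affected, both $\anc_\Gcal(S)$ disjuncts vanish and ``neither an ancestor of the other'' produces $X_i\leftrightarrow X_j$.

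I expect the main obstacle to be fact (b) and its interaction with the adjacency lemmas, rather than the case bookkeeping: one must argue precisely that an affected reality vertex $X_i$ ($i\in\des_\Gcal(I)$) cannot reach $S^*$ -- its only outgoing edges stay inside the reality copy among affected vertices, none of which feed the counterfactual selection -- and, conversely, that an unaffected ancestrally selected vertex does reach $S^*$ through the shared nodes even when the connecting path dips into the counterfactual world. Pinning down the boundary between affected and unaffected vertices exactly, and confirming that the only way a non-$\mathcal{M}_\Gcal$-adjacency can appear in $\mathcal{M}_{\Gcal^{(I)}}$ is via the ancestrally selected path of \cref{lem:induced_in_pk}, is where the argument must be most careful.
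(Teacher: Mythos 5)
Your proposal follows the same route the paper itself (implicitly) takes: the paper omits the proof of this lemma as an ``immediate consequence'' of the general MAG construction rules, and your facts (a) and (b) --- translating $X_i \in \anc_{\Gcal^{(I)}}(X_j)$ to $i \in \anc_\Gcal(j)$, and $X_i \in \anc_{\Gcal^{(I)}}(S^*)$ to $i \notin \des_\Gcal(I)$ and $i \in \anc_\Gcal(S)$ --- are exactly the bookkeeping needed to make that explicit, and your case analysis checks out against \cref{def:mag_step_2_orientations}.

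One local repair is needed in your dispatch of the $\zeta$ edges: in a MAG with selection, sourceness alone does \emph{not} force outgoing directed edges. (For instance, in the DAG $a \rightarrow b \rightarrow s$ with $s$ selected, the MAG edge is $a \text{ --- } b$ even though $a$ is a source, because $b \in \anc(S)$.) To get the arrowhead at $X_i$ you must also show $X_i \notin \anc_{\Gcal^{(I)}}(S^*)$, which does not follow from $\zeta$ having no parents; it follows from your own fact (b) combined with \cref{lem:induced_in_target}, since every $X_i$ adjacent to $\zeta$ satisfies $i \in \des_\Gcal(I)$ and hence cannot reach $S^*$. You flag this exact subtlety for the $X_i$--$X_j$ edges in your final paragraph, so the tools are already in place --- just apply them to the $\zeta$ edges as well rather than calling them immediate.
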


Finally, we present the graphical criteria for Markov equivalence, using MAG construction rules defined above. By enforcing same CIs in each setting and same invariances across settings, we have:

\begin{restatable}[\textbf{Graphical criteria for Markov equivalence}]{theorem}{THMGRAPHICALCRITERIAEQUIVALENCE}\label{thm:graphical_criteria_for_markov_equivalence}

    For two DAGs $\Gcal$ and $\Hcal$ with two collections of targets $\Ical=\{I^{(0)},I^{(1)},\cdots,I^{(K)}\}$ and $\Jcal=\{J^{(0)},J^{(1)},\cdots,J^{(K)}\}$, the Markov equivalence $(\Gcal, \Ical) \sim (\Hcal, \Jcal) $ holds, if and only if for each $k =0,1,\cdots,K$, the corresponding MAGs of interventional twin graphs, $\mathcal{M}_{\Gcal^{I^{(k)}}}$ and $\mathcal{M}_{\Hcal^{J^{(k)}}}$, have the same adjacencies and v-structures\footnote{\update{Another condition for general MAG equivalence is not needed here, due to twin graphs' specific structures.}}.\looseness=-1
\end{restatable}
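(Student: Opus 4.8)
The plan is to strip \cref{def:markov_equiv_class} down to a statement purely about m-separations, one intervention setting at a time, and then reduce the theorem to the classical characterization of Markov equivalence for MAGs, whose discriminating-path clause I must show is vacuous for the graphs at hand. First I would unfold the definition of equivalence through \cref{thm:i_markov_property}: the within-distribution independencies of $p^{(k)}$ and the cross-distribution invariances against $p^{(0)}$ correspond, in the twin graph $\Gcal^{(I^{(k)})}$, to the d-separations $X_A \dsep X_B \mid X_C, S^*, \zeta$ and $\zeta \dsep X_A \mid X_C, S^*$ respectively; by \cref{def:mag_m_separation} these are exactly the m-separations $X_A \msep X_B \mid X_C, \zeta$ and $\zeta \msep X_A \mid X_C$ in $\mathcal{M}_{\Gcal^{(I^{(k)})}}$. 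Hence $(\Gcal,\Ical)\sim(\Hcal,\Jcal)$ holds iff, for every $k$, the MAGs $\mathcal{M}_{\Gcal^{(I^{(k)})}}$ and $\mathcal{M}_{\Hcal^{(J^{(k)})}}$ agree on these two families of m-separations, and the theorem reduces to the per-$k$ claim that agreement on the two families is equivalent to the two MAGs sharing adjacencies and v-structures.

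For necessity I would show that adjacencies and v-structures are already encoded in the two families. The crucial structural fact is that $\zeta$ is purely exogenous in every twin graph and, by \cref{lem:orient_mag}, has only outgoing edges in the MAG, so it is never a collider; since $\anc_\Gcal(\zeta)=\{\zeta\}$, conditioning on $\zeta$ opens no collider and therefore acts monotonically, only ever blocking paths. This lets me convert separability questions into admissible tests: if two $X$-vertices are m-separable at all, then they are separable by some set containing $\zeta$, a separation of the first family; and $\zeta$ can only ever be an endpoint, never the apex, of a v-structure, whose status is read off from the second family. Agreement on the two families thus forces the same adjacencies and the same unshielded colliders, giving the necessity direction.

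For sufficiency I would invoke the classical MAG equivalence result \citep{richardson2002ancestral,zhang2008completeness}: two MAGs on the same vertices are Markov equivalent iff they share adjacencies, v-structures, and the collider status of every discriminating path. Sufficiency then follows once I establish that the third clause is redundant for MAGs of interventional twin graphs, which is the main obstacle. I would attack it structurally from \cref{lem:orient_mag,lem:induced_in_pk,lem:induced_in_target}: every orientation in $\mathcal{M}_{\Gcal^{(I)}}$ is dictated by ancestral relations in $\Gcal$ and by membership in $\des_\Gcal(I)$ and $\anc_\Gcal(S)$, which confines the bidirected edges and the $\zeta$-arrowheads to a restricted pattern. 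Taking an arbitrary discriminating path and examining its penultimate vertex, I expect to show that the chain of colliders feeding the endpoint, each a parent of it, pins down the ancestral relations already visible in the skeleton and unshielded colliders, so that two twin-graph MAGs with matching adjacencies and v-structures cannot disagree there. The delicate part is the case analysis: handling discriminating paths that traverse $\zeta$, pass through affected variables in $\des_\Gcal(I)$, or run through selection ancestors in $\anc_\Gcal(S)$, and ruling out precisely the ambiguous configurations that make the fourth clause indispensable for general MAGs. Once this redundancy is in hand, sufficiency and the full equivalence follow by assembling the per-$k$ statements.
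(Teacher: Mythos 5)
Your overall architecture matches the paper's: reduce \cref{def:markov_equiv_class}, via \cref{thm:i_markov_property} and \cref{def:mag_m_separation}, to per-$k$ agreement of m-separations in $\mathcal{M}_{\Gcal^{(I^{(k)})}}$ and $\mathcal{M}_{\Hcal^{(J^{(k)})}}$, invoke the classical three-part characterization of MAG equivalence (same adjacencies, same v-structures, same collider status on common discriminating paths), and then argue the discriminating-path clause is redundant for twin-graph MAGs. Your necessity direction is correct and in fact more explicit than the paper's: the observation that $\zeta$ has no incoming edges, hence is never a collider nor an ancestor of one, so that conditioning on $\zeta$ only blocks paths and the two families of separations already determine adjacencies and unshielded colliders, is a legitimate bridge that the paper leaves implicit.

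The gap is precisely at the step you defer: you only state that you ``expect to show'' the redundancy of the discriminating-path clause, and the plan you sketch---that matching skeletons and v-structures ``pin down'' the collider status of the distinguished vertex---is not the argument that works, nor is one supplied. What the paper proves is an absolute impossibility, not a determination result: in \emph{any} twin-graph MAG, the distinguished vertex $V$ of a discriminating path $(X, W_1,\cdots,W_m,V,Y)$ can never be a collider. The argument runs by contradiction: if $V$ were a collider, ancestrality forces $W_m \leftrightarrow V \leftrightarrow Y$ (a directed edge $Y \rightarrow V$ would create an almost directed cycle with $W_m \rightarrow Y$); by \cref{lem:orient_mag}, bidirected edges arise only between vertices that are affected and whose adjacency is induced, which by \cref{lem:induced_in_pk} certifies $W_m, V, Y \in \des_\Gcal(I) \cap \anc_\Gcal(S)$; this membership then propagates backward, since every $W_i$ is a MAG-parent of the ancestrally selected $Y$, hence $W_i \in \anc_\Gcal(\{Y\}\cup S) \subseteq \anc_\Gcal(S)$. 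Now if $X$ is also ancestrally selected (or $X = \zeta$, using \cref{lem:induced_in_target}), \cref{lem:induced_in_pk} forces $X$ and $Y$ to be adjacent, contradicting the defining property of a discriminating path that its endpoints are non-adjacent; and if $X$ is not ancestrally selected, the edge into $W_1$ cannot be induced (induced adjacencies need both endpoints in $\anc_\Gcal(S)$), so it is a genuine edge of $\Gcal$ oriented $X \rightarrow W_1$, which again yields $X \in \anc_\Gcal(\{W_1\}\cup S) \subseteq \anc_\Gcal(S)$, a contradiction. Without these two facts---bidirected edges certify $\des_\Gcal(I) \cap \anc_\Gcal(S)$, and that certification propagates along the path until it collides with the required non-adjacency of the endpoints---your sufficiency direction does not go through, and this is essentially all of the technical content of the paper's proof of this theorem.
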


Below let us examine such Markov equivalence and its graphical criteria on~\cref{example:clinical_X1_X2_independent,example:chain_X123S}:

\begin{example}\label{example:some_equivalence_classes}
    In the clinical example, let $\Gcal$ be the DAG in~\cref{fig:model_define_G} and $\Gcal'$ the DAG $2 \rightarrow 1$ without selection. For $\Ical = \{\varnothing, \{1\}\}$, the equivalence $(\Gcal, \Ical)\sim (\Gcal', \Ical)$ holds, i.e., intervening on one variable cannot identify if the two variables are causally related or just correlated by selection. However, adding an intervention on the other variable can identify this distinction: $(\Gcal, \Ical)\not\sim (\Gcal', \Ical)$, for $\Ical = \{\varnothing, \{1\}, \{2\}\}$. In the pest control example, let $\Hcal$ be the DAG in~\cref{fig:model_define_H} and $\Hcal'$ the DAG $1\rightarrow 2 \rightarrow S_1 \leftarrow 3$. For $i=1,2$, $(\Hcal, \{\varnothing, \{i\}\}) \not\sim (\Hcal', \{\varnothing, \{i\}\})$, i.e., interventions on upstream variables can distinguish the two, but not on downstream $X_3$. With unknown targets, however, upstream interventions may still leave the two indistinguishable, e.g., $(\Hcal, \{\varnothing, \{1\}\}) \sim (\Hcal', \{\varnothing, \{1,3\}\})$.\looseness=-1
\end{example}

\vspace{-1.05em}
\section{Algorithm: Interventional Causal Discovery under Selection}
\label{sec:approach}
\vspace{-0.5em}
In this section, we develop Algorithm~\hyperref[algo:just_algo]{1}, named \underline{C}ausal \underline{D}iscovery from \underline{I}nterventional data under potential \underline{S}election bias (CDIS). Using the twin graph framework and Markov properties from~\cref{sec:model}, this algorithm learns causal relations and selection structures up to the equivalence class, from interventional data with soft interventions, unknown targets, and potential selection bias. We assume causal sufficiency and faithfulness, i.e., no CIs beyond those implied by the graph (\cref{thm:i_markov_property}).

 A first thought might be to obtain adjacencies from observational data $p^{(0)}$, as it provides the sparsest skeleton (\cref{lem:additional_dependencies_in_p0,lem:even_more_dependencies_in_pk}). Then, one could form v-structures involving intervention indicators by checking conditional (in)variances, and use these v-structures for further orientation on the sparsest skeleton. However, as we will show, this seemingly intuitive approach can lead to false discoveries:

\begin{example}\label{example:FCI_false_propogate}
    Consider the DAG $\Gcal = 1\rightarrow 2 \rightarrow S_1 \leftarrow 3$ with targets $\Ical=\{\varnothing, \{1\}\}$. CIs in $p^{(0)}$ first yield a skeleton $1\dashdash 2\dashdash 3$, and the target is identified as $\zeta \rightarrow \{1,2\}$. Since $p(X_3)$ does not change ($\zeta \indep X_3$), the v-structure $  \zeta \rightarrow 2 \leftarrow 3 $ is formed. Now, if we directly apply orientation rules~\citep{meek1995causal} on the skeleton, the non-collider $  1 \leftarrow 2 \leftarrow 3 $ will be oriented, leading to a false edge $1 \leftarrow 2$.\looseness=-1  %
\end{example}

\cref{example:FCI_false_propogate} highlights the pitfalls of directly applying orientations on adjacencies obatined from pure observational data, even if they are the sparsest and closest to truth. Instead, orientations must be applied on denser adjacencies from interventional data to prevent false propagation, and then used to refine edges in the sparsest skeleton. Our CDIS algorithm is built on this principle.\looseness=-1

\update{The pseudocode for CDIS is detailed in Algorithm~\hyperref[algo:just_algo]{1} in~\cref{app:algorithm} due to page limit. Below we provide a high-level summary.} CDIS consists of the following three steps:

\begin{itemize}[noitemsep,topsep=0pt,left=25pt]
    \item[\textbf{Step 1.}] \textbf{Maximal orientation from pure observational data.} Run FCI~\citep{zhang2008completeness} on $p^{(0)}$, we obtain the maximal information possible from $p^{(0)}$ only, represented by a PAG~\footnote{\vspace{-1.6em}Partial ancestral graph (PAG) is a graph to represent a class of MAGs. See definitions in~\cref{app:algorithm}.} $\hat{\mathcal{M}^{(0)}}$.\looseness=-1
    \item[\textbf{Step 2.}] \textbf{Maximal orientation from interventional data.} For each $k=1,\cdots,K$, orientations are derived from pooled data $(p^{(0)},p^{(k)})$, represented by a PAG $\hat{\mathcal{M}^{(k)}}$ over $[D]\cup \{\zeta\}$. Significant pruning is applied since conditional dependencies from $p^{(0)}$ must hold in $p^{(k)}$.
    \item[\textbf{Step 3.}] \textbf{Refinement using interventional twin graph-specific criteria.} Guided by the specific construction rules in~\cref{lem:induced_in_pk,lem:induced_in_target,lem:orient_mag}, information from $\hat{\mathcal{M}^{(k)}}$ are used to orient uncertain edges in $\hat{\mathcal{M}^{(0)}}$, i.e., to narrow down the equivalence class about the true model. Updated $\hat{\mathcal{M}^{(0)}}$ then guides further refinement on $\hat{\mathcal{M}^{(k)}}$, iterating until no new orientations are possible.
\end{itemize}

We show that CDIS correctly identifies and distinguishes causal relations and selection mechanisms:

\begin{restatable}[\textbf{Soundness of CDIS}]{theorem}{THMSOUNDNESSOFALGORITHM}\label{thm:soundness_of_algo1}
    Let $\hat{\mathcal{M}}^{(0)}$ be the output PAG of Algorithm~\hyperref[algo:just_algo]{1} with oracle CI tests on interventional data $\{p^{(k)}\}_{k=0}^K$ given by $(\Gcal, \Ical)$. Then, $\hat{\mathcal{M}}^{(0)}$ is consistent with MAG $\mathcal{M}_\Gcal$. Specifically,\looseness=-1
    \begin{itemize}[noitemsep,topsep=-3pt,left=10pt]
    \item[1.] For any $i\rightarrow j \in \hat{\mathcal{M}}^{(0)}$, there is also $i\rightarrow j \in \Gcal$, and $j$ is not ancestrally selected in $\Gcal$.
    \item[2.] For any $i\dashdash j \in \hat{\mathcal{M}}^{(0)}$, both $i$ and $j$ are ancestrally selected in the true DAG $\Gcal$.
    \end{itemize}
\end{restatable}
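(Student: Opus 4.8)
The plan is to reduce the soundness claim to purely graphical statements about the MAG $\mathcal{M}_\Gcal$ and the interventional MAGs $\mathcal{M}_{\Gcal^{(I^{(k)})}}$, and then to verify that every orientation CDIS produces agrees with $\mathcal{M}_\Gcal$. First I would pin down the correspondence between the oracle PAGs and the true MAGs. By \cref{thm:i_markov_property} together with faithfulness, the conditional independencies in $p^{(0)}$ are exactly the m-separations of the MAG built on $X$ from $\Gcal^{(\varnothing)}$, and by \cref{lem:induced_in_p0} this MAG coincides with $\mathcal{M}_\Gcal$. Hence, after Step~1, soundness and completeness of FCI give that $\hat{\mathcal{M}}^{(0)}$ is the FCI-PAG of $\mathcal{M}_\Gcal$; in particular its adjacencies are fixed here and equal those of $\mathcal{M}_\Gcal$, and Steps~2--3 only add orientations. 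Likewise, for each $k$, both parts of \cref{thm:i_markov_property} identify the CIs and invariances of the pooled data $(p^{(0)},p^{(k)})$ with the m-separations of $\mathcal{M}_{\Gcal^{(I^{(k)})}}$ over $X\cup\{\zeta\}$, so $\hat{\mathcal{M}}^{(k)}$ is a sound PAG for that MAG. This reduces the theorem to two things: the two structural claims hold for $\mathcal{M}_\Gcal$, and every orientation CDIS transfers between the $\hat{\mathcal{M}}^{(k)}$ and $\hat{\mathcal{M}}^{(0)}$ is consistent with $\mathcal{M}_\Gcal$.

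Next I would prove the two structural claims directly from the MAG orientation rule (\cref{def:mag_step_2_orientations}). For a directed edge $i\rightarrow j\in\hat{\mathcal{M}}^{(0)}$, since a definite orientation in a PAG is shared by every MAG in its equivalence class, $i\rightarrow j$ also holds in $\mathcal{M}_\Gcal$; \cref{def:mag_step_2_orientations} then forces $j\notin\anc_\Gcal(\{i\}\cup S)$, hence $j\notin\anc_\Gcal(S)$, which is the claim that $j$ is not ancestrally selected. To upgrade this to $i\rightarrow j\in\Gcal$, I would use \cref{lem:induced_in_p0}: the adjacency of $i,j$ is either a true edge of $\Gcal$ or is induced by a common ancestrally-selected child; in the latter case both endpoints lie in $\anc_\Gcal(S)$ and \cref{def:mag_step_2_orientations} makes the edge undirected, contradicting $i\rightarrow j$. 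So the adjacency is a true edge, and its orientation as $i\rightarrow j$ rather than $i\leftarrow j$ forces $i\rightarrow j\in\Gcal$. The undirected case is symmetric: $i\text{ --- }j\in\hat{\mathcal{M}}^{(0)}$ forces both $i\in\anc_\Gcal(\{j\}\cup S)$ and $j\in\anc_\Gcal(\{i\}\cup S)$ in $\mathcal{M}_\Gcal$, and acyclicity of $\Gcal$ rules out the mutual-ancestor option, leaving $i,j\in\anc_\Gcal(S)$.

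The crux, and the step I expect to be hardest, is showing the iterative refinement of Steps~2--3 is sound. I would argue by induction on the refinement iterations that the invariant ``every arrowhead and tail placed in $\hat{\mathcal{M}}^{(0)}$ and in each $\hat{\mathcal{M}}^{(k)}$ agrees with $\mathcal{M}_\Gcal$ and $\mathcal{M}_{\Gcal^{(I^{(k)})}}$ respectively'' is preserved. The base case is the soundness of FCI in Step~1. For the inductive step, each transfer rule the algorithm applies is licensed by the explicit construction of $\mathcal{M}_{\Gcal^{(I)}}$ in \cref{lem:induced_in_pk,lem:induced_in_target,lem:orient_mag}, which deterministically relates the adjacencies and orientations of $\mathcal{M}_{\Gcal^{(I^{(k)})}}$ to those of $\mathcal{M}_\Gcal$ through the sets $\des_\Gcal(I^{(k)})$ and $\anc_\Gcal(S)$. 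I would verify, case by case over the four orientation cases of \cref{lem:orient_mag}, that reading an orientation off $\hat{\mathcal{M}}^{(k)}$ and importing it into $\hat{\mathcal{M}}^{(0)}$, and conversely, never contradicts $\mathcal{M}_\Gcal$.

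Two delicate points drive the difficulty. First, orientations must be applied on the denser interventional adjacencies of $\mathcal{M}_{\Gcal^{(I^{(k)})}}$ (\cref{lem:induced_in_pk}), not on the sparser observational skeleton, precisely to avoid the false propagation exhibited in \cref{example:FCI_false_propogate}; I must confirm that the transfer only ever refines $\hat{\mathcal{M}}^{(0)}$ at edges already present in $\mathcal{M}_\Gcal$. Second, direct targets are not identifiable (\cref{lem:induced_in_target}), so the algorithm can only exploit the pseudo-target adjacencies $\zeta\rightarrow X_j$ for $j\in I\cup(\des_\Gcal(I)\cap\anc_\Gcal(S))$, and I must check these suffice to justify each orientation without ever asserting a spurious causal direction. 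Maintaining the soundness invariant across the fixpoint iteration then shows the final $\hat{\mathcal{M}}^{(0)}$ is consistent with $\mathcal{M}_\Gcal$, which, combined with the structural claims of the previous paragraph, completes the argument.
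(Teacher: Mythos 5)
Your reduction and your treatment of the two structural claims are sound and essentially match what the paper relies on: with oracle CIs, \cref{thm:i_markov_property} and faithfulness identify the PAGs with the true MAGs, and claims 1--2 then follow from \cref{def:mag_step_2_orientations} together with \cref{lem:induced_in_p0} exactly as you argue. However, your inductive invariant---``every arrowhead and tail placed agrees with the true MAGs''---cannot be established the way you propose, because Algorithm~\hyperref[algo:just_algo]{1} does not run plain FCI: it runs \texttt{FCI$^+$}, which additionally orients \emph{every} $\circarrow$ edge as $\rightarrow$. This extra rule is unsound for general MAGs (a $\circarrow$ mark in a PAG may correspond to $\leftrightarrow$ in the true MAG, which is precisely why FCI leaves the circle), and by \cref{lem:orient_mag} the interventional MAGs $\mathcal{M}_{\Gcal^{(I^{(k)})}}$ \emph{can} contain bidirected edges (between two affected, ancestrally selected, ancestrally unrelated variables). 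So each application of \texttt{FCI$^+$} in Steps 1, 3.1, and 3.3 threatens to place a false tail, and neither ``soundness of FCI'' (your base case) nor your case analysis of the \emph{transfer} rules between $\hat{\mathcal{M}}^{(k)}$ and $\hat{\mathcal{M}}^{(0)}$ addresses this: the tail is introduced inside a single PAG, before any transfer happens.

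This missing step is in fact the entire substance of the paper's proof. For the observational PAG the fix is easy ($\mathcal{M}_{\Gcal^{(\varnothing)}}$ has no bidirected edges since there are no latent confounders among $[D]$, so any $\circarrow$ must be $\rightarrow$ in truth). For the interventional PAGs the paper shows that although $\mathcal{M}_{\Gcal^{(I^{(k)})}}$ may contain $X_i \leftrightarrow X_j$, such an edge can never surface as a definite arrowhead configuration in the PAG: by \cref{lem:orient_mag} both endpoints then have $\zeta$ as a parent, and any third vertex $X_k$ whose edge into $X_j$ could create the v-structure needed to identify the arrowheads is forced, via the inducing-path characterization of \cref{lem:induced_in_pk}, to be adjacent to $X_i$ as well---yielding a triangle whose orientations stay circles. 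Hence $\circarrow$ marks in the oracle PAGs never sit on true bidirected edges, and only then is \texttt{FCI$^+$} sound and your invariant maintainable. Without this triangle argument (or an equivalent one), your proof does not go through.
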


The soundness of CDIS follows from the soundness of FCI rules, but its completeness--whether all invariant edges in the equivalence class are identified--remains uncertain to us. Through brutal search on $\sim 50,000$ $(\Gcal,\Ical)$ pairs with up to $15$ vertices, we have not found any incomplete example, and thus \update{we conjecture its completeness. However, proving this is challenging}, since the refining step relies on our graphical criteria (\cref{lem:orient_mag}) as background knowledge, for which no complete rules exist yet~\citep{andrews2020completeness,wang2024efficient,venkateswaran2024towards}. This technical difficulty mirrors also to FCI itself: though FCI has been widely adopted since~\citet{spirtes1999algorithm}, its completeness result (without background knowledge) was not established until~\citet{ali2005towards} (partially) and~\citet{zhang2008completeness} (fully). Notably, in our scenario, FCI is guaranteed complete on pure observational data, while CDIS provides more information than that.\looseness=-1

\vspace{-0.3em}
\section{Experiments and Results}
\label{sec:experiments}

\vspace{-0.8em}
In this section, we present empirical studies on simulations and real-world data to demonstrate that our algorithm effectively identifies true causal relations despite the presence of selection bias.

\subsection{Simulations}
We conduct simulations to validate the soundness of our proposed method, CDIS, and to explore the validity of its conjectured completeness. We compare CDIS against existing methods such as GIES~\citep{hauser2012characterization}, IGSP~\citep{wang2017permutation}, UT-IGSP~\citep{squires2020permutation}, CD-NOD~\citep{huang2020causal}, and the JCI-GSP used in~\citet{squires2020permutation} as an extension of JCI~\citep{mooij2020joint} with GSP~\citep{solus2021consistency}. Further details are described in \cref{app:experiments}.

We follow the data generating procedure outlined in \cref{def:interventional_twin_network}. Specifically, we begin by randomly sampling Erd\"{o}s--R\'{e}nyi \citep{erdos1959random} graphs with an average degree of $2$ as the ground truth DAG for $\{X_i^*\}_{i=1}^D$. Next, to represent how multiple selection mechanisms operate simultaneously, we introduce $\lfloor D/5\rfloor$ selection variables $S_1^*,\dots,S_{\lfloor D/5\rfloor}^*$, where each $S_i^*$ is randomly assigned one or two parent variables from $\{X_i^*\}_{i=1}^D$. Note that we do not allow the last $\lfloor D/2\rfloor$ variables in the causal ordering to be involved in selection, because doing so could result in the corresponding PAGs having an excessive number of `$\circcirc$' edges, making them overly uninformative.

We then simulate linear SEMs for $\{X_i^*\}_{i=1}^D$ with exogenous noise terms $\{\epsilon_i^*\}_{i=1}^D$. Sample selection is performed based on each $S_i^*$, as a linear sum of its parents in $X^*$ and an independent noise, falling within a predefined interval and ensuring the desired sample size ($5,000$ samples after selection). Here, the linear edge coefficients are sampled from $\operatorname{Unif}([-2,-0.5]\cup[0.5, 2])$, and the variances of exogenous noise terms are sampled from $\operatorname{Unif}[1,4]$. Finally, using the exogenous noise terms $\{\epsilon_i^*\}_{i=1}^D$ of the selected samples, we simulate interventional data over $\{X_i\}_{i=1}^D$ under different targets. We simulate a total of $\lfloor D/2\rfloor$ interventions, each with one variable being intervened on.

In order to show how CDIS reliably identifies correct causal relations, we report the precision, recall, and F1 score of the PAG `$\rightarrow$' edges estimated on the simulated data, as compared to the true `$\rightarrow$' edges that can be theoretically discovered, indicated by~\cref{thm:soundness_of_algo1}.
Furthermore, we report the accuracy of the edgemark of the estimated PAGs, as well as the structural Hamming distance (SHD) of its corresponding skeleton, as compared to the true PAGs. For the CPDAGs estimated by the baselines, we treat their directed and undirected edges as `$\rightarrow$' and `$\circcirc$' edges, respectively.

The three main metrics are given in \cref{fig:synthetic_data_result_main_paper}, while the remaining ones are available in \cref{fig:synthetic_data_result_appendix} in~\cref{app:experiments}. The experimental results demonstrate that CDIS\footnote{A Python implementation of CDIS is available at \url{https://github.com/MarkDana/CDIS}.} outperforms the baselines across most settings and metrics, especially for precision.
Notably, the average precision of our algorithm exceeds $0.7$ across most configurations. In contrast, the baselines generally achieve precision below $0.65$ (with the exception of UT-IGSP in some instances). These observations validate the soundness of CDIS in identifying true causal relations, while suggesting that other methods may falsely infer spurious causal relations, possibly due to selection bias, as illustrated in \cref{example:clinical_X1_X2_independent,example:chain_X123S}.

\begin{figure*}[!h]
\centering
\includegraphics[width=1\textwidth]{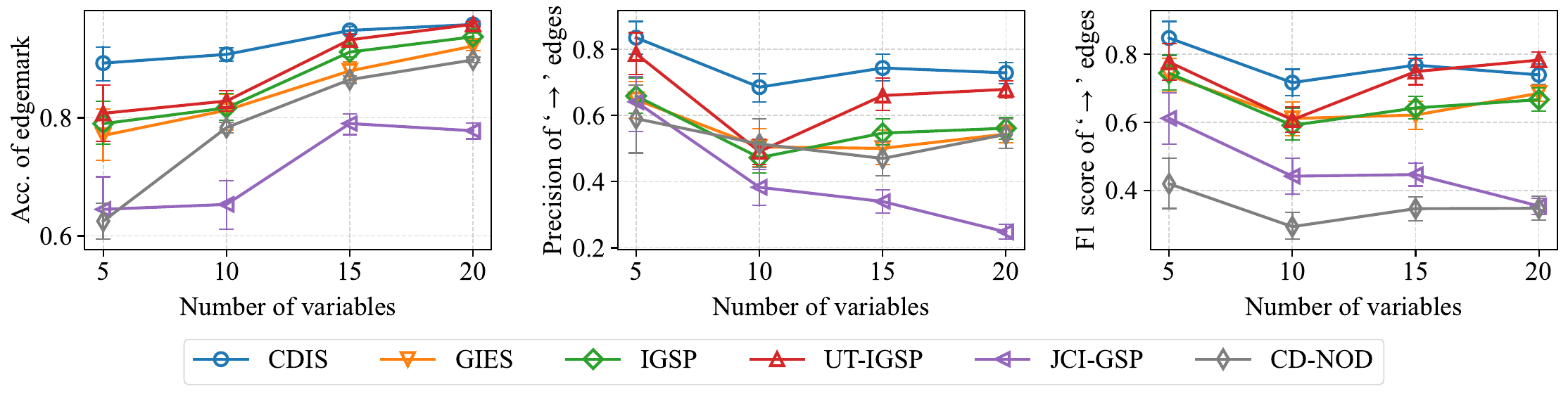}
\vspace{-1.5em}
\caption{Empirical results across different numbers of variables. Here, the first figure shows the accuracy of the edgemark of the estimated PAGs. The rest two figures show the precision and F1 score of the `$\rightarrow$' edges in PAGs. The error bars illustrate the standard errors from $20$ random simulations.}
\label{fig:synthetic_data_result_main_paper}
\end{figure*}

\subsection{Real-world Applications}
We evaluate the gene regulation networks (GRNs) of 24 \update{previous reported essential regulatory genes encoding different transcription factors (TFs)} \citep{yang2018characterizing} using a single-cell perturbation data, i.e., sciPlex2 \citep{peidli2024scperturb}, where the A549 cells, a human lung adenocarcinoma cell line, are either exposed to one of the four different transcription modulators, including dexamethasone, nutlin-3a, BMS-345541, and suberoylanilide hydroxamic acid (SAHA), or simply treated with dimethylsulfoxide vehicle as control \citep{srivatsan2020massively}. \update{As shown in \cref{fig:gene}, we discovered some validated regulatory relationships like \textit{RELA} $\to$ \textit{RUNX1} and \textit{JUNB} $\to$ \textit{MAFF}, since RELA has been implicated as a \textit{RUNX3} transcription regulator \citep{zhou2015epstein} and \textit{Maf} family was upregulated by \textit{JunB} \citep{koizumi2018junb}, despite the extensive co-regulation between these two genes \citep{kataoka1994maf}. A more detailed analysis of these results is provided in~\cref{app:experiments}.}

We also apply CDIS to an educational dataset \update{(\cref{tab:edu_var})}, from a random controlled trial evaluating the effects of incentives and services on college freshmen's academic achievements \citep{angrist2009incentives}. First-year undergraduates are randomly assigned to either the control group or one of the three treatment arms: a service strategy (Student Support Program, SSP) with both peer-advising service and supplemental instruction service in facilitated study groups; an incentive strategy (the Student Fellowship Program, SFP) with the opportunity to win merit scholarships based on academic achievements; and an intervention offering both named SFSP. \update{As depicted in \cref{fig:education}, subgroup analysis stratified by genders indicate that SSP only improves the women's performance while SFP shows effects only on men (see \cref{fig:education_genders}).} The results indicates that interventions exhibit genuine heterogeneous treatment effects on college students' academic performance, with the moderating effect of gender rather than being attributed to selection bias based on gender.

\section{Conclusion and Limitations}
\label{sec:conclusions}
\vspace{-0.5em}

We introduce a new problem setup for interventional causal discovery with selection bias. We show how and why existing models fail to represent the data, propose a new model to capture intervention and selection dynamics, characterize Markov properties and a criterion for equivalence, and develop a sound algorithm called CDIS for identifying causal relations and selection mechanisms.

One could naturally extend the graph by introducing $S$ also to the reality world to model a new round of post-intervention selection, e.g., \textit{lost to follow-up}~\citep{akl2012potential}. Another extension can be the causal inference results based on our model. While we provide the graphical criteria to determining equivalence, a graphical representation for the equivalence class is to be developed. Also, the completeness guarantee of the CDIS algorithm, though hypothesized, is yet to be proven.

\section*{Acknowledgments}
We would like to acknowledge the support from NSF Award No. 2229881, AI Institute for Societal Decision Making (AI-SDM), the National Institutes of Health (NIH) under Contract R01HL159805, and grants from Quris AI, Florin Court Capital, and MBZUAI-WIS Joint Program. ZT is supported by the National Institute of Justice (NIJ) Graduate Research Fellowship, Award No. 15PNIJ-24-GG-01565-RESS.

\newpage
\appendix

\numberwithin{equation}{section}

\normalsize

\section{Proofs of Main Results}\label{app:proofs}

Since the technical development of this paper is fundamentally built upon the characterization of maximal ancestral graphs (MAGs), we focus on proving only those results that do not directly follow from the basic properties of MAGs. Specifically, the results stated in~\cref{thm:i_markov_property},~\cref{lem:additional_dependencies_in_p0},~\cref{lem:even_more_dependencies_in_pk},~\cref{lem:induced_in_p0}, and~\cref{lem:orient_mag} are immediate consequences of the definition of MAGs\citep{richardson2002ancestral,zhang2008completeness}. Therefore, their proofs are omitted.\\

\LEMINDUCEDINPK*
\begin{proof}[Proof of~\cref{lem:induced_in_pk}]
    For any $i,j \in [D]$, $X_i$ and $X_j$ are adjacent in the MAG $\mathcal{M}_{\Gcal^{(I)}}$, if and only if there is an inducing path between $X_i$ and $X_j$ in the interventional twin DAG $\Gcal^{(I)}$. We now consider this DAG $\Gcal^{(I)}$. We consider by cases where $i,j$ are affected by the intervention or not.
\begin{itemize}[noitemsep,topsep=0pt,left=10pt]
    \item[1.] Consider cases where both $i$ and $j$ are not affected, i.e., $X_i$ and $X_j$ are not splited in the DAG $\Gcal^{(I)}$. When $X_i$ and $X_j$ are adjacent in the observational MAG $\mathcal{M}_{\Gcal^{(\varnothing)}}$, according to~\cref{lem:induced_in_p0}, either $i$ and $j$ are adjacent in $\Gcal$ or they have a common child that is in $\anc_\Gcal(S)$. Therefore, they still have this inducing path in between in the DAG $\Gcal^{(I)}$. When the two are not adjacent themselves, but there is a path $(X_i, X_{l_1}, \cdots, X_{l_m}, X_j)$ in $\mathcal{M}_{\Gcal^{(\varnothing)}}$ where $\{i, l_1, \cdots, l_m, j\} \subset \anc_\Gcal(S)$ and $\{l_1, \cdots, l_m\} \subset \des_\Gcal(I)$: then in the DAG $\Gcal^{(I)}$ all of $\{l_1, \cdots, l_m\}$ are splited into $\{X_{l_1}^*, \cdots, X_{l_m}^*\}$. For each two adjacent vertices on this path, we know that are either adjacent or have a common child that is an ancestor of $S^*$ in $\Gcal^{(I)}$. We can then consider this path connecting them on $\Gcal^{(I)}$ (note that we can safely assume the common children, if any, are different to each other, as if, for example, $l_1,l_2$ and $l_2,l_3$ have a same common child to render them adjacent, then we can always build the path $(l_1,l_3,\cdots)$ to bypass $l_2$, where $l_1,l_3$ must also be adjacent in $\mathcal{M}_{\Gcal^{(\varnothing)}}$). On this path, except for the two end $X_i$ and $X_j$, all the vertices are not observed. All the colliders must be ancestor of $S^*$. Therefore, this is an inducing path between $X_i$ and $X_j$ in $\Gcal^{(I)}$.\\

    In above, we have proved the `$\Leftarrow$' direction. For the `$\Rightarrow$'direction, consider the inducing path between $X_i$ and $X_j$. When there are more than two nontrivial vertices on it, they must be all in $X^*$, i.e., all the intermediate vertices are affected and splited. Otherwise, suppose there is an observed $X_c$ on the path. For it to be a valid inducing path, $X_c$ has to be a collider, $X_i \cdots \rightarrow X_c \leftarrow \cdots X_j$. If this path consists of only $X_i \rightarrow X_c \leftarrow  X_j$, $X_c$ must be ancestrally selected, leading to the case of adjacent $X_i, X_j$ in $\mathcal{M}_{\Gcal^{(\varnothing)}}$. Otherwise, when there are more than $X_c$ on the path, the intermediate vertex next to $X_c$ must also not be affected (and is thus observed), as it is a parent of $X_c$. However, this observed variable cannot be a collider, making this not an inducing path anymore. Similarly, we can show all the intermediate vertices and $X_i,X_j$ must be ancestrally selected.\\

    \item[2.] Consider cases where both $i$ and $j$ are affected, i.e., $X_i$ and $X_j$ are both splited in the DAG $\Gcal^{(I)}$. Consider the path $(X_i \leftarrow \epsilon_i \rightarrow X_i^*, X_{l_1}^*, \cdots, X_{l_m}^*, X_j^* \leftarrow \epsilon_j  \rightarrow X_j )$ in the DAG $\Gcal^{(I)}$, all the colliders must be in the `$*$' part, then similar to 1., this is also be an inducing path.\\
    
    For the `$\Rightarrow$' direction, since the right reality world is not ancestor of the $S^*$ variables, the inducing path, if any, must route back to the `$*$' world (otherwise, there are observed but non-collider vertices). The rest is similar to that of [1.].\\

    \item[3.] For cases where one of $i$ and $j$ is affected and the other is not, similar to 1. and 2., the inducing path can also be built. Also, the `$\Rightarrow$' direction goes in a same way.
\end{itemize}
\end{proof}

\LEMINDUCEDINTARGET*
\begin{proof}[Proof of~\cref{lem:induced_in_target}]
    \phantom{}

\begin{itemize}[noitemsep,topsep=0pt,left=10pt]
    \item[1.] \textbf{(`$\Leftarrow$' direction)}
    \begin{itemize}[noitemsep,topsep=0pt,left=5pt]
        \item If $j \in I$, there is a direct edge $\zeta \rightarrow X_j$ in $\Gcal^{(I)}$, and so this directed edge is also in $\mathcal{M}_{\Gcal^{(I)}}$. Hence, $\zeta$ cannot be m-separated from $X_j$ given any $X_C$.
        \item Otherwise, $j \in \des_\Gcal(I) \cap \anc_\Gcal(S)$. Consider the directed path from $\zeta$ to $X_j$ on the DAG $\Gcal^{(I)}$. For any $X_C\subset X \backslash \{X_j\}$,
        \begin{itemize}[noitemsep,topsep=0pt,left=5pt]
            \item If $X_C$ contains no vertices on this path, then this path remains open, and thus $\zeta$ and $X_j$ are not d-separated by $X_C$ and $S^*$.
            \item Otherwise, let $X_l$ be the first vertex on this path that is in $X_C$. Consider the path $\zeta \rightarrow \cdots \rightarrow X_l \leftarrow \epsilon_l \rightarrow X_l^* \rightarrow \cdots X_j^* \leftarrow \epsilon_j \rightarrow X_j$. The collider $X_l$ is conditioned on, and the collider $X_i^*$ has its descendant in $S^*$ conditioned on. All the noncolliders from $\zeta$ to $X_l$ are not conditioned on. All the noncolliders from $X_l^*$ to $X_j^*$ are either not conditioned on (since they are changed by intervention). Therefore, this path remains open, and thus $\zeta$ and $X_j$ are not d-separated by $X_C$ and $S^*$.
        \end{itemize}
    \end{itemize}

    \item[2.] \textbf{(`$\Rightarrow$' direction)} By contradiction, suppose $j \not\in  I \cup (\des_\Gcal(I) \cap \anc_\Gcal(S))$.
    \begin{itemize}[noitemsep,topsep=0pt,left=5pt]
        \item If $j \not\in \des_\Gcal(I)$, i.e., intervention $\zeta$ doesn't change $X_j$, then $X_j$ and $\zeta$ are d-separated by empty set ($X_j$ is not even splited). Contradiction.
        \item Otherwise, $j \in \des_\Gcal(I)$, but $j$ is not an ancestor of any $S$ in $\Gcal$. Let $C$ be $\pa_\Gcal(i)$. We now show that this $X_C$ m-separates $\zeta$ from $X_j$. Consider any path $p$ between $\zeta$ and $X_j$ in $\Gcal^{(I)}$.
        \begin{itemize}[noitemsep,topsep=0pt,left=5pt]
            \item If $p$ contains no vertices in the counterfactual $*$ world, then by Markov condition ($X_j$ is d-separated from its non-descendant $\zeta$ by its parents), $p$ is blocked by $X_C$.
            \item Otherwise, let $X_l$ be the last vertex on $p$ that comes out from the counterfactual world, i.e., $p$ is in the form of $\zeta \rightarrow \cdots \text{---} X^*_l \leftarrow \epsilon_l \rightarrow X_l \text{---} \cdots \text{---} X_j$. For this path to be open, at least the part between $X_l$ and $X_j$ should be open. So, similarly by the Markov condition, $X_l$ must be $X_j$'s descendant. Now obviously on $p$ between $\zeta\rightarrow$ and $\leftarrow\epsilon_l$ there must be a collider. Let $A^*$ be first collider counting from $\epsilon_l$, i.e., $\cdots\rightarrow A^* \leftarrow \cdots \leftarrow X^*_l \leftarrow \epsilon_l$. We use $A^*$ because it must be in the counterfactual world, and can be either among $X^*$ or $S^*$. $A^*$ can be $X_l^*$ itself. To remain $p$ open, $A^*$ has to be an ancestor of $S^*$. So $X_l^*$ is an ancestor of $S^*$. So $X_j^*$ is an ancestor of $S^*$, contradicted to our presumption that $j \not\in \anc_\Gcal(S)$. So such an open $p$ does not exist.
        \end{itemize}
    \end{itemize}
\end{itemize}
\end{proof}

\THMGRAPHICALCRITERIAEQUIVALENCE*
\begin{proof}[Proof of~\cref{thm:graphical_criteria_for_markov_equivalence}]

The Markov equivalence $(\Gcal, \Ical)\sim (\Hcal,\Jcal)$ holds if and only if the MAGs on $X \cup \{\zeta\}$ of each interventional twin graphs $\Gcal^{I^{(k)}}$ and $\Hcal^{J^{(k)}}$ encode the same m-separations. According to the MAG characterization, this means that they have the same adjacencies, the same v-structures, and if a path $p$ is a discriminating path for a vertex $V$ in both graphs, then $V$ is a collider on the path in one graph if and only if it is a collider on the path in the other. Our condition covers the first two cases, and now we show that we do not need to consider the third case.

We show that in any discriminating path $(X,W_1,\cdots,W_m,V,Y)$ in a MAG of a interventional twin graph (for detailed definition about the discriminating path, please refer to Definition 7 in~\citep{zhang2008completeness}. Here we abuse the notation $X$ to denote one vertex, to be consistent with the definitions there), $V$ must not be a collider, i.e., it must be $V\rightarrow Y$. Suppose that $V$ is a collider, it must be $\cdots \rightarrow W_m \leftrightarrow V \leftrightarrow Y$, as otherwise there will be an almost directed cycle $W \rightarrow Y \rightarrow V \leftrightarrow W_m$. Then, according to~\cref{lem:orient_mag}, with those `$\leftrightarrow$' edges, all of $W_1,\cdots,W_m,V,Y$ are ancestrally selected, and are influenced by the intervention. Consider two cases:

\begin{itemize}[noitemsep,topsep=0pt,left=10pt]
\item[1$^\circ$] If $X$ is ancestrally selected, according to~\cref{lem:induced_in_pk}, the path $(X,W_1,\cdots,W_m,V,Y)$ will render $X$ and $Y$ induced adjacent, violating the definition for the discriminating path.
\item[2$^\circ$] Otherwise ($X$ is not ancestrally selected), it must be a true directed edge $X \rightarrow Y$ in the original $\Gcal$, as the condition for (nontrivial) induced adjacency is violated. Then, since $W_1$ is already ancestrally selected, it is impossible for $X$ to be not selected.
\end{itemize}

Then, we have shown that all $V$ nodes on discriminating paths must be non-colliders. Therefore, the equivalence of MAGs is given by the first two conditions: same adjacencies and v-structures.
\end{proof}

\THMSOUNDNESSOFALGORITHM*

\begin{proof}[Proof of~\cref{thm:soundness_of_algo1}]

The orientation rules of Algorithm~\hyperref[algo:just_algo]{1} directly follow from the characterization of MAGs in~\cref{lem:orient_mag}. The soundness of the algorithm follows from the soundness of FCI's orientation rules. The only aspect requiring further elaboration is the use of ``\texttt{FCI$^+$},'' where all $\circarrow$ edges are further oriented as $\rightarrow$ edges. This step relies on the fact that the PAGs contains no $\leftrightarrow$ edges.

For the purely observational PAG, this follows straightforwardly from the fact that in the MAG $\mathcal{M}_{\Gcal^{(\varnothing)}}$, there are no bidirected edges, as our framework assumes no latent variables beyond those indexed by $[D]$. However, the case of PAGs derived from the MAGs $\mathcal{M}_{\Gcal^{(I)}}$ needs additional explanation. 

According to~\cref{lem:orient_mag}, the corresponding MAGs $\mathcal{M}_{\Gcal^{(I)}}$ may indeed contain $\leftrightarrow$ edges. However, we show that such edges cannot be identified in the PAG.

Suppose there exists an edge $X_i \leftrightarrow X_j$ in the MAG $\mathcal{M}_{\Gcal^{(I)}}$, according to~\cref{lem:orient_mag}'s characterization on bidirected edges, there must be $X_i \leftarrow \zeta \rightarrow X_j$. For the arrowheads to be identifiable in the PAG, another v-structure must be present.

Assume, for instance, that there is also an edge $X_k \leftrightarrow X_j$. Then, by~\cref{lem:induced_in_pk}, the existence of an inducing path through $X_j$ implies that $X_i$ must also be adjacent to $X_k$ in the MAG. This results in a triangle $(X_i, X_j, X_k)$ whose orientation remains unidentifiable.

Consider the other scenario where $X_k \rightarrow X_j$. In this case, $X_k$ is also reachable from $X_j^*$ and subsequently from $X_i^*$ to $X_i$. This inducing path still renders $X_k$ and $X_i$ adjacent, so as to form a triangle where definitive determination of orientation becomes impossible.

\end{proof}

\section{Pseudocode for the CDIS Algorithm}\label{app:algorithm}
Before detailing CDIS, we introduce key notations. Like a CPDAG for DAGs, we use a partial ancestral graph (PAG~\citep{spirtes2000causation}) to represent a class of MAGs with same adjacencies. A PAG is a graph with six kinds of edges ($\dashdash$, $\rightarrow$, $\leftrightarrow$, $\circdash$, $\circcirc$, $\circarrow$) where non-circle marks indicate marks shared by all MAGs in the class. FCI~\citep{zhang2008completeness} is an algorithm that identifies the true PAG from data. In the pseudocode, \texttt{FAS} (fast adjacency search) denotes a function that takes in data and returns a PAG. Adjacencies are determined by CIs, with orientations as \(i \circarrow k \arrowcirc j\) for v-structures and \(i \circcirc j\) otherwise\footnote{\texttt{FAS} can be implemented by the PC algorithm followed by an additional step of skeleton refinement. For further details, see~\citep{spirtes2000causation}, p. 144.}. \texttt{FCI$^+$} denotes a function that takes in a PAG, recursively applies the ten FCI rules and an extra rule to orient all \(\circarrow\) as \(\rightarrow\), and returns the PAG when no further orientations can be made.\looseness=-1

The pseudocode for the CDIS algorithm is provided below:

\begin{algorithm}[H]\label{algo:just_algo}
\caption{\underline{C}ausal \underline{D}iscovery from \underline{I}nterventional data under potential \underline{S}election bias (CDIS)}
\KwIn{Observational and interventional data $\{p^{(k)}\}_{k=0}^K$ over $X_{[D]}$ with unknown targets.\looseness=-1}
\KwOut{A partially ancestral graph (PAG) over vertices $[D]$.}

\SetKwFunction{FAS}{FAS}
\SetKwFunction{FCIPlus}{FCI$^+$}

\textbf{Step 1: Get maximal orientation from pure observational data.} $\hat{\mathcal{M}}^{(0)} \gets \FCIPlus(\FAS(p^{(0)}))$.

\textbf{Step 2: Get MAG adjacencies from interventional data.}  
\For{$k \gets 1,  \cdots , K$}{
    $\hat{\mathcal{M}}^{(k)} \gets \FAS(p^{(0)}, p^{(k)})$, running PC on pooled data of $p^{(0)}$ and $p^{(k)}$ over $\{\zeta\} \cup [D]$, with CIs in forms of~\cref{thm:i_markov_property}. Pruning can be made: any adjacency in $\hat{\mathcal{M}}^{(0)}$ must appear in $\hat{\mathcal{M}}^{(k)}$.\looseness=-1
}

\textbf{Step 3: Refine $\hat{\mathcal{M}}^{(0)}$ using graphical criteria on MAGs of twin graphs.}  
\Repeat{no further orientations are found for $\hat{\mathcal{M}}^{(0)}$ in Step 3.2}{
    \textbf{Step 3.1: Orient $\hat{\mathcal{M}}^{(k)}$ based on current knowledge.}
    \For{$k \gets 1,  \cdots , K$}{
        \textbf{foreach} $i \rightarrow j \in \hat{\mathcal{M}}^{(0)}$ \textbf{do} Orient $i \rightarrow j$ in $\hat{\mathcal{M}}^{(k)}$, as suggested by~\cref{lem:orient_mag};
        
        \textbf{foreach} $i$ adjacent to $\zeta$, \textbf{do} Orient $\zeta \rightarrow i$ in $\hat{\mathcal{M}}^{(k)}$, as intervention indicator is exogenous;
        
        $\hat{\mathcal{M}}^{(k)} \gets \FCIPlus(\hat{\mathcal{M}}^{(k)})$, further orientation with above background knowledge edges;
    }
    \textbf{Step 3.2: Update $\hat{\mathcal{M}}^{(0)}$ using information from $\hat{\mathcal{M}}^{(k)}$.}
    \ForEach{adjacent $i , j$ in $\hat{\mathcal{M}}^{(0)}$}{
        \textbf{if} $\exists k$ such that $i \dashdash j \in \hat{\mathcal{M}}^{(k)}$ \textbf{then} Orient $i \dashdash j$ in $\hat{\mathcal{M}}^{(0)}$;
        
        \textbf{if} $\exists k $ such that $i \rightarrow j \in \hat{\mathcal{M}}^{(k)}$ \textbf{and} $i \circdash j \in \hat{\mathcal{M}}^{(0)}$ \textbf{then} Orient $i \dashdash j$ in $\hat{\mathcal{M}}^{(0)}$;

        \textbf{if} $\exists k $ such that $i \rightarrow j \in \hat{\mathcal{M}}^{(k)}$ \textbf{and} $p^{(0)}(X_i) \neq p^{(k)}(X_i)$ \textbf{then} Orient $i \rightarrow j$ in $\hat{\mathcal{M}}^{(0)}$;

        \textbf{if} $\exists k_1 \neq k_2$ such that $i \rightarrow j \in \hat{\mathcal{M}}^{(k_1)}$ \textbf{and} $i \leftarrow j \in \hat{\mathcal{M}}^{(k_2)}$ \textbf{then} Orient $i \dashdash j$ in $\hat{\mathcal{M}}^{(0)}$;
    }

    \textbf{Step 3.3: Further orient $\hat{\mathcal{M}}^{(0)}$ based on current knowledge.} $\hat{\mathcal{M}}^{(0)} \gets \FCIPlus(\hat{\mathcal{M}}^{(0)})$\;
    
}
\KwRet $\hat{\mathcal{M}}^{(0)}$
\end{algorithm}

\section{More Elaborations on Examples and Motivations}\label{app:elaborations}
\subsection{Simulation for the Pest Control~\cref{example:chain_X123S}}\label{app:simulate_example_chain_X123S}

\begin{figure}[h]
    \centering
    \begin{subfigure}[b]{0.23 \fixedtextwidth}
        \centering
        \includegraphics[height=0.23 \fixedtextwidth]{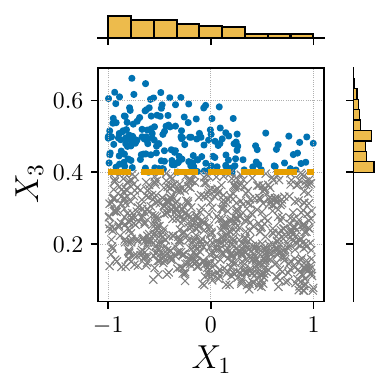}
        \caption{\scriptsize$p^{(0)}(X_1, X_3)$}
        \label{fig:chain_123S_simulation_a}
    \end{subfigure}
    \begin{subfigure}[b]{0.23 \fixedtextwidth}
        \centering
        \includegraphics[height=0.23 \fixedtextwidth]{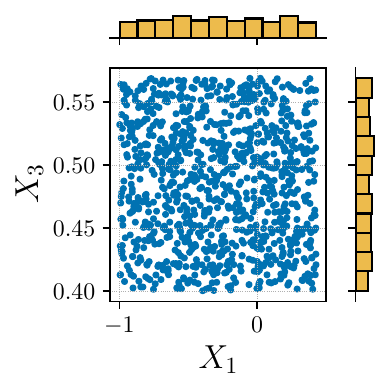}
        \caption{\scriptsize$p^{(0)}(X_1, X_3 | X_2 = x_2)$}
        \label{fig:chain_123S_simulation_b}
    \end{subfigure}
    \begin{subfigure}[b]{0.23 \fixedtextwidth}
        \centering
        \includegraphics[height=0.23 \fixedtextwidth]{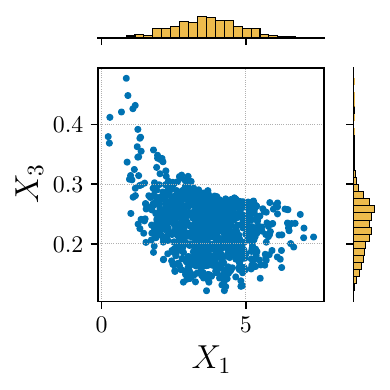}
        \caption{\scriptsize$p^{(1)}(X_1, X_3)$}
        \label{fig:chain_123S_simulation_c}
    \end{subfigure}
    \begin{subfigure}[b]{0.23 \fixedtextwidth}
        \centering
        \includegraphics[height=0.23 \fixedtextwidth]{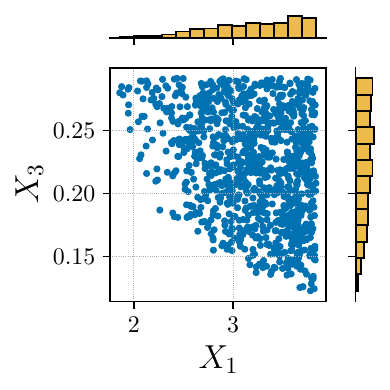}
        \caption{\scriptsize$p^{(1)}(X_1, X_3 | X_2 = x_2)$}
        \label{fig:chain_123S_simulation_d}
    \end{subfigure}
    
    \caption{(a) shows the scatterplot of $X_1;X_3$ in general population (both `\textcolor{gray}{$\times$}' and `\textcolor{mplblue}{$\bullet$}'), with only `\textcolor{mplblue}{$\bullet$}' individuals involved into study as $p^{(0)}$. (b) shows the scatterplot of $X_1, X_3$ in $p^{(0)}$ with $X_2$ conditioned on its mean value $x_2$, illustrating the conditional independence $X_1\indep X_3 | X_2$ in $p^{(0)}$. Applying an intervention to $X_1$ on the `\textcolor{mplblue}{$\bullet$}' individuals in (a), we get the interventional distribution $p^{(1)}$. (c) shows the scatterplot of $X_1;X_3$ in $p^{(1)}$. (d) shows the scatterplot of $X_1, X_3$ in $p^{(0)}$ with $X_2$ conditioned on its mean value $x_2$, illustrating that the intervention destroys the original condition independence, i.e., $X_1\not\indep X_3 | X_2$ holds in $p^{(1)}$.}
    \label{fig:chain_123S_simulation}
\end{figure}

Following~\cref{example:chain_X123S}, let the DAG $\Gcal$ be $1 \rightarrow 2 \rightarrow 3 \rightarrow S_1$. Data is simulated as follows:

\begin{align*}
    &E_1, E_2, E_3 \sim \mathcal{U}[-1, 1], \text{ independently;}\\
    &X_1 = E_1;\\
    &X_2 = \texttt{sigmoid}(X_1 + E_2);\\
    &X_3 = \texttt{sigmoid}(-2X_2 + E_3);\\
    &S_1 = \mathbbm{1}(X_3 > 0.4).
\end{align*}

Only individuals with $S_1 = 1$ values are involved into the study and get observed, corresponding to the `\textcolor{mplblue}{$\bullet$}' markers in $p^{(0)}$ in~\cref{fig:chain_123S_simulation_a}. For these individuals, an intervention to lift $X_1$ is made:

\begin{align*}
    &X_1 = X_1 + E', \text{with } E'\sim \mathcal{N}(4, 1).
\end{align*}

That is, individuals are expected to gets a lift of $4$ in its $X_1$ values, while a variance of $1$ is also given to model the randomness in applying real interventions. $X_2, X_3$ are then generated using the same generating functions and their same $E_2, E_3$ values as above. The scatterplots of the resulting interventional distribution $p^{(1)}$ are shown in~\cref{fig:chain_123S_simulation_c}.

To illustrate the condition independence relation $X_1 \indep X_3 | X_2$, we show in~\cref{fig:chain_123S_simulation_b} and~\cref{fig:chain_123S_simulation_d} the scatterplots of $X_1,X_3$ with $X_2$ conditioned on their specific mean values in $p^{(0)}$ and $p^{(1)}$, respectively. Clearly, $X_1 \indep X_3 | X_2$ holds in $p^{(0)}$ (though to be rigorous, the scatterplot on a single conditioned $x_2$ value is not enough), while this condition independence no longer holds in the interventional data $p^{(1)}$.

\subsection{Why not split every variable into counterfactual and reality vertices?}\label{app:why_not_split_every}
In our definition of the interventional twin graph (\cref{def:interventional_twin_network}), the graph is defined for each single intervention with target $I$, instead of on a collections of targets $\Ical$. This is different from the usual augmented graph setting where only one augmented graph is defined, with multiple exogenous intervention indicators $\zeta_1, \cdots, \zeta_K$. The specific reason why we do not choose to use only one combined graph is that, our definition of the graph depends on each specific $I$, namely, only variables that are affected from an intervention target $I$ are split. Questions then naturally arise: can we split all variables into the two worlds, as in typical twin graph models~\citep{balke1994probabilistic}? In this way, can we formulate a single graph that represents for the whole $\Ical$, which seems simpler?

In what follows, we show that a single graph with all vertices split is doable. However, in contrast to our expectation, it introduces more unnecessary complexities.

First we define this alternative model. Let $\mathbf{1}$ and $\mathbf{0}$ be vectors of all $1$s and all $0$s, respectively, and $\mathbbm{1}_k$ be the vector with $1$ at its $k$-th entry and $0$s elsewhere (by convention, $\mathbbm{1}_0 = \mathbf{0}$).

\begin{definition}[\textbf{Alternative one-for-all interventional twin graph}]\label{def:interventional_twin_network_alternative}
    For a DAG $\Gcal$ over $[D] \cup S$ and a collection of targets $\mathcal{I}=\{I^{(k)}\}_{k=0}^K$, the \textit{alternative one-for-all interventional twin graph} $\Gcal^\mathcal{I}$ is a DAG with vertices $X^* \cup S^* \cup \Ecal \cup X \cup \zeta$, where:\looseness=-1
    \begin{itemize}[noitemsep,topsep=0pt,left=10pt]
        \item $\zeta =\{\zeta_k\}_{k\in [K]}$ are intervention indicators: $\zeta = \mathbbm{1}_k$ denotes a sample from the $k$-th interventional distribution $p^{(k)}$. Specifically, $\zeta = \mathbbm{1}_0 = \mathbf{0}$ denotes for the pure observational samples from $p^{(0)}$.\looseness=-1
        \item $X = \{X_i\}_{i=1}^D$ are variables in the observed \textit{reality world}, pure observational or interventional;\looseness=-1
        \item $X^* = \{X_i^*\}_{i=1}^D$ and $S^* = \{S_i^*\}_{i=1}^T$ are variables in the unobserved \textit{counterfactual basal world}, representing the corresponding variable values and selection status \textit{before} the interventions;
        \item $\Ecal = \{\epsilon_i\}_{i=1}^D$ are exogenous noise variables capturing common external influences on $X$ and $X^*$.
    \end{itemize}
    $\Gcal^\mathcal{I}$ consists of the following four types of direct edges:
    \begin{itemize}[noitemsep,topsep=0pt,left=10pt]
        \item Direct causal effect edges in both worlds: $\{X_i \rightarrow X_j, X_i^* \rightarrow X_j^*\}_{i \rightarrow j \in \Gcal, \ i,j\in [D]}$;
        \item Selection edges in the counterfactual basal world: $\{X_i^* \rightarrow S_j^*\}_{i \rightarrow S_j \in \Gcal, \ i\in [D], \ j \in [T] }$;
        \item Exogenous influence edges: $\{\epsilon_i \rightarrow X_i, \epsilon_i \rightarrow X_i^*\}_{i\in [D]}$;
        \item Edges representing mechanism changes due to interventions: $\{\zeta_k \rightarrow X_i\}_{i\in I^{(k)}, k\in [K]}$.
    \end{itemize}
    
    Using structural equation model (SEM) notation, denote by $f_i^*$ the generating function that maps $(X^*_{\pa_\Gcal(i)}, \epsilon_i)$ to $X^*_i$, and $\{f_i^{(k)}\}_{k=0}^K$ the functions (if exists) that maps $(X_{\pa_\Gcal(i)}, \epsilon_i)$ to $X_i$ in corresponding pure observational or interventional distributions (where $\zeta=\mathbbm{1}_k$). We assume the counterfactual basal world operates the same as the pure observational world. For each intervention targeted at $I^{(k)}$, we also assume non-targeted variables' generating functions are invariant to this intervention:\looseness=-1
    
    \begin{equation}\label{eq:invariant_nontargeted_variables}
        \forall k \in \{0\} \cup [K], \quad \forall i \in [D] \backslash I^{(k)}, \quad f_i^{(k)} \text{ exists and } f_i^{(k)} \equiv f_i^*.
    \end{equation}

    The graphical structure $\Gcal^\Ical$ with the invariance constraint~\cref{eq:invariant_nontargeted_variables} completes our definition.
    
\end{definition}

An illustrative example of the alternative one-for-all interventional twin graph is shown in~\cref{fig:model_define_alternative}. Readers may compare it with the ones shown in~\cref{fig:model_define_GH_examples}.

\begin{figure}[t]
    \centering
    \includegraphics[width=0.8\linewidth]{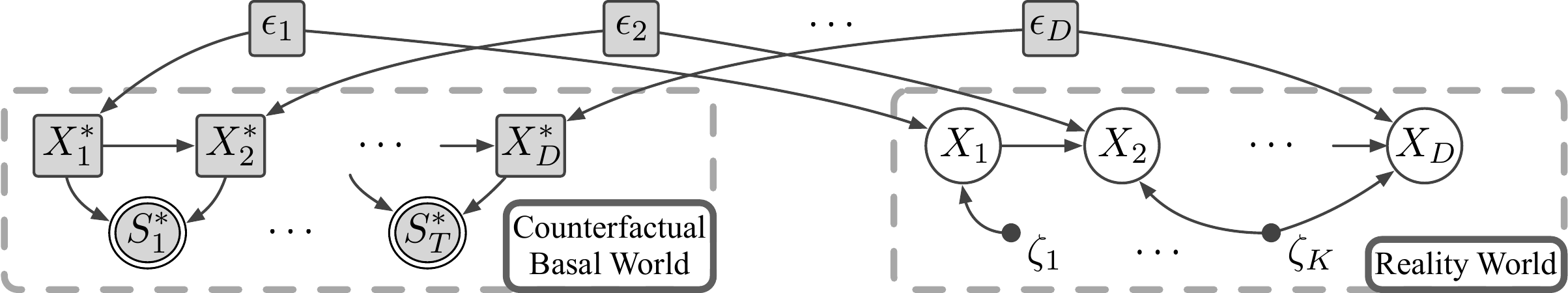}
    \caption{The \textit{alternative one-for-all interventional twin graph} (\cref{def:interventional_twin_network_alternative}). The white $X$ nodes and solid $\zeta$ nodes are observed, forming the reality world, where observations or interventions are conducted. The grey nodes are unobserved, of which squares are latent variables and double circles are selection variables. The unobserved $X^*$ and $S^*$ variables form the counterfactual basal world where interventions have not been applied. The two worlds are linked by $\Ecal$, latent exogenous noise variables that commonly influence $X$ and $X^*$.}
    \label{fig:model_define_alternative}
\end{figure}

\begin{figure}[t]
    \centering
    \includegraphics[width=0.86\linewidth]{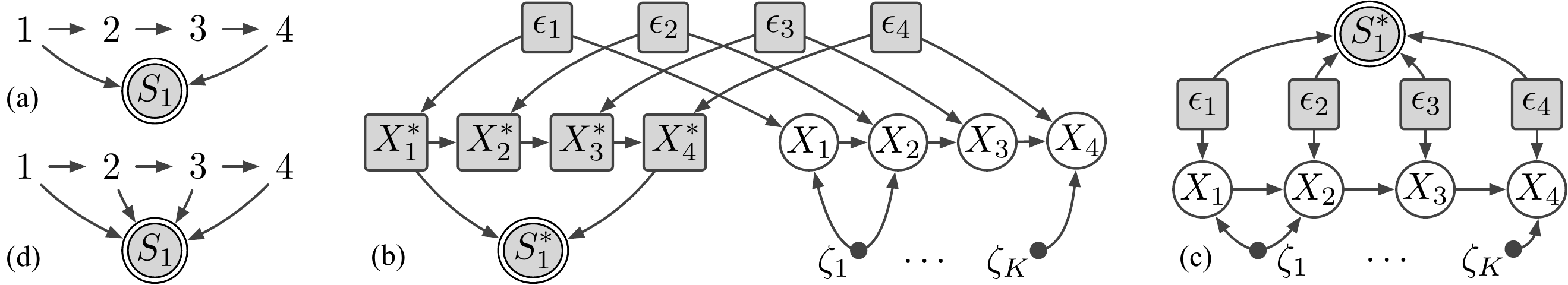}
    \caption{Examples showing how d-separations in one-for-all twin graph can lose information. (a) DAG $\Gcal$. (b) $\Gcal^\Ical$ for arbitrary $\Ical$. (c) $\overline{\Gcal^\Ical}$ constructed from (b) as in~\cref{lem:remove_nodes_equivalent_d_separations} where all d-separations among $X\cup \zeta$ remain the same. (d) Another DAG $\Hcal$ whose $\overline{\Hcal^\Ical}$ is also (c).}%
    \label{fig:four_variables_chain}
\end{figure}

As we have shown in~\cref{lem:even_more_dependencies_in_pk}, referring to the original DAG $\Gcal$ is not Markovian for interventional distributions. Then, does the one-for-all model $\Gcal^\Ical$ fully capture the interventional distributions? The answer is still no: $\Gcal^\Ical$ may be unfaithful, i.e., there are CIs not implied by d-separations in $\Gcal^\Ical$. A trivial example is that, one could construct the $\Gcal^\Ical$ for the pest control example where $\Gcal = 1 \rightarrow 2 \rightarrow 3 \rightarrow S_1$ and $\Ical = \{\varnothing,  \{1\} \}$. $\Gcal$ encodes a d-connection $X_1 \not\dsep X_3 | X_2, \zeta, S^*$, which is indeed the case for $X_1 \not \indep X_3 | X_2$ in $p^{(1)}$. However, if we let $\Ical = \{\varnothing,  \{3\} \}$, the d-connection $X_1 \not\dsep X_3 | X_2, \zeta, S^*$ still holds, while this time, in $p^{(1)}$ it is actually $X_1 \indep X_3 | X_2$. To explain this, one may notice that intervention on $X_3$ does not change the values of $X_2$ for individuals, and therefore in this case, when conditioning on $X_2$, the counterfactual value $X^*_2$ is automatically conditioned on, blocking the open path.\looseness=-1

We have shown a major issue of the one-for-all twin graph, or, the issue of splitting every vertices into two worlds: the d-separations over the graph do not fully capture the conditional independencies implied. Then, one may wonder, what is exactly the d-separations in this one-for-all twin graph? We show the following lemma to characterize these d-separations, and also to illustrate why the one-for-all twin graph misses key distributional information:

\begin{restatable}{lemma}{LEMREMOVENODESEQUIVALENTDSEPARATIONS}\label{lem:remove_nodes_equivalent_d_separations}
    For each one-for-all twin graph $\Gcal^\Ical$, construct another DAG $\overline{\Gcal^\Ical}$ by removing the $X^*$ nodes and associated edges, and adding edges $\{\epsilon_i \rightarrow S_j^*\}_{i \in \anc_\Gcal(S_j), \ i\in [D], \ j \in [T] }$, i.e., selection now applies ancestrally on exogenous noises. Then, $\Gcal^\Ical$ and $\overline{\Gcal^\Ical}$ entail the same set of d-separations over $X\cup \zeta | S^*$.
\end{restatable}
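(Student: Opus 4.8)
The plan is to recognize the passage from $\Gcal^\Ical$ to $\overline{\Gcal^\Ical}$ as (essentially) a \emph{latent projection} that marginalizes out the counterfactual vertices $X^*$, and then to invoke the standard fact that a latent projection preserves the separation relations among the retained vertices. Concretely, I would first note that in $\Gcal^\Ical$ the only query variables are $X\cup\zeta$ and the only conditioned variables are $S^*$, so $X^*$ (and the unqueried $\Ecal$) are merely unconditioned vertices that d-separation already ``marginalizes'' automatically. Hence it suffices to compute the latent projection $\Pi$ (an acyclic directed mixed graph) of $\Gcal^\Ical$ onto $V'\coloneqq X\cup\zeta\cup S^*\cup\Ecal$, and to compare $\Pi$ with $\overline{\Gcal^\Ical}$ on the queries $A\dsep B\mid C\cup S^*$ with $A,B,C\subseteq X\cup\zeta$.

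The second step is to read off the structure of $\Pi$. The key structural observation is that in $\Gcal^\Ical$ the block $X^*$ is ``sandwiched'': its only parents are the roots $\epsilon_i$ (via $\epsilon_i\to X_i^*$) and other $X^*$ vertices, and its only children are other $X^*$ vertices and the selection sinks $S^*$ (via $X_i^*\to S_j^*$); in particular no vertex of $X^*$ is adjacent to any reality vertex $X$, to $\zeta$, or is a parent of any $\epsilon$. Consequently every directed path through $X^*$ joining two retained vertices must start at some $\epsilon_i$ and end at some sink $S_j^*$, so the only directed edges created by the projection are $\epsilon_i\to S_j^*$, and exactly for $i\in\anc_\Gcal(S_j)$ (a directed path $\epsilon_i\to X_i^*\to\cdots\to S_j^*$ exists iff $i\in\anc_\Gcal(S_j)$). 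All edges of $\Gcal^\Ical$ not incident to $X^*$ (the reality edges, $\zeta\to X$, and $\epsilon_i\to X_i$) survive unchanged. Thus $\Pi$ agrees with $\overline{\Gcal^\Ical}$ on all directed edges; the only possible discrepancy is bidirected edges, which can only arise between two vertices both receiving arrowheads from $X^*$-confounding paths, i.e.\ only among the $S^*$ sinks: a bidirected edge $S_j^*\leftrightarrow S_{j'}^*$ appears precisely when $S_j$ and $S_{j'}$ share a common ancestor $c$ in $\Gcal$.

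The third and main step is to show these extra bidirected edges are inert for the queries at hand, so that $\Pi$ and $\overline{\Gcal^\Ical}$ induce the same separations over $X\cup\zeta\mid S^*$. Here the decisive point is that every vertex of $S^*$ is a sink all of whose incident edges are arrowheads into it, and $S^*$ is \emph{entirely} conditioned on; hence on any path an internal $S^*$-vertex is necessarily a conditioned collider, i.e.\ an always-open connector among its neighbors. The relevant neighbors of $S_j^*$ (in both graphs) are its parents $\{\epsilon_i : i\in\anc_\Gcal(S_j)\}$, which are unconditioned forks and hence also open. So the only effect of the whole $S^*$-layer is a connectivity relation among the $\epsilon$-vertices: $\epsilon_i$ and $\epsilon_{i'}$ are linked through the layer iff they lie in the same connected component of the bipartite incidence graph $\{(\epsilon_i,S_j^*): i\in\anc_\Gcal(S_j)\}$. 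A bidirected edge $S_j^*\leftrightarrow S_{j'}^*$ is added exactly when $S_j,S_{j'}$ have a common ancestor $c$, but then $\epsilon_c$ is already a shared parent of $S_j^*$ and $S_{j'}^*$, so the two already lie in the same component and the bidirected edge never changes the $\epsilon$-connectivity. Since the reality portions of $\Pi$ and $\overline{\Gcal^\Ical}$ are identical and their $S^*$-mediated $\epsilon$-connectivities coincide, every active path relative to $C\cup S^*$ in one graph has a counterpart in the other; combined with the latent-projection invariance from the first step, this yields that $\Gcal^\Ical$ and $\overline{\Gcal^\Ical}$ entail the same d-separations over $X\cup\zeta\mid S^*$.

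I expect the genuine obstacle to be exactly this last step: making rigorous that a bidirected edge between two always-conditioned sink vertices sharing a retained common parent is redundant for separation. The cleanest way to discharge it is the connected-component reformulation above (every $S^*$-vertex is a conditioned collider, so only reachability among the $\epsilon$'s through the $S^*$-layer is relevant), which reduces the comparison to the trivial fact that adding an edge between two vertices already in the same component changes no reachability. A self-contained alternative, avoiding any appeal to a latent-projection black box, is direct path surgery: replace each maximal counterfactual excursion $\epsilon_i\to X_i^*\to\cdots\to S_j^*\leftarrow\cdots\leftarrow X_{i'}^*\leftarrow\epsilon_{i'}$ of an active path in $\Gcal^\Ical$ by the edges $\epsilon_i\to S_j^*\leftarrow\epsilon_{i'}$ of $\overline{\Gcal^\Ical}$ (and conversely), checking that the open/blocked status of each collider and non-collider is preserved under the swap; this is more tedious but entirely elementary.
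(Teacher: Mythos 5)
The paper never actually proves \cref{lem:remove_nodes_equivalent_d_separations}: it is stated in \cref{app:why_not_split_every} and illustrated with \cref{fig:four_variables_chain}, but no proof appears anywhere, and it is not even among the results whose proofs \cref{app:proofs} declares omitted as immediate consequences of the MAG machinery. So there is no paper proof to compare against; what matters is whether your argument stands on its own, and it does. Your structural reading of $\Gcal^\Ical$ is accurate: by \cref{def:interventional_twin_network_alternative}, each $X_i^*$ has parents only in $\Ecal \cup X^*$ and children only in $X^* \cup S^*$, so in the latent projection onto $X \cup \zeta \cup \Ecal \cup S^*$ the only created directed edges are $\epsilon_i \to S_j^*$ for $i \in \anc_\Gcal(S_j)$ --- exactly the edges added in $\overline{\Gcal^\Ical}$ --- and the only created bidirected edges are $S_j^* \leftrightarrow S_{j'}^*$ between selection vertices sharing a common ancestor in $\Gcal$. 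The decisive step, showing these bidirected edges are inert for queries of the form $A \dsep B \mid C \cup S^*$ with $A,B,C \subseteq X \cup \zeta$, is also sound: every $S^*$ vertex carries only arrowheads into it and is always conditioned, so any internal $S^*$ vertex is a conditioned (hence open) collider, and the whole $\Ecal \cup S^*$ layer acts purely as a connectivity relation among the unconditioned root vertices $\epsilon_i$; since a common ancestor $c$ supplies $\epsilon_c \to S_j^*$ and $\epsilon_c \to S_{j'}^*$ in both graphs, adding $S_j^* \leftrightarrow S_{j'}^*$ never changes that connectivity. The one technicality you leave implicit is that rerouting a layer segment of an open path through $\epsilon_c$ may produce a walk rather than a path (the vertex $\epsilon_c$ could already occur elsewhere on the path); this is discharged either by the standard fact that an open walk between two vertices implies an open path, or directly by your connected-component formulation, so it is not a gap. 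Your route --- latent projection plus an inertness argument for the induced bidirected edges --- is in the same spirit as the marginalization arguments underlying the paper's MAG constructions elsewhere (\cref{def:mag_step_1_adjacencies,def:mag_step_2_orientations}), and it effectively supplies the proof the authors omitted.
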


Examples illustrating~\cref{lem:remove_nodes_equivalent_d_separations} are presented in~\cref{fig:four_variables_chain}. For each one-for-all graph $\Gcal^\Ical$, the counterfactual values $X^*$ can be further marginalized, allowing the construction of a new graph, $\overline{\Gcal^\Ical}$. This new graph excludes $X^*$ and includes edges from $\mathcal{E}$ ancestrally pointing to $S^*$. Importantly, the d-separation patterns remain unchanged. Subfigures (c) and (d) show this construction. Then, what is the consequence of this equivalence? Consider the following example:

Let DAGs $\Gcal$ and $\Hcal$ shown in (a) and (b). They share the same d-separations among $X\cup \zeta$ in their respective $\Gcal^\Ical$ and $\Hcal^\Ical$ for arbitrary $\Ical$, as both $\Gcal^\Ical$ and $\Hcal^\Ical$ are equivalent to the $\overline{\Gcal^\Ical}$ shown in (d). Then, does this imply that the $\Gcal$ and $\Hcal$ are indistinguishable under any $\Ical$? The answer is no. Actually, they can already be distinguished in $p^{(0)}$, where $1\dsep 3 | 2,4,S_1$ holds in $\Gcal$ but not in $\Hcal$.

From above, we have shown that the d-separations alone of the one-for-all interventional twin graph can fail to characterize the distributions. There are two specific losses: one is determinism, i.e., when some unaffected variable is conditioned on, its counterfactual basal variable will also be conditioned on; the other is the loss of sparsity of selection mechanisms, i.e., it falsely represents selection as being applied ancestrally on exogenous noise terms instead of on $X^*$ variables. While we can solve these issues by defining Markov properties in a technically heavier way, we choose to use interventional twin graphs as defined in~\cref{def:interventional_twin_network}, where the d-separations are exactly all the conditional independence implications.

\section{Related Work}\label{app:related_work}
In this section we give a more comprehensive review of literature.

\paragraph{When only pure observational data is available.} There are constraint-based causal discovery algorithms \citep{spirtes2000causation}, score-based algorithms \citep{chickering2002optimal}, and methods that utilize properties of functional forms in the underlying causal process \citep{shimizu2006linear,hoyer2008nonlinear,zhang2009identifiability}. The corresponding Markov equivalence characterization can be referred to \citep{verma1991equivalence,meek1995causal,andersson1997characterization,robins2000marginal,friedman2000using,brown2005comparison}.

\paragraph{Two kinds of interventions.} When experimental data is available, previous literature has considered two types of interventions to model how experimental data is generated: \emph{hard} (or \emph{perfect}) interventions and \emph{soft} (or \emph{imperfect}) interventions, also known as \emph{mechanism change}. Hard interventions destroy the dependence between targeted variables and their direct causes, either by \emph{deterministically} fixing the target variables to specific values, or by \emph{stochastically} setting them to values drawn from independent random variables \citep{pearl2009models,korb2004varieties}.
In contrast, soft interventions do not destroy the aforementioned dependence, and they modify the functional form that characterizes the causal generating mechanism of targeted variables \citep{tian2001causal,eberhardt2007interventions}.

\paragraph{The earliest attempts on interventional causal discovery.} The earliest Bayesian methods are introduced by~\citep{cooper1999causal, eaton2007exact}, compute the posterior distribution of DAGs using both observational and interventional data. These methods however did not address critical challenges like identifiability or equivalence class characterization. \citep{tian2001causal} is the first to consider identifiability and the Markov equivalence for interventional causal discovery. They consider the single-variable interventions with mechanisms change (soft interventions). A graphical criterion for two DAGs being indistinguishable is given, but no graphical representation for such equivalence class is characterized.

\paragraph{Treatments on hard interventions.} \citep{hauser2012characterization} first considers the characterization of MEC with hard stochastic, multiple-variable interventions. There graphical criterion is based on the mutilated DAGs as introduced in~\cref{sec:motivation}, and the graphical representation for equivalence class is given by $\Ical$-essential graphs. Their graph criterion actually is consistent with~\citep{tian2001causal}'s, though the latter focuses on single-variable interventions only. The provided algorithm GIES~\cite{hauser2015jointly} is basically utilizing the CI relations in each experimental setting and integrate results together. Under such paradigm, methods such as~\citep{tillman2011learning, claassen2010causal} are also developed. \citep{wang2017permutation} show the consistency issue of GIES when certain faithfulness assumptions are violated, and propose the new permutation-based algorithms~\citep{wang2017permutation}.

\paragraph{For soft interventions, exploiting invariance in mechanisms.} The basic idea of using the (in)variance of causal generating mechanisms and the asymmetries among them to identify causal relations can root back to~\citep{hoover1990logic} with its application in economics. The invariant causal inference framework~\citep{meinshausen2016methods,rothenhausler2015backshift,ghassami2017learning,peters2016causal} is developed, though they typically require parametric assumptions such as linear models and the problem is transformed to regression analysis. Such invariances is later seen as conditional independencies between an augmented exogenous domain index variable and the other variables, which further can be related to the d-separation conditions on graphs. The interventional causal discovery can then be unified with pure observational causal discovery, by viewing domain indices as causal variables. Such representation has been discussed in e.g.,~\citep{korb2004varieties,newey2003instrumental}. In the causal discovery field, it is proposed by~\citep{zhang2015discovery} and got formalized in~\citep{zhang2017causal,huang2020causal,mooij2020joint}, providing a unified way of seeing data from multiple domains with mechanism change. The graphical criterion for Markov equivalence under soft interventions is given in~\citep{yang2018characterizing}. As with the earlier consistency between~\citep{hauser2012characterization} and~\citep{tian2001causal}, it is shown that as long as the pure observational data is available, the equivalence condition for hard interventions and soft interventions are the same. The issue of unknown intervention targets, also known as ``fat hand'' issue, is also directly solvable from the augmented graph~\citep{squires2020permutation,jaber2020causal}.

\paragraph{When latent variables are involved.} In the pure observational data and nonparametric causal discovery setting, the frameworks of MAG and FCI have been well established\citep{richardson2002ancestral,zhang2008completeness}. For interventional causal discovery, various methods have been proposed to address latent variables~\citep{hyttinen2013discovering,triantafillou2015constraint,kocaoglu2017cost,eaton2007exact,magliacane2016ancestral}. They are either lying under the umbrellas of FCI and the augmented DAG frameworks, or using parametric assumptions.

\paragraph{Another parallel line of study: active experimental design.} Active experimental design is a closely related area of research, but with a distinct focus. In the interventional causal discovery setting, the interventional data are passively observed. In active experimental design however, we have control over experiments. The aim is then to select specific targets for intervention in a sequence of steps to efficiently uncover the final DAG. It can be roughly drawn into two lines. The first line is graph based methods, such as~\citep{he2008active,eberhardt2008almost,eberhardt2005number,eberhardt2010combining,hyttinen2013experiment,shanmugam2015learning,kocaoglu2017cost,ghassami2018budgeted}, which characterize the equivalence at each step, and considers counting the DAGs in each equivalence class so as to find the next step interventions that can possibly maximally reduce the DAG search space. The second line is Bayesian based methods, such as~\citep{tong2001active,murphy2001active,agrawal2019abcd,sussex2021near,tigas2022interventions,zhang2023active}, which treat the problem as an optimization problem.

\paragraph{Modelling the interaction between reality and counterfactual world.} At first glance, our model seems similar to the \textit{twin network} defined by~\citep{balke1994probabilistic}, as both link reality and counterfactual worlds through exogenous noise. This is where our name `twin' is echoing. However, key differences exist. Twin networks, or single world intervention graphs (SWIGs~\citep{richardson2013single}), are used as diagrams to guide counterfactual queries when the structure is known, while we discover structure from data. Their CI queries are in forms of $X_A^*; X_B|X_C$, so as to find CIs to calculate counterfactual quantities from the reality quantities, while we check the (in)variances of $p(X_A|X_C)$. Also, they only model hard interventions while we handle soft ones. The most relevant model we find is from~\citep{ribot2024causal} with a different focus on imputation. They assume linearity and have non-fixed $S^*$ while we explore Markov properties nonparametrically.\looseness=-1

\paragraph{Causal discovery in the presence of selection bias.} For causal discovery from selection biased data, one of the earliest approaches is the Fast Causal Inference (FCI) algorithm~\citep{spirtes1995causal}, which leverages conditional independence relations to infer causal structures while accounting for both latent confounders and selection bias. Following this foundational work, most subsequent methods have also relied on conditional independence constraints to recover causal relationships~\citep{hernan2004structural, tillman2011learning, evans2015recovering, versteeg2022local, zheng2024detecting}. Several parametric approaches have been developed for bivariate causal orientation~\citep{zhang2016identifiability, kaltenpoth2023identifying}. Other research efforts with selection bias have focused on causal inference~\citep{bareinboim2014recovering, correa2019identification}. Structure learning and inference results can follow different methodological directions depending on the problem setting. A particularly relevant area is the study of data missingness, which shares similarities with selection bias. For instance, in cases involving \textit{self-masking} missingness, the true data distribution and model parameters may be unidentifiable, making causal inference infeasible~\citep{mohan2013graphical, mohan2018handling}. However, the causal structure may still be recoverable~\citep{dai2024gene}.

\section{Supplementary Experimental Details and Results}\label{app:experiments}
We use the implementation of IGSP, UT-IGSP, and JCI-GSP from the \texttt{causaldag} package~\citep{squires2018causaldag}, and the implementation of CD-NOD from the \texttt{causal-learn} package~\citep{zheng2023causal}. We use the Fisher Z test to examine conditional relations. The significance level is set to $0.05$ for $5$ and $10$ variables, and $0.01$ for $15$ and $20$ variables.\\

In what follows, we demonstrate the resulting graphs on real datasets. In these graphs, the red square nodes and their outgoing edges represent the intervention targets across different interventions. Among observed variables, $X_i \rightarrow X_j$ indicates a causal edge from $X_i$ to $X_j$ and $X_j$ is not ancestrally selected; $X_i\dashdash X_j$ indicates that both $X_i$ and $X_j$ are ancestrally selected; $X_i\circcirc X_j$ indicates that each endpoint may vary in the equivalence class. For simplicity in showing the DAG and the estimated intervention targets at the same time, we put multiple intervention indicators, observed variables, and the selection variables, if any, all into one graph. This is merely for presentation ease; according to~\cref{example:clinical_X1_X2_independent,lem:remove_nodes_equivalent_d_separations}, such graphs do not characterize the CI relations in data. The edge interpretation applies to all figures below.\\

\begin{figure}[!h]
    \centering
    \includegraphics[width=1\linewidth]{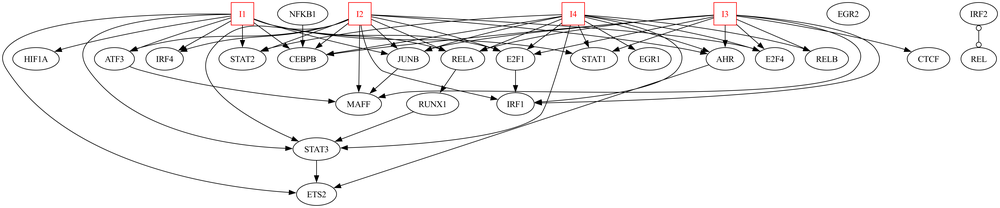}
    \caption{Causal structure estimated by our CDIS method on a single-cell perturbation data, i.e., sciPlex2 \citep{peidli2024scperturb}. We discovered some validated regulatory relationships like \textit{RELA} $\to$ \textit{RUNX1} and \textit{JUNB} $\to$ \textit{MAFF}, since RELA has been implicated as a \textit{RUNX3} transcription regulator \citep{zhou2015epstein} and \textit{Maf} family was upregulated by \textit{JunB} \citep{koizumi2018junb}, despite the extensive co-regulation between these two genes \citep{kataoka1994maf}. Besides, the link between \textit{ETS2} or \textit{RUNX1} and \textit{STAT3} may encounter the risk of spurious correlation induced by selection (i.e., conditioning on a particular cell line). It is supported by previous studies since it is reported Runx1 and Stat3 synergistically driving stem cell development in epithelial tissues trough Runx1/Stat3 signalling network \citep{scheitz2012defining, sarper2018runx1}, while TF Ets2 together with p-STAT3 activation induce cathepsins K and B expression in human rheumatoid arthritis synovial fibroblasts (RASFs) \citep{singh2021ets}.}
    \label{fig:gene}
\end{figure}

\begin{figure}[!h]
    \centering
    \includegraphics[width=0.7\linewidth]{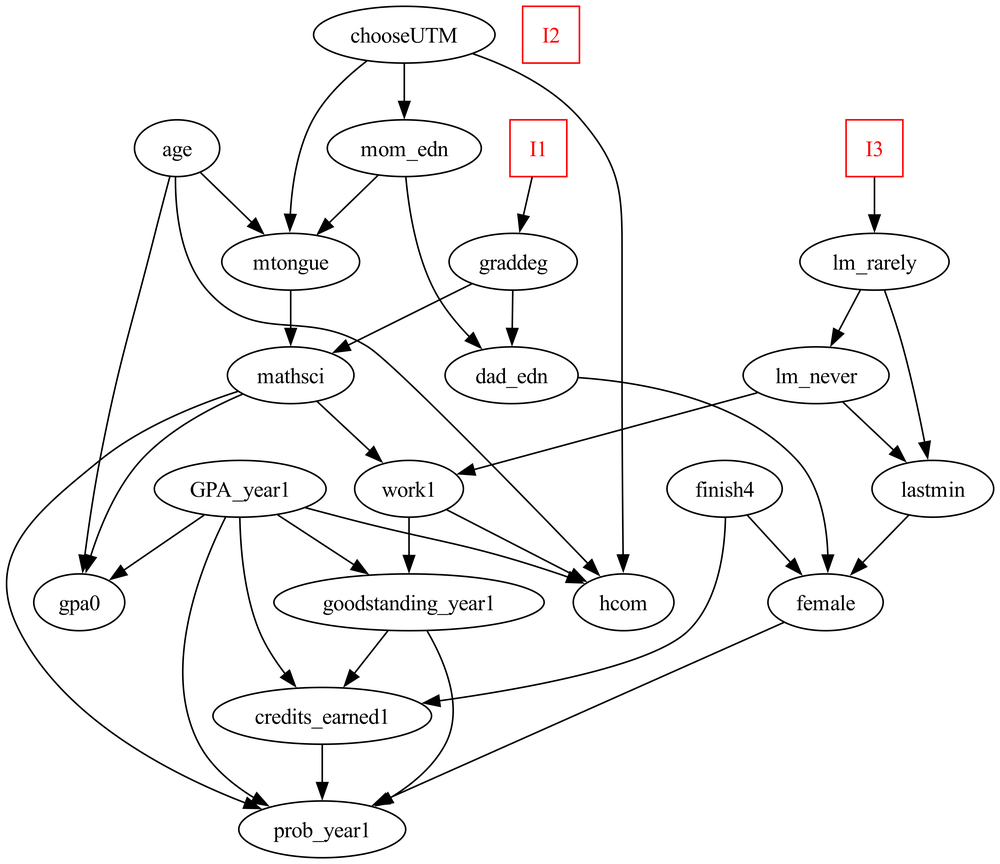}
    \caption{Causal structure estimated by our CDIS method on an educational dataset over all students. The interventional indicators I1 represents SSP, I2 represents SFP, and I3 represents SFSP.} %
    \label{fig:education}
\end{figure}

\begin{figure*}[!h]
\centering
\setcounter{subfigure}{0}
\subfloat[Results on male students.]{
    \includegraphics[width=0.9\textwidth]{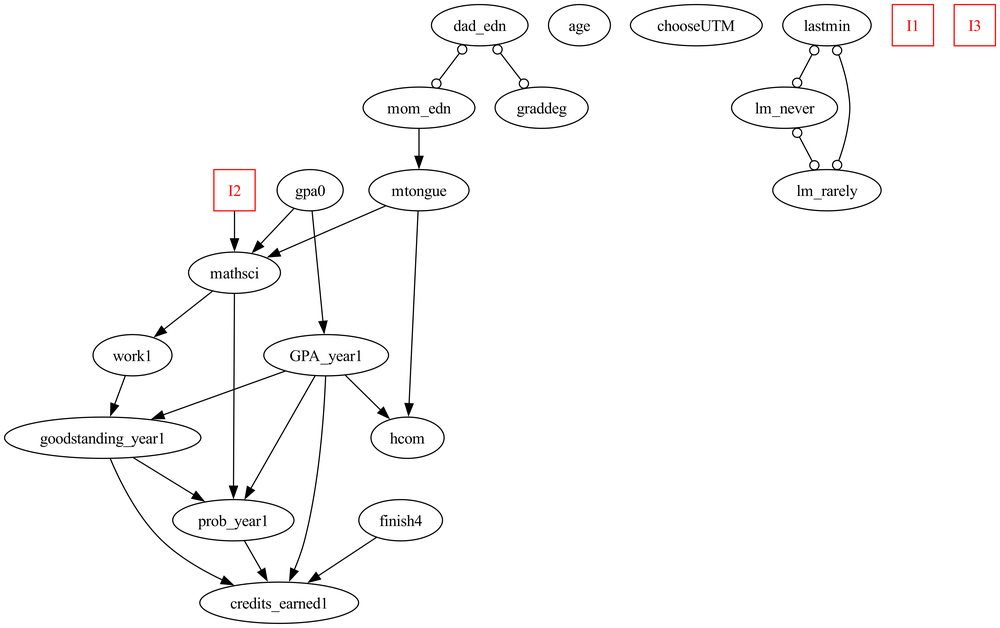}
    \label{fig:education_male}
}\\\vspace{0.5em}
\subfloat[Results on female students.]{
    \includegraphics[width=0.9\textwidth]{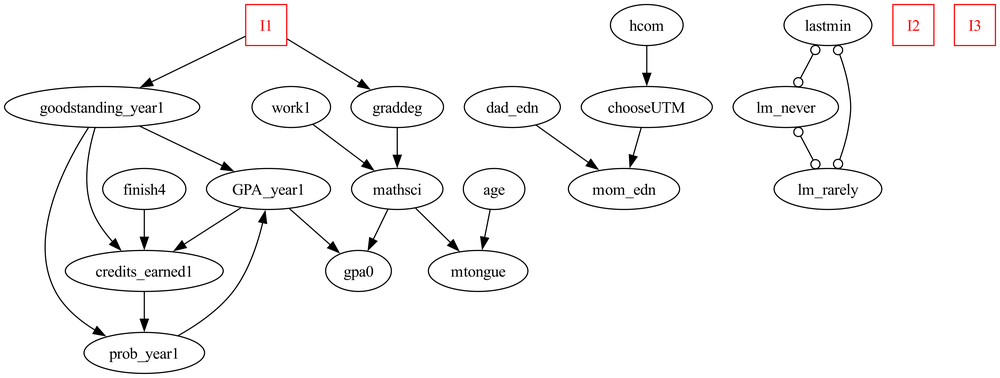}
    \label{fig:education_female}
}

\caption{Causal structure estimated by our CDIS method on an educational dataset conducted with subgroup analysis stratified by gender.} %
\label{fig:education_genders}
\end{figure*}

\begin{table}[!hbpt]
    \centering
    \resizebox{\linewidth}{!}{
    \begin{tabular}{lllll}
\cline{1-3}
Category                              & Variable           & Meaning                                                                     &  &  \\ \cline{1-3}
\multirow{5}{*}{Main outcome}         & GPA\_year1         & 1st year GPA                                                                &  &  \\
                                      & goodstandi$\sim$1  & Good standing in year 1                                                     &  &  \\
                                      & prob\_year1        & On probation in year 1                                                      &  &  \\
                                      & credits\_ea$\sim$1 & Credits earned in year 1                                                    &  &  \\
                                      & mathsci            & Number of math and science credits   attempted                              &  &  \\ \cline{1-3}
\multirow{4}{*}{Personal backgrounds} & female             & Sex (Female dummy)                                                          &  &  \\
                                      & age                & Age                                                                         &  &  \\
                                      & english            & Whether Mother tongue is English                                            &  &  \\
                                      & gpa0               & High school GPA                                                             &  &  \\ \cline{1-3}
\multirow{10}{*}{Other covariates}    & hcom               & Whether Lives at Home                                                       &  &  \\
                                      & chooseUTM          & Whether At first choice school                                              &  &  \\
                                      & work1              & Whether Plans to work while in school                                       &  &  \\
                                      & dad\_edn           & Father education                                                            &  &  \\
                                      & mom\_edn           & Mother education                                                            &  &  \\
                                      & lm\_rarely         & Whether Rarely puts off studying for   tests                                &  &  \\
                                      & lm\_never          & Whether Never puts off studying for tests                                   &  &  \\
                                      & lastmin            & how often do you leave studying until the   last minute for tests and exams &  &  \\
                                      & graddeg            & Whether Wants more than a BA                                                &  &  \\
                                      & finish4            & Whether Intends to finish in 4 years                                        &  &  \\ \cline{1-3}
\end{tabular}
}
    \caption{Description of variables in the educational dataset}
    \label{tab:edu_var}
\end{table}

\begin{figure*}[!h]
\centering
\includegraphics[width=0.9\textwidth]{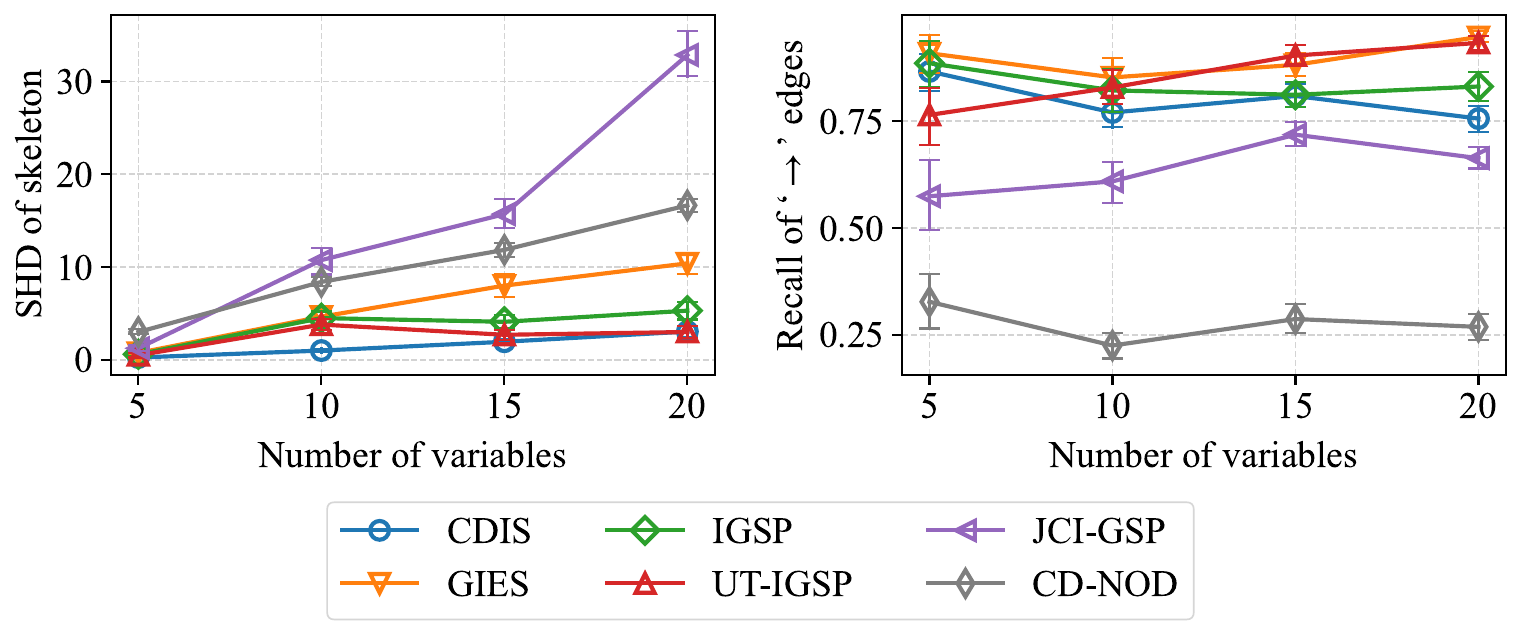}
\caption{Empirical results across different numbers of variables. The two figures show the SHD of the PAG skeletons, and the recall of the `$\rightarrow$' edges in PAGs. The error bars illustrate the standard errors from $20$ random simulations.}
\label{fig:synthetic_data_result_appendix}
\end{figure*}

\end{document}